\documentclass[lettersize,journal]{IEEEtran}
\usepackage{amsmath,amssymb,amsthm}
\usepackage{algorithmic}
\usepackage{algorithm}
\usepackage{array}
\usepackage{textcomp}
\usepackage{stfloats}
\usepackage{url}
\usepackage{verbatim}
\usepackage{graphicx}
\usepackage[numbers]{natbib}
\usepackage{multirow}
\usepackage{multicol}
\newtheorem{property}{Property}
\newtheorem{theorem}{Theorem}
\usepackage{enumitem}
\usepackage{mathtools}

\usepackage[utf8]{inputenc} 
\usepackage[T1]{fontenc}    
\usepackage{hyperref}       
\usepackage{url}            
\usepackage{booktabs}       
\usepackage{amsfonts}       
\usepackage{nicefrac}       
\usepackage{microtype}      
\usepackage{xcolor}         

\usepackage{amssymb}

\newtheorem{lemma}{Lemma}
\newtheorem{proposition}{Proposition}
\newtheorem{definition}{Definition}
\newtheorem{remark}{Remark}

\usepackage{tabularx}
\usepackage{booktabs}

\usepackage[caption=false,font=footnotesize]{subfig}

\definecolor{PSAW}{RGB}{138,122,197} 

\definecolor{ETF}{RGB}{143,74,109}

\newcommand{\bx}{\mathbf{x}}
\newcommand{\by}{\mathbf{y}}
\newcommand{\bq}{\mathbf{q}}
\newcommand{\bk}{\mathbf{k}}
\newcommand{\bv}{\mathbf{v}}
\newcommand{\bW}{\mathbf{W}}
\newcommand{\bK}{\mathbf{K}}
\newcommand{\bV}{\mathbf{V}}
\newcommand{\bX}{\mathbf{X}}
\newcommand{\bY}{\mathbf{Y}}

\begin{document}

\title{Near-Oracle KV Selection via Pre-hoc Sparsity for Long-Context Inference} 

\author{Yifei Gao$\dagger$, Lei Wang$\dagger$,~\IEEEmembership{Senior Member,~IEEE,} Rong-Cheng Tu$\dagger$, Qixin Zhang, \\ Jun Cheng$^*$,~\IEEEmembership{Senior Member,~IEEE,} and Dacheng Tao,~\IEEEmembership{Fellow,~IEEE}
\thanks{This work was supported in part by the Guangdong Major Project of Basic and Applied Basic Research (2023B0303000016), Guangdong S\&T Program (No. 2024B0101050002), Guangdong-Hong Kong-Macao Joint Laboratory of Human-Machine Intelligence-Synergy Systems (2019B121205007), CAS Key Technology Talent Program, Colleges and Universities Key Laboratory of Intelligent Integrated Automation (No.201502), CAS Key Laboratory of Human-Machine Intelligence-Synergy Systems, Shenzhen Institutes of Advanced Technology, Chinese Academy of Sciences (2014DP173025). }

\thanks{Manuscript received July 1, 2025; revised XX, 2025. (Corresponding author: jun.cheng@siat.ac.cn)}
\thanks{Yifei Gao is with University College London, London, UK. (E-mail: ucabaoj@ucl.ac.uk).}
\thanks{Lei Wang and Jun Cheng are with the Shenzhen Institutes of Advanced Technology, Chinese Academy of Sciences (CAS), Shenzhen 518055, China. They are also with the Chinese University of Hong Kong.  (e-mail: lei.wang1@siat.ac.cn, jun.cheng@siat.ac.cn).}
\thanks{Rong-Cheng Tu, Qixin Zhang and Dacheng Tao are with the College of Computing \& Data Science, Nanyang Technological University, Singapore. (E-mail: rongcheng.tu@ntu.edu.sg, qixin.zhang@ntu.edu.sg, dacheng.tao@ntu.edu.sg)}
}

\markboth{IEEE Transactions on Pattern Analysis and Machine Intelligence,~Vol.~X, No.~X, August~2025}%
{Shell \MakeLowercase{\textit{et al.}}: A Sample Article Using IEEEtran.cls for IEEE Journals}


\maketitle

\begin{abstract}


A core bottleneck in large language model (LLM) inference is the cost of attending over the ever-growing key–value (KV) cache. Although near-oracle top-$k$ KV selection can preserve the quality of dense attention while sharply reducing computation and bandwidth, existing sparse methods generally rely on posterior heuristics—selectors conditioned on observed attention or proxy scores. Such conditioning introduces posterior bias: it tends to distort the true token importance and miss salient tokens, thereby impairing long-range reasoning.
To tackle this problem, we propose Pre-hoc Sparsity (PrHS), which selects KV entries before attention scoring and provides explicit accuracy control. Let the attention mass of discarded entries be $\delta$ (the dropped mass). Through a marginal-to-mutual-information analysis, we derive an upper bound on the mutual-information loss that depends only on $\delta$. This relation both explains the failure modes of posterior heuristics and enables verifiable guarantees by controlling $\delta$ in advance. Within PrHS, we instantiate three orthogonal pre-hoc selectors along the axes of time, depth, and layer: (i) temporal query sharing, justified by a Lipschitz bound on attention-centroid drift so that a modest dilation of the previously shared set covers the next step; (ii) recency-window retention, leveraging distance-decay priors to guarantee retained attention mass; and (iii) cross-layer KV sharing, exploiting high inter-layer similarity to eliminate redundant retrieval and computation. Extensive experiments on LLaMA and Mistral families validate the superiority of PrHS.  On LLaMA and Mistral families across GSM8K and CoQA, the approach reduces retrieval overhead by ${>}90\%$, achieving $3\times$ higher retrieval sparsity than the state-of-the-art HShare at matched or better accuracy. It also incurs ${<}1\%$ average degradation on LongBench, lowers attention FLOPs by ${\sim}15\%$ versus prior sparse baselines, and yields a $9.9\times$ speedup in attention-operator latency and $2.8\times$ higher throughput on NVIDIA A100-80GB GPUs than the original dense baseline.
\end{abstract}



\vspace{-0.05in}
\section{Introduction}
\label{sec:intro}
\vspace{-0.05in}

\IEEEPARstart{L}{arge} Language Models (LLMs)~\cite{hurst2024gpt,liu2024deepseek,TPAMI2025LLM_Memristor} empower advanced applications such as multi-turn dialogue~\cite{yi2402survey,teng2024fine,ma2024multi}, long-form document summarization~\cite{zhang2023enhancing,liu2023learning}, and multimodal tasks~\cite{tu2023unsupervised,liu2023visual,lin2023video,tu2025global,TPAMI2025LLM_MultiModal,tu2025automatic} involving text and visual understanding \cite{TPAMI2025LLM_VQA,TPAMI2025LLM_Creativity,tu2025multimodal,TPAMI2025LLM_Disease,lan2025survey,tu2025mllm}. 
However, in autoregressive decoding, the per-layer key–value (KV) cache grows linearly with the generated length $L$, requiring each new query to access an increasingly large set of past KVs.
With $H$ heads and per-head dimension $d$, the per-query attention cost (score computation and value aggregation) scales as $\mathcal{O}(HLd)$~\cite{vaswani2017attention}. As $L$ increases, runtime becomes dominated by attention computation rather than model forward passes; 
for example, at a 40k-token context, LLaMA-3.1-8B~\cite{touvron2023llama} incurs attention compute on the order of tens of TFLOPs per decoding step.

KV compression alleviates these costs by exploiting attention sparsity~\cite{wang2020linformer,sun2025vorta,zhang2023h2o,xiao2023streamingllm}: most KV entries receive negligible attention, so retaining a small, critical subset can approximate full attention while reducing memory traffic and latency. With a per-head budget $k\!\ll\!L$, the top-$k$ oracle retains the $k$ highest-score entries per query, minimizing approximation error and achieving the best accuracy. While this reduces value aggregation from $\mathcal{O}(HLd)$ to $\mathcal{O}(Hkd)$, identifying the top-$k$ still requires computing all scores at $\mathcal{O}(HLd)$ plus a retrieval overhead. Thus, end-to-end cost remains dominated by full scoring, making the oracle impractical for deployment.

To simplify full scoring, existing approaches approximate the top-$k$ oracle by summarizing observed (posterior) attention statistics into priors that constrain future retrieval, yielding posterior-conditioned policies. For example, \emph{Token Dropping Oracles} (TDOs) apply static or dynamic eviction policies~\cite{zhang2023h2o,xiao2023streamingllm,he2024zipcache,wan2024d2o} to keep only entries deemed critical at the current step, restricting the candidate set and reducing per-step scoring to $\mathcal{O}(Hkd)$. \emph{Query-Aware Approximations} (QAAs)~\cite{tang2406quest,sun2024shadowkv,wu2024tokenselect,liu2024hashevict} construct low-dimensional, hand-designed surrogate features to approximate attention scores and use them to guide retrieval, incurring $\mathcal{O}(HLd')$ with $d'\!\ll\!d$.

Despite the efficiency gains, these posterior-driven simplifications incur systematic bias that degrade generative performance.
Specifically, in TDO, token salience evolves over time. Thus, selectors that
condition on observed attention along the trajectory face
non-stationary evidence and systematically skew future decisions
\cite{wan2024d2o,sun2024shadowkv}. In addition, QAAs that
replace full logits with heuristic proxies induce a score-level bias.
We group these archetypes under \textbf{Post-hoc Sparsity (PoHS)}—
\emph{posterior-conditioned} rules that bias token-importance estimates and thereby impair long-range reasoning. Empirically, PoHS yields substantially larger perturbations at both the attention and the output (latent) levels
than the top-$k$ oracle (Fig.~\ref{fig:teaser_attn_diff}, \ref{fig:teaser_latent_diff}; lower is better).

\begin{figure*}[!t]
  \centering
  \captionsetup[subfloat]{font=scriptsize}
  \subfloat[Introduced attention disturbances (absolute difference) after filtering pre-softmax attention logits $\!<\!0$.]{
  \footnotesize
  \label{fig:teaser_attn_diff}
    \includegraphics[width=0.305\textwidth]{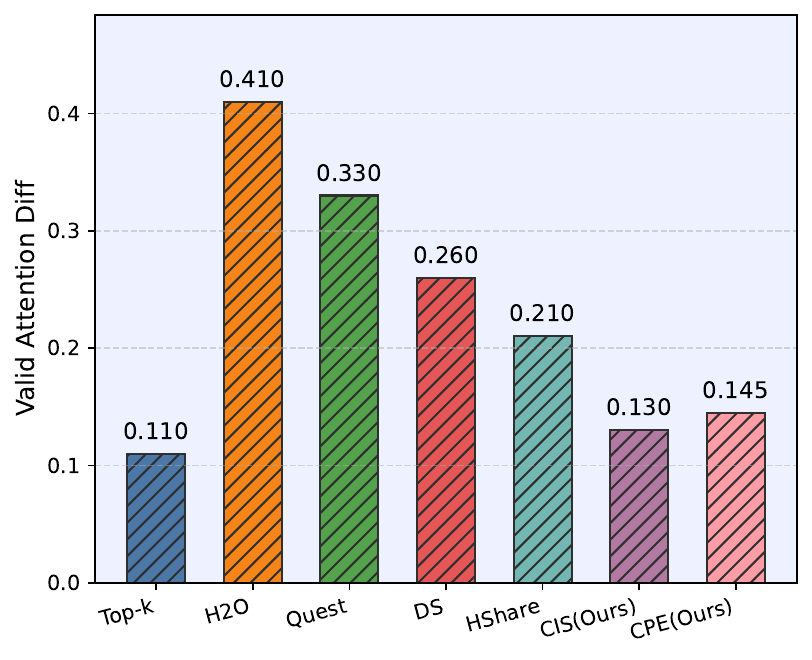}
  }\hfill
  \subfloat[Introduced absolute differences between original attention outputs.]{
  \footnotesize
  \label{fig:teaser_latent_diff}
    \includegraphics[width=0.305\textwidth]{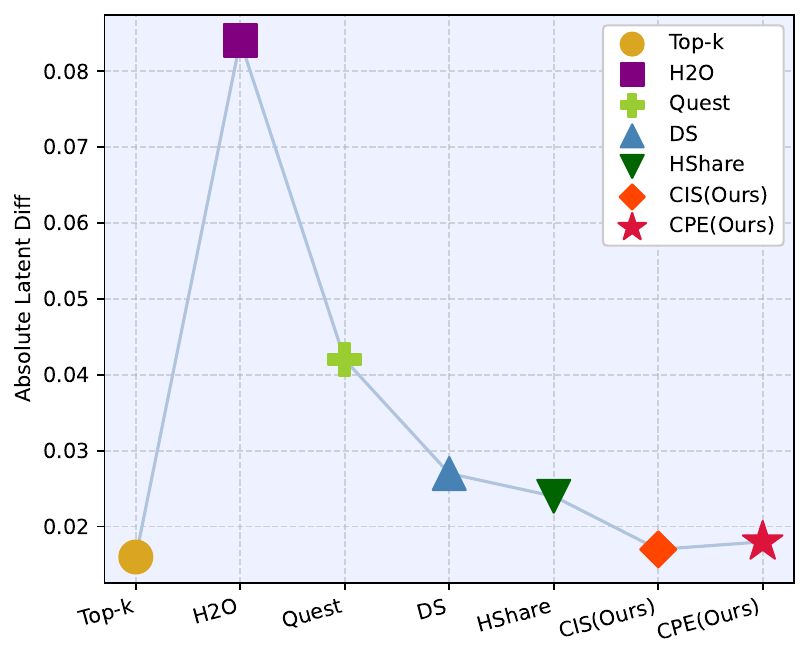}
  }\hfill
  \subfloat[Overall comparisons on accuracy-consumption trade-offs among state-of-the-art baselines.]{
  \footnotesize
  \label{fig:teaser_overall_comp}
    \includegraphics[width=0.305\textwidth]
    {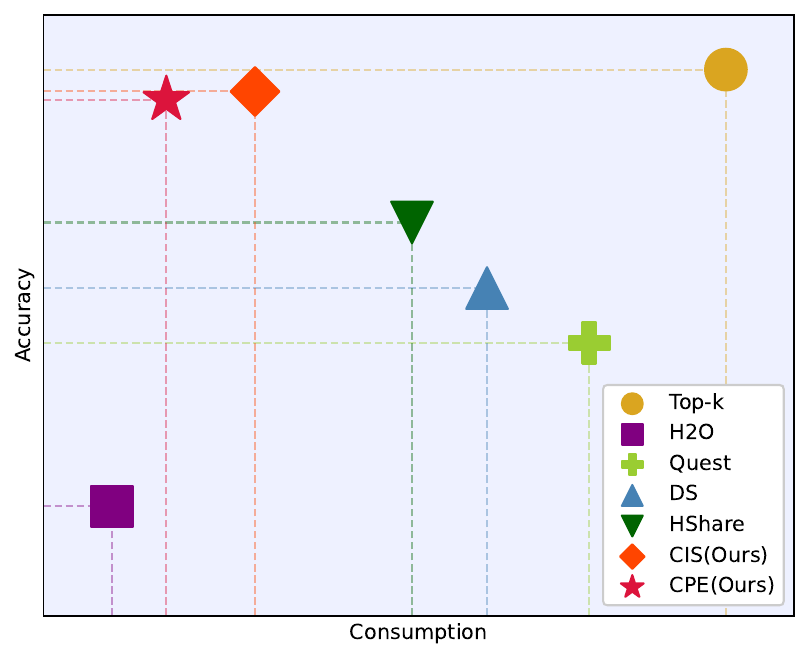}
  }
  \vspace{-0.05in}
  \caption{\textbf{Overall analysis of performance and efficiency}. Across induced attention-approximation error and accuracy-efficiency trade-offs, our method consistently surpasses prior SOTA approaches and closely tracks the top-$k$ oracle for optimal KV compression accuracy.}
  \label{fig:teaser}
  \vspace{-0.1in}
\end{figure*}

Therefore, in this paper, we analyze selection through a \emph{marginal-to–mutual-information} lens and devise a new framework, \textbf{Pre-hoc Sparsity} (PrHS).
Let $\delta$ denote the dropped attention mass (mass of discarded KV entries),
and let $\Delta H$ quantify the induced perturbation of the attention distribution.
We prove that the selection-induced \emph{mutual-information loss} admits an
upper bound that depends only on $\delta$ (Eq.~\eqref{eq:prhs_bound}).
This provides an a-priori error certificate: by constraining $\delta$
before scoring, PrHS instantiates selectors that avoid posterior bias and delivers explicit selection–error control by enforcing conditions that tighten the bound. Consequently, PrHS markedly reduces attention- and output-level perturbations and approaches the oracle frontier in the accuracy–efficiency trade-off (Fig.~\ref{fig:teaser_overall_comp}).
We further quantify deviation from the oracle by $\beta_{\mathrm{th}}$, the
marginal-entropy gap between PrHS and top-$k$; as $\beta_{\mathrm{th}}\!\to\!0$,
PrHS converges to the oracle (Eq.~\ref{eq:limit_to_oracle}; proof in
Sec.~\ref{sec:app_mib_tsa}).

We instantiate three PrHS techniques, Clustered Index Sharing (CIS), Progressive Sliding Attention Window (PSAW), and Early Token Freezing (ETF), each translating a distinct sparsity prior into a PrHS selector. These selectors are then parameterized to tighten the MI upper bound on marginal-entropy error (Eq.~\eqref{eq:prhs_bound}). CIS explores temporal-locality~\cite{wuhshare}: consecutive queries are highly similar. It shares the retrieved set from the top-$k$ oracle across temporally adjacent queries whose embeddings exceed a cosine-similarity threshold, and dilates it by adding neighbors around highest-score indices. PSAW leverages the recency prior~\cite{xiao2023streamingllm,zhang2023h2o}—attention mass concentrates on a local window—and applies a depth-adaptive sliding window that shrinks monotonically with depth, incrementally pruning the candidate KV entries. ETF exploits cross-layer redundancy~\cite{liu2405minicache}: adjacent layers exhibit high KV similarity. ETF freezes stable early tokens to reduce updates. The three selectors act along different axes—time (CIS), depth (PSAW), and layer (ETF)—and compose naturally: CIS seeds the candidate pool with a small budget, then PSAW and ETF intersect their selections with the CIS seed using larger budgets to further prune. Finally, we implement custom CUDA kernels that parallelize index manipulation and sparse-attention computation to minimize runtime overhead. We refer to the combined system as \textbf{CPE}. 



Collectively, CPE delivers competitive reductions in memory footprint and end-to-end latency with substantially improved relative to state-of-the-art (SOTA) baselines (Fig.~\ref{fig:teaser_overall_comp}). Empirically, on LLaMA~\cite{touvron2023llama} and Mistral model families~\cite{jiang2023mistral7b} across GSM8K and CoQA, CIS matches the accuracy of recent SOTA method HShare~\cite{wuhshare} while using only \emph{one-third} of its retrieval complexity. Overall, CPE maintains comparable model quality even with reducing $> \!90\%$ KV-retrieval overhead, while lowering $\sim \! 15\%$ attention FLOPs than prior sparse baselines. On NVIDIA A800-80GB GPUs, CPE attains a $\mathbf{9.9\!\times}$ speedup in attention-operator latency and a $\mathbf{2.8\!\times}$ throughput improvement over original models. Ablation studies in Sec.~\ref{sec:hyper-tune} further confirm that each component contributes reliably. Our contributions are summarized as follows:
\begin{enumerate}[label=\arabic*)]
    \item Rigorous analyses of the mutual information bound from the marginal entropy in KV compression and introduction of Pre-hoc Sparsity towards the accuracy of top-$k$ oracle.
    \item Introduction of CIS, which sustains stable performance at high sharing ratios, together with PSAW and ETF for lossless sparsification of attention computation, all implemented with customized CUDA kernels for parallelization.
    \item Extensive experiments and ablations demonstrating consistent improvements in accuracy, efficiency, and throughput over state-of-the-art baselines.
\end{enumerate}

\section{Problem Formulation}

This section introduces several variables and notation used throughout our analysis. For convenience, Table~\ref{tab:notation} summarizes the key symbols and their meanings.

\subsection{LLM Inference}
\label{sec:inference_background}

To demonstrate the inference bottlenecks that KV compression aims to address, we present the inference process, consisting of two stages—prefill and decode—using the standard transformer architecture with multi-head attention~\cite{vaswani2017attention}.

\paragraph{Multi-head Attention} Let the input token embeddings for head $h$ be $\bx^h=[\bx_1^h;\dots;\bx_L^h]\in\mathbb{R}^{L\times d}$. With $H$ heads and per-head dimension $d$, for each token $i$ and head $h\in{1,\dots,H}$,
\begin{equation}
\label{eq:qkv-proj}
    \bq_i^h \!=\! \bx_{i} \bW_Q^h, \, \mathbf{k}_i^h \!=\! \bx_{i} \bW_K^h, \, \bv_i^h \!=\! \bx_{i} \bW_V^h, \,
\end{equation}
where $\bq_i^{h}, \bk_i^h, \bv_i^h \in \mathbb{R}^{d}$ and $\bK^h=[(\bk_1^h)^\top;\dots;(\bk_L^h)^\top], \bV^h=[(\bv_1^h)^\top;\dots;(\bv_L^h)^\top]\in\mathbb{R}^{L\times d}$ to form the KV cache. Attention weights for token $t$ in head $h$ over indices $i\in{1,\dots,t}$ and the attention output for step $t$ are
\begin{equation}
\label{eq:attn_eq}
    A_i^h(\bq_t^h) \!=\! \frac{\exp\! \big((\bq_t^{h})^\top \bk_i^h/\sqrt d\big)}{\sum_{j=1}^{t}\exp\!\big((\bq_t^{h})^\top \bk_j^h/\sqrt d\big)}, \; 
    \mathbf{y}_t^h \!=\! \sum_{i=1}^{t} A_i^h(\bq_t^h)\,\bv_i^h .
\end{equation}
Since all heads operate the same, we omit the head index for brevity in the following context, unless explicitly mentioned.

\paragraph{Prefill} The prompt is processed once to construct and cache $(\bK,\bV)$, which is reused during subsequent decoding~\cite{ott2019fairseq}.

\paragraph{Decode} Only the new token is processed while reusing the cached $\bK$ and $\bV$ from previous steps for each head.

\paragraph{Bottlenecks} From Eq.~\eqref{eq:attn_eq}, the per-step attention FLOPs (score computation and value aggregation) scale as $\mathcal{O}(Htd)$, and the KV cache also scales as $\mathcal{O}(Htd)$. This linear growth in $t$ constitutes the primary compute and memory bottleneck for long-context inference.

\begin{table}[t]
\caption{Summary of key variables and notation.}
\label{tab:notation}
\centering
\renewcommand{\arraystretch}{1.15}
\begin{tabularx}{\columnwidth}{lX|lX}
\toprule
$\bx$ & Input token embedding 
& $\bq,\,\bk,\,\bv$ & Query, key, value \\
$L$ & Input length
& $d$ & Per-head dimension \\
$H$ & Number of heads
& $A(\cdot)$ & Attention weight \\
$\bK, \bV$ & KV cache matrices
& $t$ & Step count\\
$\mathcal S$ & Selector returning a subset of indices
& $S_t$ & Selected KV index set at step $t$ \\
$N$ & KV budget (sparsity: $N/t$)
& $\tau(\cdot)$ & Retained attention mass \\
$\delta(\cdot)$ & Dropped attention mass
& $I_{\mathrm{full}}, I_{\mathcal S}$ & MI under full attention vs.\ TSA \\
$g(\cdot)$ & MI-loss bound function
& $\delta^*(\cdot)$ & Oracle (top-$k$) dropped mass \\
\bottomrule
\end{tabularx}
\end{table}

\subsection{Token-sparse Attention and Key-value Sharing}
\label{sec:prob_def}

KV compression mitigates the aforementioned bottlenecks by exploiting attention sparsity—a small subset of critical KV entries is selected to approximate full attention. This improves efficiency at the cost of approximation error. To evaluate accuracy–efficiency trade-offs across KV compression methods, and following prior work~\cite{xiao2024duoattention,wu2024tokenselect,wuhshare,liu2025freekv}, we formalize decoding with a limited KV set under a unified \emph{token-sparse attention} (TSA) concept, and then define \emph{KV sharing} as a special case that streamlines TSA selection.

\textbf{Definition 3.1} (Token-sparse attention, informal).
\textit{At decoding step $t$, with KV budget $N < t$, a selector $\mathcal{S}$ outputs an index set $S_t \subseteq [t] \coloneqq \{1,\dots,t\}$ with $|S_t| = N$. The corresponding key/value matrices are restricted to these indices, $\bK_{S_t}$ and $\bV_{S_t}$. Attention is computed only between $\bq_t$ and $\bK_{S_t}$ (cf. Eq.~\eqref{eq:attn_eq}), and the output uses $\bV_{S_t}$. The sparsity ratio at step $t$ is $N/t$.}

\textbf{Definition 3.2} (Key-value sharing). \textit{Given indices $S_i$ retrieved by $\mathcal S$ at an anchor step $i$ in TSA, a later token at step $j$ may directly reuse them, denoted $S_j \leftarrow S_i$. In this case, $S_j$ is utilized to construct $\bK_{S_j}$ and $\bV_{S_j}$.}

\textbf{Discussion}. Different TSA instantiations correspond to different selectors $\mathcal{S}$, and exhibit distinct accuracy–efficiency trade-offs under a fixed budget. For example, imprecise selection that admits many low-contribution pairs can crowd out salient entries, increasing approximation error and degrading model performance. This failure mode is analogous to erroneous KV sharing across steps whose truly critical sets have little overlap. Moreover, selectors rely on different criteria~\cite{liu2024hashevict,zhang2023h2o,sun2024shadowkv} (e.g., similarity heuristics, learned scores, recency rules), and the mechanism used to enforce these criteria determines selection complexity and entry-retrieval efficiency. Consequently, an effective TSA selector must balance accuracy (retained mass) and retrieval overhead (selection complexity).

\vspace{-0.05in}
\subsection{Information Loss Quantification}
\label{sec:info_bound}

To evaluate the accuracy of TSA selectors, we quantify the information loss—information gap between full attention and TSA—via a mutual-information (MI) bound, and pair it with retrieval-efficiency analysis to guide algorithm design. We first derive the MI bound as a function of dropped attention mass, which implies that the top-$k$ oracle attains the best accuracy under a fixed budget. However, it incurs prohibitive selection cost due to explicit attention evaluation. Practical selectors approximate this oracle for efficiency and thus incur additional MI loss. We then formalize selector design as an optimization problem towards minimizing the MI-loss gap relative to the top-$k$ oracle subject to retrieval constraints. Finally, we show that PrHS-based selectors remove posterior bias and, with design-time error control, nearly preserve top-$k$ accuracy without additional retrieval cost, thereby outperforming PoHS selectors.

\paragraph{Mutual-information Bound} In TSA, a selector $\mathcal S$ keeps $N$ KV entries and drops the rest. Our core result (Sec.~\ref{sec:app_uib}) is that the MI loss induced by TSA depends only on the \emph{dropped mass}, independent of selector instantiations. Intuitively, preserving (nearly) the same total attention mass as full attention preserves (nearly) the same information. Hence, MI loss admits a unified characterization in terms of dropped mass. For a query $\bq$, define the retained and dropped attention mass
\begin{equation}
    \tau_{\mathcal S}(\bq)=\sum_{i\in\mathcal S}A_i(\bq), \quad \delta_{\mathcal S}(q)=1-\tau_{\mathcal S}(\bq).
\end{equation}
Let $I$ denote the MI loss measure. For any $\bq$, the MI loss between full attention $I_{\mathrm{full}}$ and selector $\mathcal{S}$ satisfies
\begin{equation}
\label{eq:mi_bound}
\begin{aligned}
    0 \, \le \, & I_{\mathrm{full}}(\bq)-I_{\mathcal S}(\bq) \, \le \, g\big(\delta_{\mathcal S}(\bq)\big), \\
    \text{where} &\quad g(\delta) \coloneqq 2\left[h_{\mathrm b}(\delta)+\delta\log L\right].
\end{aligned}
\end{equation}
Here $h_{\mathrm b}$ is the binary entropy function and $L$ the number of eligible positions (proved in Sec.~\ref{proof:MI_bound}). 

\paragraph{Top-k Oracle and Optimization Goal} The top-$k$ oracle $\mathcal S^{*}(q)\!=\!\operatorname*{Top}_N \!\big(A(\bq)\big)$ minimizes the bound by maximizing retained mass under a fixed budget $N$:
\begin{equation}
\begin{aligned}
\mathcal S^{*}(\bq)
\!\in\! \operatorname*{arg min}_{S\subseteq [t],\, |S|=N}
g\!\left(1\!-\!\!\sum_{i\in S} A_i(\bq)\right)
\!=\!
\operatorname*{arg max}_{S\subseteq [t],\, |S|=N}
\sum_{i\in S} A_i(\bq).
\end{aligned}
\end{equation}
However, evaluating $A(\bq)$ exactly incurs prohibitive selection cost. Practical selectors (PoHS or PrHS) approximate $A(\bq)$ for efficiency and thus incur additional MI loss. We therefore pose the goal of selector optimization as
\begin{equation}
\label{eq:selector_opt}
\min_{}\; 
\mathbb{E}\!\left[\, I_{\mathcal S^{*}}(\bq)\;-\;I_{\mathcal S}(\bq) \,\right]
\;\; \text{s.t.}\;\; |S^*(\bq)|=|S(\bq)|=N,
\end{equation}
i.e., minimize the MI-loss gap relative to the top-$k$ oracle under constrained budget, where the expectation is over queries $\bq$.

\paragraph{PoHS Selector $\mathcal S_D$}
$\mathcal S_D(\bq)\!=\!\operatorname*{Top}_n(\widehat A_D(\bq))$ replaces $A(\bq)$ with its surrogate $\widehat A_D(\bq)$ derived from posterior side information $D$ (e.g., token ages, sketches), and suffers \emph{posterior bias} measured by the total-variation distance
\begin{equation}
\label{eq:pohs_bias}
    \varepsilon_{D}(\bq)\coloneqq \tfrac12\|A(\bq)-\widehat A_D(\bq)\|_1 .
\end{equation}
The extra dropped-mass gap satisfies $\delta_{\mathcal S_D}(\bq) \le \delta^{*}(\bq)+2\varepsilon_{D}(\bq)$, which, by the monotonicity\footnote{Without loss of generality, in this paper, we restrict the domain of $g$ to the interval (0, L/(1+L)] to ensure its monotonicity, where L denotes the current text length. Note that as $L$ grows large,  $L/(1+L)$ will approach $1$, which renders the previous restriction reasonable.} of $g$ in Eq.~\eqref{eq:mi_bound}, lifts the MI loss bound to
\begin{equation}
\label{eq:posh_bound}
    I_{\mathrm{full}}(\bq)-I_{\text{post}}(\bq)\ \le\ g\big(\delta^{*}(\bq)+2\varepsilon_{D}(\bq)\big).
\end{equation}
Thus, when the surrogate under- or over-scores tokens (large $\varepsilon_D$), the selected set retains less probability mass than the oracle and loses more information. We formalize both TDO and QAA from Sec.~\ref{sec:intro} as PoHS instances. In general, TDO induces larger $\varepsilon_D$ than QAA (worse accuracy) but avoids explicit evaluation of $A(\bq)$ (lower retrieval cost). Proofs are provided in Sec.~\ref{sec:app_tech_lemmas}.

\paragraph{PrHS Selector $\mathcal S_{pre}$} $\mathcal S_{pre}$ selects critical set \emph{without explicitly computing} $A(\bq)$ and controls a \emph{theoretical error} $\beta_{\mathrm{th}}(q)\!\ge\!0$ that satisfy
\begin{equation}
\label{eq:prhs_bound}
\begin{aligned}
    \tau_{\mathrm{pre}}(\bq)\! \ge\! \tau^{*}(\bq)-&\beta_{\mathrm{th}}(\bq) \Longleftrightarrow \delta_{\mathrm{pre}}(\bq)\! \le\! \delta^{*}(\bq)\!+\!\beta_{\mathrm{th}}(\bq), \\
    \text{with} \quad I_{\mathrm{full}}(\bq)-&I_{\mathrm{pre}}(\bq)\ \le\ g\big(\delta^{*}(\bq)+\beta_{\mathrm{th}}(\bq)\big).
\end{aligned}
\end{equation}
As $\beta_{\mathrm{th}}(q)\!\to\!0$, the bound \emph{converges to the top-$k$ oracle}
\begin{equation}
\label{eq:limit_to_oracle}
I_{\mathrm{full}}(\bq)\!-\!I_{\mathrm{pre}}(\bq)
\!\le\!
g\big(\delta^{*}(\bq)\big)\!\le\!g\big(\delta^{*}(\bq)+2\varepsilon_{D}(\bq)\big),
\end{equation}
explicitly improving on PoHS by eliminating posterior bias. Detailed formulations are in Sec.~\ref{sec:app_mib_tsa}. Moreover, PrHS selectors can partially (CIS) or fully (PSAW, ETF) avoid computing $A(\bq)$ and the associated retrieval, improving accuracy without compromising efficiency.



\newcommand{\bs}{\mathbf{s}}
\begin{figure*}[!t]
\includegraphics[width=1\textwidth]{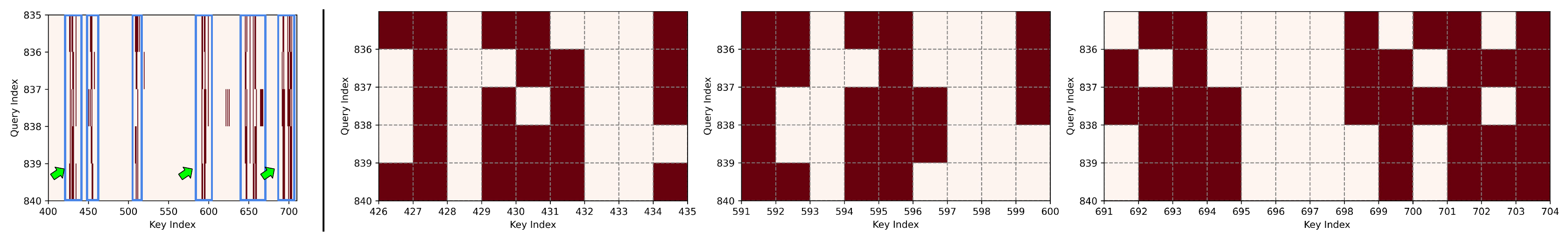}
\centering
    \caption{\textbf{Distribution of critical indices in LLaMA2-7B-Chat (Layer 10, Head 2).} For each query, we retrieve 64 critical indices using the top-$k$ oracle on WikiText2. Columns show five temporally adjacent queries with cosine similarity $>\!0.8$. \emph{Left}: overall distribution across keys 400–710; clusters are outlined in \textcolor{blue}{blue boxes}. \emph{Right}: zoom-ins of three clusters (annotated by \textcolor{green}{green arrows} in the left) at keys 426–436, 591–700, and 691–704. Critical indices are in \textcolor{red}{red}.}
    \label{fig:attn-cluster}
    \vspace{-0.5em}
\end{figure*}


\section{Observations and Properties}
\label{sec:observations}

While empirical priors (temporal locality, recency, cross-layer redundancy) are suggestive, PrHS requires \emph{query-independent} design rules that can be applied pre-hoc without full attention. Accordingly, from the transformer’s mechanics we derive two properties: (i) clustered locality and among-query continuity of critical indices, and (ii) progressive cross-layer propagation of information. These properties motivate our selector parameterization and, in Sec~\ref{sec:method}, yield a retained-mass guarantee $\tau_{\mathrm{pre}}\!\ge\!\tau^*-\beta_{\mathrm{th}}$, which translates to MI bounds via Eq.~\eqref{eq:mi_bound}.

\begin{figure}[!t]
\includegraphics[width=0.45\textwidth]{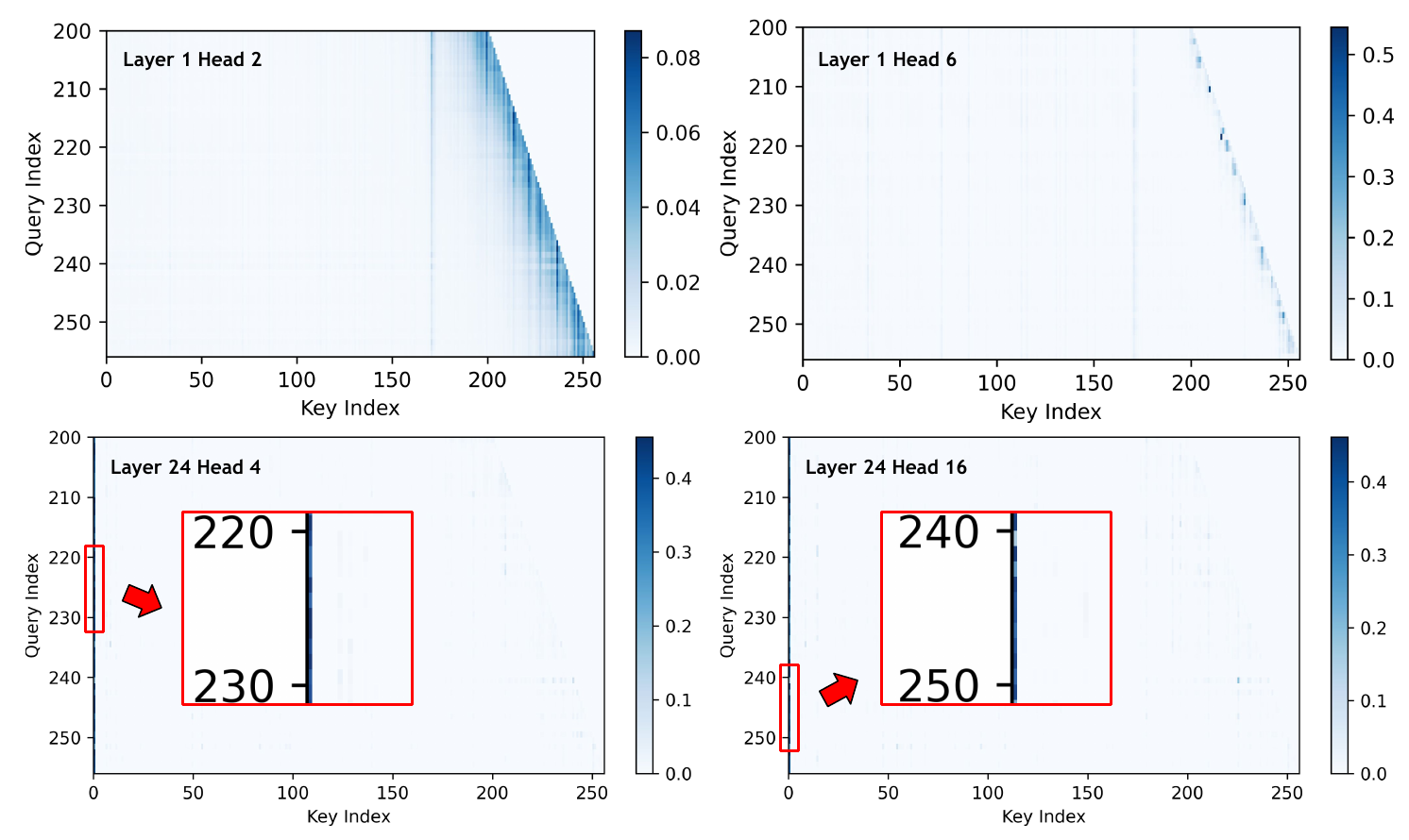}
\centering
\vspace{-0.5em}
    \caption{\textbf{Attention distribution on LLaMA2-7B-Chat.} Heatmaps across representative layer–head pairs, with darker color indicating stronger attention.}
    \label{fig:attn_dis}
    \vspace{-1em}
\end{figure}

\subsection{Observation 1: Critical Tokens Exist in Clustered Manners}

Prior work~\cite{sun2024shadowkv,li2024snapkv,wuhshare} shows that consecutive decoding queries are close in embedding space and often share critical indices, which motivates KV sharing strategies. However, \emph{per-head} selections can still vary markedly, and the overlap between selected indices remains small even for high cosine similarity between queries~\cite{wu2024tokenselect}, leading to
suboptimal sharing. For example, HShare~\cite{wuhshare} shares indices directly, missing many truly critical indices at high sharing ratios.
To bridge this gap, we observe that critical indices organize into a small number of clusters that persist across adjacent queries, providing a more stable sharing
unit than individual indices. 

\begin{property}[Among-query Property]
\label{prop:among-query}
For two semantically similar consecutive queries, the centroids of their critical-token clusters undergo a smooth mean shift along the sequence axis.
\end{property}

As shown in Fig.~\ref{fig:attn-cluster}, when the cosine similarity between $q_t$ and $q_{t+1}$ exceeds $0.8$, critical indices concentrate into a few clusters; although cluster centers and spreads shift slightly, the cluster-level overlap remains large. Our theoretical analysis further shows that these clusters satisfy a mean-shift (bounded drift) property, formalized in \textbf{Theorem}~\ref{the:mean-shift}.

\begin{theorem}[Centroid Drift]
\label{the:mean-shift}
Consider the scaled dot-product self-attention (cf. Eq.~\eqref{eq:attn_eq}) with fixed keys $\{\bk_j\}_{j=1}^t$, attention weights $A_j(\bq_t)$ for key $\bk_j$, and let $\{p_j\}_{j=1}^t$ be scalar token positions with
$\mathrm{diam}\,\mathcal{P}:=\max_j p_j-\min_j p_j$ for $\mathcal{P}=\{p_j\}$. Define the sequence-axis centroid $c(\bq_t)=\sum_{j} A_j(\bq_t)\,p_j$. For $\bq' = \bq_t + \Delta$ with $\|\Delta\|\ll 1$,
\begin{equation}
\label{eq:centroid_shift}
    c(\bq') = c(\bq_t) + O(\|\Delta\|),
\end{equation}
\textit{i.e., the attention centroid moves in a Lipschitz-continuous fashion with respect to the query. In particular, $|c(\bq')-c(\bq_t)| \le (\mathrm{diam}\,\mathcal{P})\,\frac{\max_j \|\bk_j\|}{\sqrt d}\,\|\Delta\|$.}
\end{theorem}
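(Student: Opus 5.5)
The plan is to treat $c$ as a smooth function of the query, compute its gradient explicitly, bound that gradient uniformly in $\bq$, and then integrate along the segment from $\bq_t$ to $\bq'$. Write $s_j(\bq)=\bq^\top\bk_j/\sqrt d$, so that $A_j(\bq)=e^{s_j}/\sum_l e^{s_l}$. The softmax Jacobian identity $\partial A_j/\partial s_l = A_j(\mathbb{1}[j=l]-A_l)$ and $\nabla_{\bq}s_l=\bk_l/\sqrt d$ give
\begin{equation*}
\nabla_{\bq} c(\bq)=\frac{1}{\sqrt d}\sum_j A_j(\bq)\,\bigl(p_j-c(\bq)\bigr)\,\bk_j
=\frac{1}{\sqrt d}\,\mathrm{Cov}_{A(\bq)}\bigl(p,\bk\bigr).
\end{equation*}
This is the covariance, under the distribution $A(\bq)$ over indices $j$, between the position $p_j$ and the key $\bk_j$. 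Since $c$ is a composition of smooth maps, it is $C^\infty$ in $\bq$.

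Next I bound the directional derivative uniformly. Fix a unit vector $u$ and set $X_j=u^\top\bk_j$, so that $|X_j|\le K:=\max_j\|\bk_j\|$. The covariance is unchanged if either factor is shifted by a constant. I therefore center $p$ at the midpoint $m=(\max_j p_j+\min_j p_j)/2$ and center $X$ at its mean $\bar X=\sum_j A_jX_j$:
\begin{equation*}
\bigl|u^\top\nabla c\bigr|=\frac{1}{\sqrt d}\Bigl|\sum_j A_j(p_j-m)(X_j-\bar X)\Bigr|
\le\frac{1}{\sqrt d}\cdot\frac{\mathrm{diam}\,\mathcal P}{2}\sum_j A_j|X_j-\bar X|.
\end{equation*}
Because $|X_j-\bar X|\le 2K$, this yields $\|\nabla c(\bq)\|\le(\mathrm{diam}\,\mathcal P)\,K/\sqrt d$ for every $\bq$. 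The bound is uniform in $\bq$ because it does not depend on the attention weights at all.

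Finally, I apply the mean value theorem to $\phi(s)=c(\bq_t+s\Delta)$ on $[0,1]$. This gives $|c(\bq')-c(\bq_t)|=|\phi'(\xi)|=|\Delta^\top\nabla c(\bq_t+\xi\Delta)|\le(\mathrm{diam}\,\mathcal P)\frac{K}{\sqrt d}\|\Delta\|$. That is the stated Lipschitz bound, and it implies $c(\bq')=c(\bq_t)+O(\|\Delta\|)$. The argument actually holds for all $\Delta$, so the assumption $\|\Delta\|\ll1$ is needed only for the informal $O(\cdot)$ reading.

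The main obstacle is obtaining exactly the constant $\mathrm{diam}\,\mathcal P\cdot K/\sqrt d$ rather than a cruder one. The naive bound $|p_j-c|\,\|\bk_j\|$ already gives this constant, since $|p_j-c|\le\mathrm{diam}\,\mathcal P$. The midpoint-plus-mean centering above is a safeguard that would even permit the sharper factor $\tfrac12$ if one bounded $|X_j-\bar X|\le 2K$ more carefully.
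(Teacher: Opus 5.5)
Your proof is correct, and it takes a different route from the paper's. The paper never differentiates $c$. It uses $\sum_i\bigl(A_i(\bq')-A_i(\bq)\bigr)=0$ to write $|c(\bq')-c(\bq)|\le\|A(\bq')-A(\bq)\|_1\max_i|p_i-\bar p|$. It then chains a softmax $\ell_\infty\to\ell_1$ Lipschitz lemma (constant $2$) with the Cauchy--Schwarz logit bound $\|a(\bq')-a(\bq)\|_\infty\le K_{\max}\|\Delta\|/\sqrt d$.

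Because it only uses $\max_i|p_i-\bar p|\le\mathrm{diam}\,\mathcal P$, the paper's appendix proof actually ends with $2(\mathrm{diam}\,\mathcal P)K_{\max}\|\Delta\|/\sqrt d$. That is twice the constant in the statement. Taking $\bar p$ to be the midpoint, as you do, would remove that factor.

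Your covariance identity $\nabla c=\mathrm{Cov}_{A(\bq)}(p,\bk)/\sqrt d$, combined with the scalar mean value theorem on $s\mapsto c(\bq_t+s\Delta)$, reaches the stated constant directly. It is also rigorous as written, whereas the paper's ``for some $\tilde a$'' mean-value step for the vector-valued softmax needs the integral form to be valid. As you observe, it even improves to a factor $\tfrac12$, since the mean absolute deviation of a variable in $[-K,K]$ is at most $K$.

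What the paper's factorization buys is the intermediate quantity $\|A(\bq')-A(\bq)\|_1$. It reuses this as $\Delta_{\mathrm{att}}$ in the CIS retained-mass lemmas. Your argument goes straight to the centroid and does not produce that distribution-level bound, although the same softmax-Jacobian calculation would.
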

The drift is $\mathcal{O}(\|\Delta\|)$ with Lipschitz modulus
$L \le (\mathrm{diam}\,\mathcal{P})\,\max_j \|\mathbf{k}_j\|/\sqrt{d}$.
This smooth drift is consistent with empirical attention patterns and with
StreamingLLM’s “keep a few initial tokens” observation~\cite{xiao2023streamingllm}.

In summary, \textbf{across heads and layers, the critical tokens of successive similar queries form dense, localized clusters}. We call this the \textbf{sink effect}: \textit{tokens near a high-attention location inherit elevated attention due to semantic proximity}. Exploring the effect enables explicit control under PrHS, our CIS operationalizes this by dilating shared sets to cover the local cluster and compensate for centroid drift, avoiding full per-head re-retrieval. Detailed proofs are in \textbf{Appendix A}.

\subsection{Observation 2: Information Propagates Progressively Among Layers}

Locality-bias analyses~\cite{zhang2023h2o,ge2023fastgen} show that attention is
sharply concentrated locally within a narrow window around the current query
and the sink token, across diverse prompts (Fig.~\ref{fig:attn_dis}).
This suggests that the current token draws most information from recent tokens,
whereas the sink token acts as an attractor rather than conveying semantic content.
Viewed in isolation, such constrained receptive fields would only support short-term memory for LLMs, seemingly at odds with their strong long-context performance.
Moreover, recent studies~\cite{cai2024pyramidkv,wan2024d2o,gao2023empower} show that:
across layers, intermediate tokens incrementally absorb the semantics of their relevant predecessors; by higher layers, a mid-sequence token encodes both its own content and a compressed synopsis of earlier, distant tokens.
It means that stacking transformer layers realizes long-range dependency paths through chains of neighboring tokens even when no individual layer attends broadly.
Consequently, salient long-distance information is distilled into nearby carrier
tokens, reducing the marginal importance of many earlier ones; hence,
later layers need not re-scan the full prefix.
Building on this insight, PSAW and ETF prune KV entries whose instantaneous (per-layer) and accumulated (cross-layer) attention mass fall below preset thresholds, ensuring that the total dropped mass remains within $\beta_{\mathrm{th}}$; by Eq.~\eqref{eq:mi_bound}, this certifies a bounded MI loss.
We formalize these guarantees in \textbf{Appendix C-D}.

\begin{figure}[!t]
\includegraphics[width=0.48\textwidth]{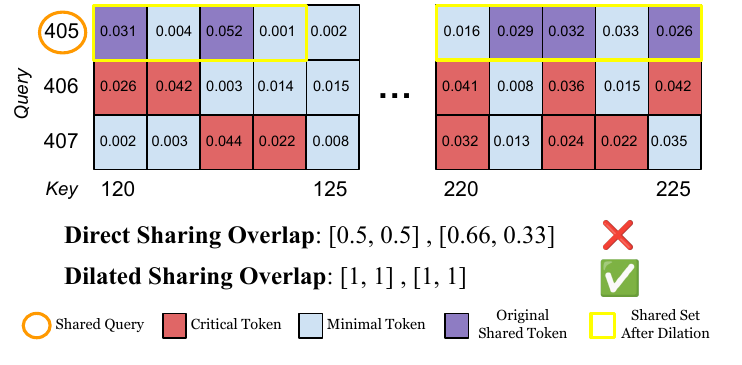}
\centering
\vspace{-0.5em}
    \caption{\textbf{Critical-index set (CIS) dilation.} For three adjacent queries (405–407), we examine two key-index clusters $[120,125]$ and $[220,225]$. Critical tokens from query 405 are shared to queries 406 and 407. Overlap is measured as $\frac{\text{Number of True Positive}}{\text{Number of Critical Tokens}}$. Tokens in \textcolor[HTML]{A64DFF}{violet} are also critical for query 405.}
    \label{fig:index-frag}
    \vspace{-1em}
\end{figure}

\vspace{-0.05in}
\section{Proposed Method and Implementation}
\vspace{-0.05in}
\label{sec:method}

In this section we present CPE. First, \textbf{CIS} (from Observation~1) enables efficient head-level KV sharing across adjacent queries. Next, \textbf{PSAW} and \textbf{ETF} (from Observation~2) prune redundant attention by shrinking visible attention window and freezing early token updates, respectively, with theoretical error guarantees. We also describe CUDA kernels for parallel index manipulation and sparse attention.

\subsection{Clustered Indices Sharing (CIS)}
\label{sec:cis}

CIS performs head-level KV sharing with dilated shared set among semantically similar, and temporally adjacent queries. By \textbf{Observation}~1 and \textbf{Theorem}~\ref{the:mean-shift}, the critical clusters for such queries undergo only small mean shifts relative to a reference query. Therefore, exhaustive per-query head-level retrieval is unnecessary. We first partition the sequence into fixed-length blocks and restrict sharing to within a block to enforce temporal adjacency, where semantically similar but non-adjacent (cross-block) queries are not shared.

\paragraph{Selection Criteria}
Within each block, the first query serves as an initial reference and computes its critical indices across layers and heads. Following~\cite{zhang2023h2o,wuhshare} and the locality bias in \textbf{Observation~2}, selection budgets are allocated to three token groups: initial (sink) tokens, local tokens, and salient middle tokens. Let $C_{\text{sink}}$ and $C_{\text{local}}$ denote the numbers of sink and local tokens, respectively. At decoding step $t$, we obtain the middle set with a budget $k$ by applying the top-$k$ oracle over positions $[C_{\text{sink}}, t\!-\!C_{\text{local}}]$, excluding the sink and local regions. This yields $S_t^*=\{p_{t,i}\}_{i=1}^{k}$, where $p_{t,i}$ is sorted and selected in descending order of attention weight. The per-head critical index set is $\mathcal{C}_t=\{1,\ldots,C_{\text{sink}}\} \cup S_t^* \cup \{t-C_{\text{local}}+1,\ldots,t\}$, with a total size $C=C_{\text{sink}}+k+C_{\text{local}}$.

\paragraph{Similarity Criteria}
For a later query, if there exists an earlier query $j$ whose cosine similarity exceeds a threshold $\tau$, KV sharing is performed. We measure similarity as
\begin{equation}
\begin{aligned}
\label{eq:cos_sim}
\mathsf{sim}(i,j)=\frac{\bq_{i}^{\!\top} \; \bq_{j}}{\|\bq_{i}\|\,\|\bq_{j}\|}.
\end{aligned}
\end{equation}
If multiple candidates satisfy $\mathsf{sim}(i,j)>\tau$, we choose the most recent such $j$ to preserve adjacency. Varying $\tau$ trades throughput for accuracy.

\paragraph{Neighboring Dilation}
Naive sharing ignores centroid drift (Eq.~\eqref{eq:centroid_shift}) and yields fragmented critical sets (Figs.~\ref{fig:attn-cluster}, \ref{fig:index-frag}), missing true positives for shared queries. We therefore dilate the $m$ highest-weight indices in $S_t^*$ (with $m\!\le\!k$) by their $\pm r$ neighbors:
\begin{equation}
\label{eq:neighboring}
\begin{aligned}
\hat S_t 
\;=\; 
S_t^* 
\;\cup\;
\bigcup_{i=1}^{m}\big\{p_{t,i}+j \;|\;-r\le j\le r\,\big\}.
\end{aligned}
\end{equation}
As shown in Fig.~\ref{fig:index-frag}, dilation substantially increases true-positive overlap and achieves complete coverage compared to direct sharing. Moreover, restricting dilation to the top-$m$ indices mitigates attention pollution from adding excessive low-weight tokens, as detailed in Sec.~\ref{sec:cis-dilation}.

\paragraph{Analysis}
\label{sec:cis_analysis}
Theoretically, CIS also turns Property~\ref{prop:among-query} into a pre–hoc guarantee as proved in Theorem~\ref{thm:cis-main} (detailed in \textbf{Appendix A, A3}). 

\begin{theorem}[CIS Retained–Mass and MI Guarantee]
\label{thm:cis-main}
For a reference query $q$ and its shared set $\hat S_t$ dilated with $\pm r$ neighbors 
around the top-$m$ winners ($m<k$), as in~\eqref{eq:neighboring}. Let $s(\tau)$ be any radius that covers the centroid drift implied by \eqref{eq:centroid_shift}, for any later query $\bq'$ with $\mathsf{sim}(\bq,\bq')\ge \tau$ (cosine similarity), if $r\!\ge\!s(\tau)$, then:
\begin{equation}
\begin{aligned}
\tau_{\mathrm{pre}}(\bq') \;\ge\; \tau^*(\bq') - \beta_{\mathrm{th}}(\tau),
\quad 
\beta_{\mathrm{th}}(\tau) \;\le\; 2\,\Delta_{\mathrm{att}}(\tau), \\
\text{where} \quad \Delta_{\mathrm{att}}(\tau)\!:=\!\big\|A(\bq')\!-\!A(\bq)\big\|_1 
\!\le\! \frac{2\,K_{\max}}{\sqrt d}\sqrt{2\!-\!2\tau}.
\end{aligned}
\end{equation}
Consequently, the MI gap obeys the pre–hoc bound as in~\eqref{eq:prhs_bound}, where $\beta_{\mathrm{th}}^{\mathrm{CIS}}(\tau)=2\Delta_{\mathrm{att}}(\tau)$. \emph{If }$r<s(\tau)$\emph{, the same holds with }\(2\Delta_{\mathrm{att}}\) \emph{replaced by } 
\(2\Delta_{\mathrm{att}}(\tau)+\varepsilon_{\mathrm{drift}}(\bq,\bq';m,r)\), 
where \(\varepsilon_{\mathrm{drift}}\) is the residual mass outside the chosen radius.
\end{theorem}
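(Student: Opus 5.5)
The argument splits into a set-coverage step and a perturbation step. Write $S=\hat S_t$ for the shared set, built from the reference query $\bq$, and $S'=\mathcal S^*(\bq')$ for the oracle set of the later query, with $|S'|=N$. The goal is to lower-bound $\tau_{\mathrm{pre}}(\bq')=\sum_{i\in S}A_i(\bq')$ by $\tau^*(\bq')$ minus twice an $\ell_1$ attention perturbation. The MI statement then follows from the monotonicity of $g$ in Eq.~\eqref{eq:mi_bound}, exactly as in Eq.~\eqref{eq:prhs_bound}.

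\emph{Step 1 (mass transfer).} For any fixed index set $T$,
\[
\Bigl|\sum_{i\in T}A_i(\bq')-\sum_{i\in T}A_i(\bq)\Bigr|\le \tfrac12\|A(\bq')-A(\bq)\|_1\le \Delta_{\mathrm{att}}.
\]
I will apply this twice. First, because $S\supseteq S^*_t$ contains the top-$k$ oracle set for $\bq$, we get $\sum_{i\in S}A_i(\bq)\ge \sum_{i\in S'}A_i(\bq)$; for this I will treat the shared budget as covering the oracle budget, which is where dilation matters. Then
\[
\tau_{\mathrm{pre}}(\bq')\ge \sum_{S}A(\bq)-\Delta_{\mathrm{att}}\ge \sum_{S'}A(\bq)-\Delta_{\mathrm{att}}\ge \tau^*(\bq')-2\Delta_{\mathrm{att}}.
\]
This gives $\beta_{\mathrm{th}}\le 2\Delta_{\mathrm{att}}$.

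The role of $r\ge s(\tau)$ is to justify the middle inequality when $S$ is only the dilated top-$m$ set and not all of $[t]$. By Theorem~\ref{the:mean-shift}, the centroid of each critical cluster moves by at most $s(\tau)$. So every index of $S'$ that carries mass and belongs to a cluster seeded by a top-$m$ winner lies within radius $r$ of some $p_{t,i}$, and hence lies in $S$. The remaining indices of $S'$ fall into two groups. Those in sink or local regions are retained by construction. Those in $S^*_t$ are retained directly.

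If $r<s(\tau)$, the indices of $S'$ outside the radius contribute exactly the residual mass $\varepsilon_{\mathrm{drift}}$. These indices are simply subtracted, which gives the additive correction.

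\emph{Step 2 (bounding $\Delta_{\mathrm{att}}$).} Softmax is a Lipschitz map of the logits $z_j=\bq^\top\bk_j/\sqrt d$. Its Jacobian $\mathrm{diag}(A)-AA^\top$ gives $\|A(\bq')-A(\bq)\|_1\le 2\|z'-z\|_\infty$, via the mean value theorem along the segment from $\bq$ to $\bq'$. Next,
\[
\|z'-z\|_\infty\le K_{\max}\|\bq'-\bq\|/\sqrt d .
\]
For normalized queries, cosine similarity at least $\tau$ gives $\|\bq'-\bq\|=\sqrt{2-2\mathsf{sim}}\le\sqrt{2-2\tau}$. For unnormalized queries, I would either apply the bound after normalization or absorb the query norm into $K_{\max}$. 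Combining these yields the stated $\frac{2K_{\max}}{\sqrt d}\sqrt{2-2\tau}$.

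The main obstacle is the coverage step, that is, rigorously turning the scalar centroid-drift bound of Theorem~\ref{the:mean-shift} into the set inclusion $S'\cap(\text{clusters})\subseteq S$. A centroid bound does not by itself control where individual critical indices go. My first attempt would be to define $s(\tau)$ operationally as any radius achieving this coverage, as the statement permits, so that Theorem~\ref{the:mean-shift} serves only as motivation that such a radius is finite and grows as $O(\sqrt{2-2\tau})$. All failures of coverage would be pushed into $\varepsilon_{\mathrm{drift}}$, which keeps the $r<s(\tau)$ case consistent with the same argument.
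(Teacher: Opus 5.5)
Your inequality chain is the paper's proof:
\begin{itemize}
\item the fixed-set mass transfer is its reuse lemma;
\item the step $\sum_{i\in \hat S_t}A_i(\bq)\ge\sum_{i\in S'}A_i(\bq)$ is its combination of $\hat S_t\supseteq\mathcal S_t^*(\bq)$ with the oracle-continuity lemma;
\item Step~2 reproduces its softmax and logit Lipschitz lemmas.
\end{itemize}

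However, the coverage step you call the main obstacle is not needed, and you should drop it. The middle inequality already follows from containment plus the fact that $\mathcal S_t^*(\bq)$ maximizes $A(\bq)$-mass among sets of the same sink/local/top-$k$ form. For this you should take $S'=\mathcal S_t^*(\bq')$, the structured oracle, rather than an unconstrained top-$N$ set, so that $S'$ is a feasible competitor. Consequently $r\ge s(\tau)$ and dilation play no role in the bound $\beta_{\mathrm{th}}\le 2\Delta_{\mathrm{att}}$; the paper's proof never uses them. The $r<s(\tau)$ case is then immediate because $\varepsilon_{\mathrm{drift}}\ge 0$.

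As sketched, the coverage argument would not work anyway, for two reasons. It ignores indices of $S'$ that lie outside $\mathcal S_t^*(\bq)$, outside the sink/local regions, and outside the top-$m$ neighborhoods. And, as you note yourself, a centroid bound does not localize individual indices.
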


In short, by tying $(\tau,m,r)$ to Lipschitz continuity and the observed mean-shift of clustered indices, CIS achieves a controlled $\beta_{\mathrm{th}}^{\mathrm{CIS}}$ and therefore a tightened MI gap. Empirically, the mean-shift rate saturates once the prefix length exceeds a moderate scale; i.e., within a reasonably large block, cluster distributions of aligned queries become nearly identical. During experiments, $s\geq 8$ and $\mathsf{sim} \geq 80\%$ can still satisfy competitive performance, which corresponds to aggressive sharing ratios.

\begin{figure*}[!t]
\includegraphics[width=0.95\linewidth]{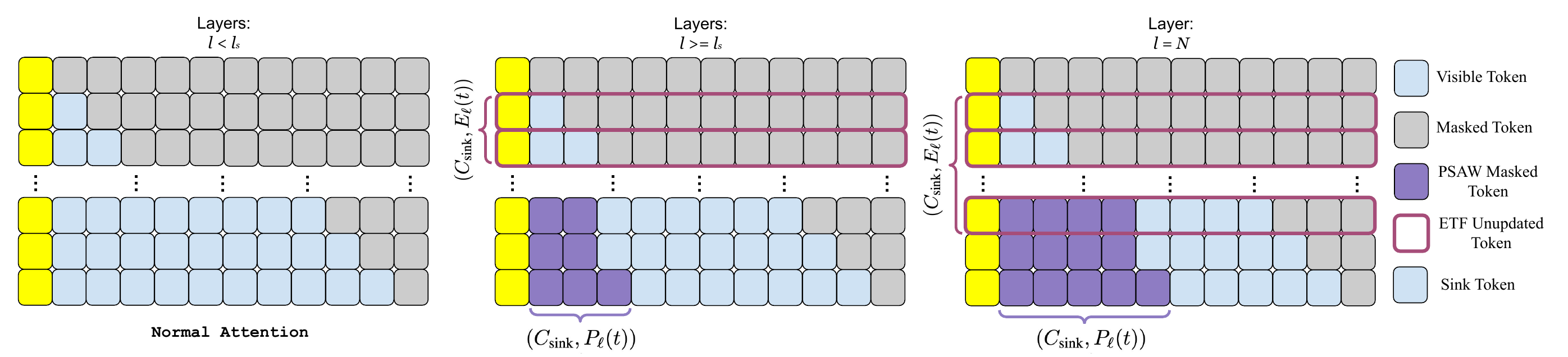}
\centering
    \caption{\textbf{Visual illustration of PSAW and ETF in prefill stage.} When $\ell<\ell_s$, attention is unchanged. For $\ell\ge\ell_s$, both methods prune redundant computation: \textbf{PSAW} computes a per-step sliding window, so the set of \textcolor{PSAW}{masked tokens} can vary across steps (columns); \textbf{ETF} applies a fixed prune range and freezes \textcolor{ETF}{earlier tokens} so they no longer update.}
    \label{fig:psaw&etf}
    \vspace{-0.5em}
\end{figure*}

\subsection{Progressive Sliding Attention Window}
\label{sec:psaw}

PSAW progressively narrows each layer’s attention window \textbf{during both prefill and decoding}. Consistent with \textbf{Observation~2}, attention is largely localized~\cite{zhang2023h2o,wan2024d2o}: apart from sink tokens, very early tokens receive negligible mass from later queries, and their content has already been propagated forward in lower layers. Thus, attending to them again in deeper layers is redundant.

Let $P_{\ell}(t)$ denote the earliest visible (non-sink) position at step $t$ for layer $\ell$, and let $N$ be the total number of layer. We mask tokens in the index range $(C_{\text{sink}},P_{\ell}(t))$, making the visible set $\{1,\ldots, C_{\text{sink}}\}\cup \{P_{\ell}(t),\ldots,t\}$. Let $\ell_s$ be the layer depth at which pruning starts, we define
\begin{equation}
\label{eq:psaw}
P_\ell(t)=
\begin{cases}
0, &\ell < \ell_s
\\
\big\lfloor (1 \!-\!\phi\;^{\alpha\cdot\frac{\ell-\ell_s}{N-\ell_s}}) \, t \big\rfloor, &\ell \ge \ell_s
\end{cases}
\end{equation}
where $\phi\!\in\!(0,1)$ and $\alpha\!\ge\!0$. For $\ell\!<\!\ell_s$, no pruning is applied to preserve early-information propagation. For $\ell \!\ge\! \ell_s$, the boundary $P_\ell(t)$ moves monotonically forward with depth because $\phi^{\,\alpha\cdot\frac{\ell-\ell_s}{N-\ell_s}}$ decreases as $\ell$ increases, yielding a progressively shrinking window. We adopt an exponentially decaying schedule rather than a linear one to better match attention locality biases, e.g., positional long-term decay~\cite{su2024roformer} and softmax concentration, yielding tighter alignment with the empirical attention drop in long-range influence across layers (see \textbf{Appendix B}). The parameter $\alpha$ controls the decay schedule, while $\phi^\alpha$ sets the top-layer truncation strength. A visualization of the PSAW mask is shown in Fig.~\ref{fig:psaw&etf}, where \textcolor{PSAW}{tokens in soft violet} indicate masked tokens. 

Theoretically, by assuming attention probability mass with an exponential recency schedule beyond sink tokens, PSAW achieves a design-time certificate $\beta_{\mathrm{th}}^{\mathrm{PSAW}}$ and a tightened MI bound that improves as $\phi^\alpha$ decreases or $\lambda_\ell$ grows.

\subsection{Early Token Freezing}
\label{sec:etf}
During \textbf{prefill}, ETF reduces computation by freezing an expanding prefix of early tokens in deeper layers. The intuition is that information from early tokens has already been propagated forward in shallower layers; further updating them yields diminishing returns. This implementation is also supported by the cross-layer redundancy (Sec.~\ref{sec:intro})~\cite{liu2405minicache}. Let $E_\ell(t)$ denote the last frozen (non-sink) index at layer $\ell$ and step $t$. Tokens with positions in $(C_{\text{sink}},E_\ell(t))$ are excluded from updates: they reuse their previous-layer hidden states, and their attention computations are skipped. Analogous to PSAW, we define
\begin{equation}
\label{eq:etf}
E_\ell(t)=
\begin{cases}
0, &\ell < \ell_s
\\
\big\lfloor (1 \!-\!\psi\;^{\gamma \cdot\frac{\ell-\ell_s}{N-\ell_s}}) \, t \big\rfloor, &\ell \ge \ell_s
\end{cases}
\end{equation}
where $\psi\!\in\!(0,1)$ controls the final unfrozen fraction, and $\gamma\!>\!0$ modulates the nonlinearity of the schedule. A visualization is provided in Fig.~\ref{fig:psaw&etf}, where \textcolor{ETF}{tokens with reddish-purple edges} are frozen. Aligning PSAW, giving a small, depth-decaying budget $\beta_{\mathrm{th}}^{\mathrm{ETF}}$ and a correspondingly tight MI bound, ETF achieves a design-time guarantee by increasing $\ell_s$ or $\mu$, as detailed in \textbf{Appendix D}.

\begin{figure}[!t]
\includegraphics[width=0.45\textwidth]{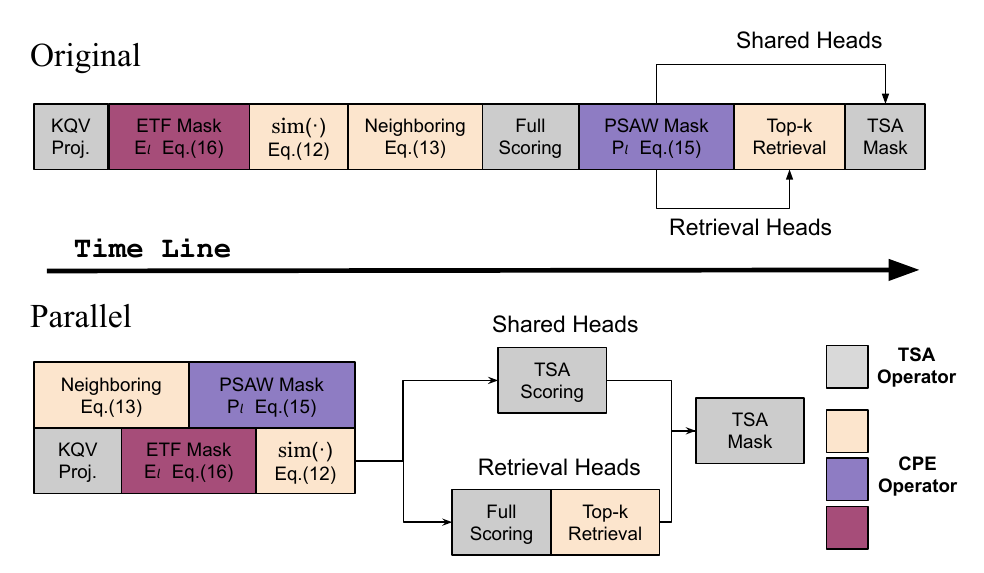}
\centering
    \caption{\textbf{Parallel CPE acceleration.} \emph{Top:} sequential baseline. \emph{Bottom:} parallel design that fuses index manipulation with attention and directly computes TSA scoring for shared heads. \textcolor{ETF}{ETF masking} will be omitted in decoding.}
    \label{fig:parallel}
    \vspace{-0.5em}
\end{figure}

\subsection{Parallel Acceleration}

\paragraph{Sequential execution} During inference, CPE executes TSA components in the order shown in Fig.~\ref{fig:parallel} (Top). After KV projection (Eq.~\eqref{eq:qkv-proj}), 
\textit{ETF} freezes early tokens from further updates; during decoding, explicit ETF masking is unnecessary because only the newly generated position is updated.
Next, \textit{CIS} computes critical sets for shared heads using cosine gating and neighbor dilation (Eqs.~\eqref{eq:cos_sim} and \eqref{eq:neighboring}). We then perform full scoring and build the \textit{PSAW} mask for redundant-attention pruning. For heads that still require retrieval, the top-$k$ oracle is applied to obtain their critical sets. Finally, masks are applied so that each head attends only to its critical set.

\paragraph{Parallel acceleration} Many steps depend only on known quantities and can be prepared or executed in parallel (Fig.~\ref{fig:parallel} (Bottom)). Specifically, previously computed critical sets are cached, and the PSAW mask is query-independent, allowing pre-computation. For CIS-shared heads, critical sets are already known, so full scoring is skipped and we compute attention only over valid indices (TSA scoring). Operations for shared heads and retrieval heads run concurrently, and the overall latency is determined by the slower branch.

\section{Experiments}

\begin{table}[t]
\centering
\scriptsize
\caption{\textbf{Evaluation of different methods on GSM8K and COQA.} The best result (excluding origin) in each column is highlighted in bold. Comp* refers to the theoretical time complexity for each method to select critical KV cache tokens, where $\mathcal{O}(1)$ denotes constant time complexity, and $T$ represents the theoretical computation time for a dense attention mechanism.}
\label{tab:gsm_coqa}
\begin{tabular}{l|l|cccccc}
\toprule
Model & Method & $\hat{\rho}$ & GSM8K $\uparrow$ & COQA $\uparrow$ & Comp* $\downarrow$ \\
\midrule
\multirow{12}{*}{\shortstack{LLaMA2\\-7b-chat}}
& Original  & -  & 0.2297/0.2297 & 0.5997 & - \\
& H2O \cite{zhang2023h2o}   & -        & 0.0986/0.0144 & 0.4952 & $\mathcal{O}(1)$ \\
& Quest \cite{tang2406quest}  & -       & 0.0478/0.0462 & 0.5713 & 0.1250$T$ \\
& DS \cite{yang2024post}  & -   & 0.1713/0.1706 & 0.5978 & 0.0625$T$ \\
\cmidrule{2-6}
& Hshare-1 \cite{wuhshare} & 0.281  & 0.1803/0.1703 & 0.5898 & 0.0180$T$ \\
& Hshare-2 \cite{wuhshare} & 0.125   & 0.1524/0.1145 & 0.5672 & 0.0080$T$ \\
\cmidrule{2-6}
& \multirow{3}{*}{CIS (ours)}  & 0.164  & \textbf{0.1842}/\textbf{0.1827} & 0.6085 & 0.0104$T$ \\
&  & 0.087   & 0.1804/0.1782 & 0.6072 & 0.0055$T$ \\
&  & \textbf{0.062}   & 0.1759/0.1751 & 0.6068 & \textbf{0.0039}$T$ \\
\cmidrule{2-6}
& \multirow{3}{*}{CIS$^\ast$ (ours)}  & 0.161  & 0.1812/0.1804 & \textbf{0.6063} & 0.0103$T$ \\
&  & 0.084   & 0.1789/0.1782 & 0.6063 & 0.0047$T$ \\
&  & 0.063   & 0.1736/0.1729 & 0.6083 & 0.0040$T$ \\
\midrule
\multirow{10}{*}{\shortstack{LLaMA3\\-70b}} 
& Original  & -   & 0.8067/0.8052 & 0.7085 & - \\
& DS \cite{yang2024post}   & - & 0.7415/0.7392 & 0.7045 & 0.0625$T$ \\
\cmidrule{2-6}
& Hshare-1 \cite{wuhshare} & 0.281  & 0.7512/0.7437 & 0.7010 & 0.0180$T$ \\
& Hshare-2 \cite{wuhshare} & 0.125   & 0.7415/0.7346 & 0.6960 & 0.0080$T$ \\
\cmidrule{2-6}
& \multirow{3}{*}{CIS (ours)}  & 0.183  & \textbf{0.7613/0.7592} & 0.7045 & 0.0117$T$ \\
&  & 0.082   & 0.7553/0.7526 & 0.7021 & 0.0052$T$ \\
&  & 0.064   & 0.7528/0.7511 & 0.7003 & 0.0041$T$ \\
\cmidrule{2-6}
& \multirow{3}{*}{CIS$^\ast$ (ours)}  & 0.179  & 0.7584/0.7556 & \textbf{0.7068} & 0.0114$T$ \\
&  & 0.079   & 0.7542/0.7527 & 0.7003 & 0.0050$T$ \\
&  & \textbf{0.061}   & 0.7536/0.7504 & 0.7001 & \textbf{0.0038}$T$ \\
\midrule
\multirow{10}{*}{\shortstack{Mistral\\-7b}} 
& Original & -   & 0.3821/0.3813 & 0.6758 & - \\
& DS \cite{yang2024post} & -  & 0.3262/0.3227 & 0.6693 & 0.0625$T$ \\
\cmidrule{2-6}
& Hshare-1 \cite{wuhshare} & 0.281   & 0.3199/0.3154 & 0.6560 & 0.0180$T$ \\
& Hshare-2 \cite{wuhshare} & 0.125   & 0.2818/0.2684 & 0.6387 & 0.0080$T$ \\
\cmidrule{2-6}
& \multirow{3}{*}{CIS (ours)} & 0.194  & \textbf{0.3325/0.3302} & 0.6703 & 0.0124$T$ \\
&  & 0.088   & 0.3281/0.3275 & 0.6684 & 0.0056$T$ \\
&  & \textbf{0.065}   & 0.3233/0.3201 & 0.6710 & \textbf{0.0041}$T$ \\
\cmidrule{2-6}
& \multirow{3}{*}{CIS$^\ast$ (ours)} & 0.189  & 0.3292/0.3257 & 0.6691 & 0.0121$T$ \\
&  & 0.091   & 0.3216/0.3204 & 0.6714 & 0.0058$T$ \\
&  & 0.068   & 0.3204/0.3175 & \textbf{0.6754} & 0.0043$T$ \\
\bottomrule
\end{tabular}
\end{table}

\subsection{Experimental Setup}

We evaluate our CPE on two widely-used model families: LLaMA~\cite{touvron2023llama} and Mistral~\cite{jiang2023mistral7b}. Specifically, we select LLaMA2-7B-Chat, LLaMA3-70B, and Mistral-7B for our experiments. As baselines, we select one TDO method H2O~\cite{zhang2023h2o}, two QAA algorithms: Quest~\cite{tang2406quest} and DS~\cite{yang2024post}, as well as a SOTA retrieval-based PoHS instantiation: HShare~\cite{wuhshare}. We report results on GSM8K~\cite{cobbe2021gsm8k}, COQA~\cite{reddy2019coqa}, and LongBench~\cite{bai2023longbench}.

In our CPE, we adopt the following default hyperparameters. For CIS, we set the similarity threshold $\tau\!=\!0.8$, and apply dilation on the $m \!=\! \lfloor k/3\rfloor$ highest-score indices with radius $r=1$. Pruning and freezing of PSAW and ETF start from $\ell_s \!=\! \lfloor 3N/4 \rfloor$ for a model with $N$ layers. PSAW uses $\phi \!=\! 0.7$ and $\alpha \!=\! 1$, while ETF uses $\psi \!=\! 0.5$ and $\gamma \!=\! 1$. 

Because CIS performs head-level KV sharing to avoid full retrieval, we define the \emph{per-step retrieval ratio} $\rho_t\!=\!\frac{R_t}{H\cdot N}$ to measure the retrieval frequency, where $R_t$ is the number of performed head–layer retrieval at step $t$, and $H\cdot N$ is the amount required by a naïve top-$k$ oracle. The ratio $\rho_t$ is controlled by the CIS block size $s$ and the similarity threshold $\tau$. In reports, we present the averaged ratio $\hat{\rho}=\tfrac{1}{T}\sum^{T}\rho_t$ over the $T$ generation steps.

\subsection{Results on GSM8K and COQA}
\label{sec:gsm8k&coqa}

\subsubsection{Datasets and Metrics}
We perform zero-shot evaluation with LM-Eval on the GSM8K~\cite{cobbe2021gsm8k} and COQA~\cite{reddy2019coqa} benchmarks. GSM8K comprises roughly 8 000 grade-school mathematics problems, whereas COQA targets multi-turn conversational question answering. The mean input lengths are about 500 tokens (GSM8K) and 2 000 tokens (COQA). Following prior work, we report both flexible and strict exact-match (EM) accuracy on GSM8K, and EM on COQA.

\subsubsection{Inference Details}
We apply TSA in the decoding stage and solely evaluate CIS as presented in Table~\ref{tab:gsm_coqa}. We select a critical KV budget $C_t \!=\! 128$ ~\cite{wuhshare}. For CIS and Hshare, we set $C_{\text{local}} \!=\! 32$, and $k \!=\! 88$ in the middle. Since CIS dilation incurs additional KV budget during sharing, we therefore additionally include a variant CIS$^\ast$, which sets $k=72$, yielding its average budget close to 128. We choose $s\in\{8,16,20\}$ to vary $\hat{\rho}$.

\begin{table*}[t]

\footnotesize
\centering
\caption{\textbf{Evaluation of different methods on sixteen English datasets in Longbench} and the best result in each row (excluding the original) is highlighted in bold. Here, the retrieval complexity of H2O, Quest, DS, and HShare is just the same with Table~\ref{tab:gsm_coqa}.}
\label{tab:longbench}
\begin{tabular}{lcccc|cc|cc|cc|cc}
\toprule
\multicolumn{1}{c}{\multirow{2}{*}{\textbf{Dataset}}} & \multicolumn{12}{c}{\textbf{Method}} \\
\cmidrule(lr){2-13}
 & Original & H2O \cite{zhang2023h2o} & Quest \cite{tang2406quest} & DS \cite{yang2024post} & \multicolumn{2}{c|}{HShare \cite{wuhshare}}  & \multicolumn{2}{c|}{CIS (ours)} & \multicolumn{2}{c|}{CIS$^\ast$ (ours)} & \multicolumn{2}{c}{CPE (ours)}\\
\midrule

$\hat{\rho}$ & - & - & - & - & 0.281 & 0.125 & 0.177 & 0.083 & 0.164 & 0.085 & 0.174 & 0.079\\

\midrule

MultiNews   & 26.22 & 24.88 & \textbf{26.49} & 26.30 & 26.05 & 25.22 & 26.47 & 25.98 & 26.18 & 26.40 & 26.11 & 26.02\\
Musique            & 8.65  & 6.24  & 4.62 & 8.72  & 8.42  & 8.09 & 8.47 & 6.98 & 8.74 & 8.68 & 8.69 & 7.50\\
HotpotQA  & 27.72 & 27.58 & 21.94 & 27.44 & 27.32 & \textbf{29.13} & 27.54 & 28.64 & 26.72 & 27.60 & 27.39 & 27.10\\
Qasper & 21.88 & 17.32 & 16.96 & 20.98 & 20.12 & 19.60 & 21.24 & 21.18 & 20.87 & 20.79 & 20.94 & \textbf{21.63}\\
2WikiMQA  & 31.18 & 31.38 & 29.42 & 32.01 & 31.46 & 31.43 & \textbf{32.51} & 30.78 & 30.05 & 30.70 & 29.06 & 31.15\\
Repo-P & 52.14 & \textbf{51.81} & 50.24 & 48.44 & 50.79 & 51.63 & 50.99 & 50.03 & 50.65 & 50.88 & 51.64 & 51.12\\
TriviaQA  & 83.09 & 81.90 & 82.42 & 83.23 & \textbf{83.91} & 82.16 & 79.57 & 81.46 & 81.04 & 79.76 & 79.78 & 81.35\\
Trec   & 64.5  & 62.5  & 64.0 & 61.05 & 60.5  & 58.5 & 62.5 & 63.5 & 64.5 & 62.5 & \textbf{65.0} & 62.0\\
Qmsum   & 20.91 & 20.74 & \textbf{21.36} & 20.75 & 20.59 & 20.74 & 20.53 & 20.93 & 20.98 & 20.92 & 20.78 & 20.66\\
NarrativeQA  & 18.83 & 17.01 & \textbf{18.21} & 17.54 & 17.31 & 17.31 & 17.90 & 17.05 & 17.65 & 16.30 & 16.55 & 17.96\\
GovReport   & 26.55 & 23.45 & 25.46 & \textbf{27.04} & 25.85 & 25.85 & 26.63 & 26.39 & 26.07 & 26.39 & 25.63 & 25.78\\
LCC    & 58.26 & 56.55 & \textbf{56.74} & 54.87 & 56.23 & 56.11 & 55.35 & 54.43 & 54.89 & 55.10 & 55.62 & 54.22\\
PC      & 2.77  & 1.87  & 2.33 & 3.15 & 2.30  & 2.30 & 2.69 & 3.12 & 2.38 & \textbf{3.34} & 3.27 & 1.93\\

Samsum  & 41.01 & 40.18 & 40.91 & \textbf{41.30} & 39.77 & 40.02 & 40.96 & 40.52 & 39.04 & 39.68 & 40.11 & 40.19\\
PR-EN & 6.5 & 5.0   & 6.5  & 6.0  & 6.0   & 6.0 & 8.5 & 7.0 & 6.0 & \textbf{9.5} & 6.0 & 6.0\\
MQA-EN    & 36.15 & 33.64 & 31.99 & 35.83 & 34.08 & 32.37 & 36.64 & 36.54 & 37.29 & 37.06 & 37.47 & \textbf{37.96}\\
\midrule
Average   & 32.90 & 31.38 & 31.35 & 32.11 & 31.97 & 31.49 & \textbf{32.40} & 32.15 & 32.06 & 32.22 & 32.11 & 32.02\\
\bottomrule
\end{tabular}
\end{table*}

\begin{figure}[!t]
\includegraphics[width=0.48\textwidth]{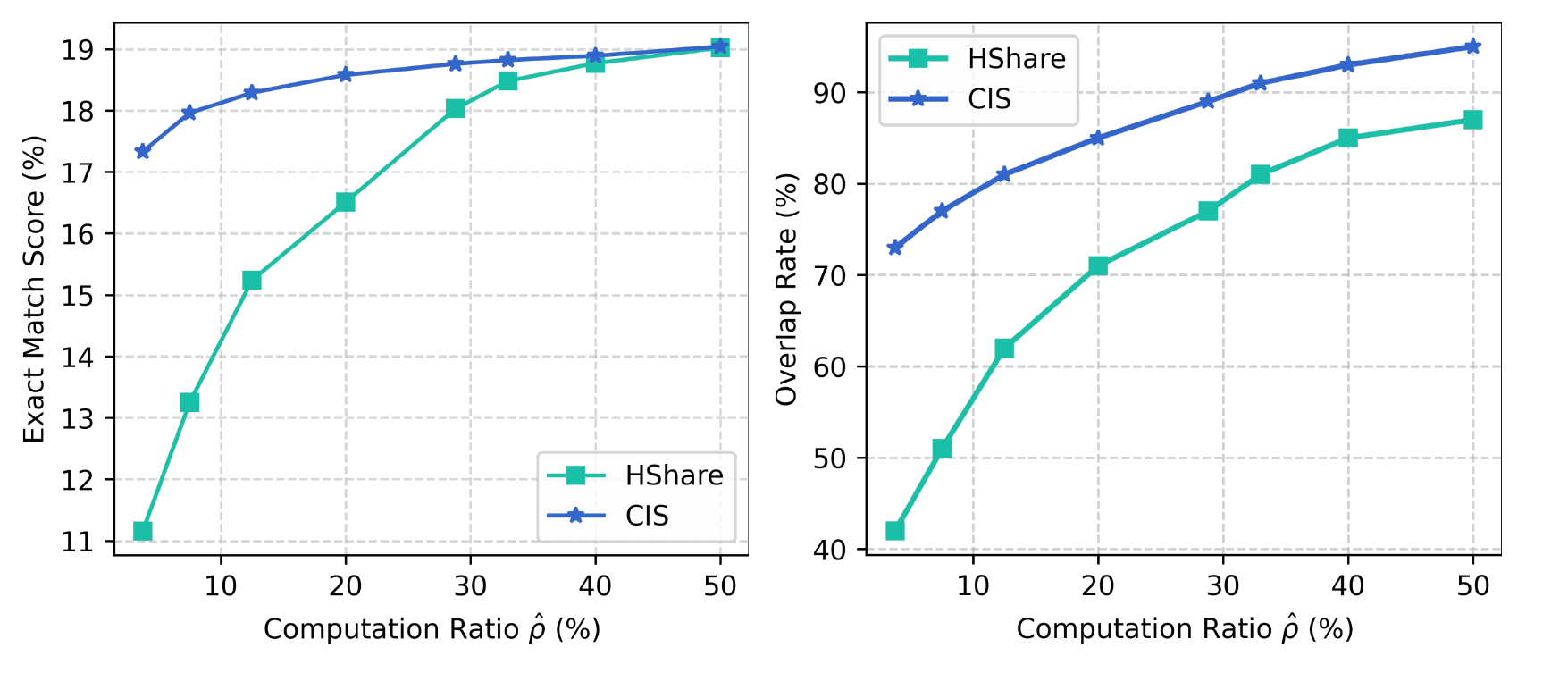}
\centering
\vspace{-1em}
    \caption{\textbf{CIS vs. HShare on GSM8K across computation ratios.} \emph{Left}: Exact-match score accuracy comparisons. \emph{Right}: Overlap comparisons between each method’s retrieved critical set and the top-$k$ oracle.}
    \label{fig:cis_vs_hshare}
\end{figure}

\subsubsection{Result Analysis}

As shown in Table~\ref{tab:gsm_coqa}, CIS and its budget-matched variant CIS$^\ast$ consistently deliver a superior accuracy–efficiency trade-off to PoHS baselines on GSM8K and COQA. On LLaMA-2-7B-Chat, CIS at low selection cost already surpasses SOTA HShare, i.e., $40$–$55\%$ lower complexity at higher accuracy. The advantage widens on LLaMA-3-70B with deeper layers, consistent with reduced cross-layer error accumulation under sharing. CIS$^\ast$ still outperforms HShare at matched budgets, isolating selection \emph{quality} rather than capacity as the driver of the gains. Efficiency is materially better. For instance, on LLaMA-3-70B, CIS uses $0.0052T$ selection cost versus DS’s $0.0625T$.

To further evaluate the accuracy-efficiency trade-offs of CIS, we compare the results of HShare and CIS under different computation ratios on GSM8K. As shown in Fig.~\ref{fig:cis_vs_hshare}(left), when computation ratio is below 30\%, the performance of HShare degrades sharply, while CIS maintains high accuracy. From the perspective of retrained attention mass, Fig.~\ref{fig:cis_vs_hshare}(right) shows that the average overlap of HShare collapses at low computation ratio. These results further demonstrate that the selection quality of CIS is much better than HShare under aggressive sharing.

\subsection{Longbench}
\label{sec:longbench}

\subsubsection{Datasets and Metrics}
LongBench contains sixteen English tasks spanning code completion, few-shot learning, document-level QA, summarisation, and synthetic reasoning. We adhere to the official evaluation protocol: MultiNews, Qmsum, GovReport, Samsum with rouge score; Musique, HotpotQA, Qasper, 2WikiMQA, TriviaQA, NarrativeQA, MultifieldQA-EN (MQA) with F1 score; trec with classification accuracy; Passage-Count (PC), Passage-Retrieval-EN (PR-EN) exact-match accuracy; RepoBench-P (Repo-P) and LCC with similarity score.

\begin{table*}[t]
\centering
\caption{Attention operator latency (ms $\downarrow$) of different methods across various batch sizes and sequence lengths. Lower values indicate better performance. HShare-0 corresponds to HShare(3/4-3/4-1/2), and HShare-1 to HShare(1/2-1/2-1/2). The number following each of our methods indicates the block size $s$ used in CIS.}
\label{tab:attention_latency}
\begin{tabular}{cccccccc|cccc}
\toprule
BS & Seqlen & Flash & H2O \cite{zhang2023h2o} & Quest \cite{tang2406quest} & DS \cite{yang2024post} & HShare-0 \cite{wuhshare} & HShare-1 \cite{wuhshare} & CIS-8 & CIS-16 & CPE-8 & CPE-16 \\
\midrule
\multirow{3}{*}{8}
  & 1k & 0.230 & 0.088 & 0.200 & 0.141 & 0.124 & 0.090 & 0.116 & 0.083 & 0.092 & \textbf{0.078}\\
  & 2k & 0.830 & \textbf{0.093} & 0.460 & 0.241 & 0.160 & 0.120 & 0.147 & 0.106 & 0.144 & 0.095 \\
  & 4k & 1.630 & 0.470 & 0.850 & 0.733 & 0.570 & 0.530 & 0.534 & 0.501 & 0.506 & \textbf{0.469} \\
\midrule
\multirow{3}{*}{16}
  & 1k & 0.440 & 0.089 & 0.280 & 0.233 & 0.120 & 0.093 & 0.113 & 0.081 & 0.089 & \textbf{0.075} \\
  & 2k & 1.630 & \textbf{0.110} & 0.770 & 0.422 & 0.230 & 0.190 & 0.214 & 0.178 & 0.196 & 0.165 \\
  & 4k & 3.230 & 0.850 & 2.21  & 1.350 & 1.041 & 0.990 & 0.953 & 0.869 & 0.904 & \textbf{0.821} \\
\bottomrule
\end{tabular}
\end{table*}

\subsubsection{Inference Details}

All baselines apply TSA only during decoding and retain a $512$ KV budget, with a sparsity ratio of ${\approx}$1/8. For HShare and CIS we set $C_{\text{sink}}{=}16$, $C_{\text{local}}{=}64$, and $k{=}432$. With CIS’s neighbor dilation, the effective shared-head set averages $547.5$ tokens. To enable a budget-matched comparison, CIS$^\ast$ reduces the middle-budget to $k{=}388$, equalizing the average KV budget processed during decoding to $512$. For the combined system \textbf{CPE}, we use $k{=}432$ and also activate PSAW and ETF during \emph{prefill}. Reductions achieved in prefill are not counted toward the decoding-budget metric, so CPE further lowers prefill cost in addition to the standard decoding-only comparison. We use block size $s\in\{8,16\}$.

\subsubsection{Results Analysis}

Across sparse baselines, CIS attains the best average while using substantially lower retrieval than SOTA HShare, as shown in Table \ref{tab:longbench}. Specifically, CIS achieves the highest non-dense average, $32.40$, at a moderate retrieval ratio $\hat{\rho}{=}0.177$, surpassing HShare’s best $31.97$ at $\hat{\rho}{=}0.281$ with $37\%$ less retrieval. Moreover, CIS$^{\ast}$ remains competitive with HShare while substantially reducing retrieval, indicating that gains stem from selection quality rather than capacity. CPE preserves this competitiveness ($32.11$) while additionally reducing prefill cost. Although prefill reductions are not counted in the decoding budget, CPE still ties or wins on several tasks, showing that pruning and freezing early tokens remove redundant compute without harming downstream reasoning—consistent with the error control guaranteed by PrHS.




\begin{table*}[t]
\centering
\caption{Throughput ($\uparrow$) of different methods across various batch sizes and sequence lengths. HShare-0 corresponds to HShare(3/4-3/4-1/2), and HShare-1 to HShare(1/2-1/2-1/2). The number following each of our methods indicates the block size $s$ used in CIS.}
\label{tab:throughput}
\begin{tabular}{cccccccc|cccc}
\toprule
BS & Seqlen & GPT-Fast & H2O \cite{zhang2023h2o} & Quest \cite{tang2406quest} & DS \cite{yang2024post} & HShare-0 \cite{wuhshare} & HShare-1 \cite{wuhshare} & CIS-8 & CIS-16 & CPE-8 & CPE-16 \\
\midrule
\multirow{3}{*}{8}
  & 1k & 228 & 240 & 228 & 228 & 231 & 235 & 233 & 236  & 238 & \textbf{244} \\
  & 2k & 188 & \textbf{234} & 206 & 213 & 222 & 226 & 216 & 224  & 225 & 233 \\
  & 4k & 118 & \textbf{228} & 152 & 201 & 214 & 217 & 216 & 220  & 219 & 227 \\
\midrule
\multirow{3}{*}{16}
  & 1k & 374 & 441 & 410 & 423 & 430 & 439 & 433 & 443 & 441 & \textbf{449} \\
  & 2k & 233 & 416 & 287 & 360 & 398 & 411 & 402 & 415 & 412 & \textbf{421} \\
  & 4k & 136 & \textbf{396} & 175 & 286 & 350 & 365 & 352 & 364 & 362 & 384 \\
\bottomrule
\end{tabular}
\end{table*}

\subsection{Efficiency Evaluation}

\subsubsection{Inference Detail}

Following prior work~\cite{yang2024post,wuhshare}, we measure (i) self-attention operator latency and (ii) end-to-end throughput in the decoding stage. Experiments use \mbox{LLaMA-2-7B-Chat} on a server with an Intel Xeon Platinum 8336C CPU, a single NVIDIA A100 GPU, and 120 GB RAM. For the operator benchmark, FlashAttention-2 (Flash)~\cite{dao2022flashattention} serves as the dense baseline. For the end-to-end benchmark, we use GPT-fast~\cite{pytorch} as the dense baseline and replace its attention module with our TSA implementation. We test batch sizes ${8,16}$ and sequence lengths $1\text{k}$–$4\text{k}$ at a token sparsity ratio of $1/8$. We report results for CIS and CPE (CIS+PSAW; ETF is prefill-only) with block sizes $s\in{8,16}$. Because CIS confines sharing within a block, the extra cache is minimal, and the added per-step bookkeeping is bounded by $\mathcal{O}(H s k)$.


\subsubsection{Results Analysis}

\paragraph{Operator latency} As shown in Table~\ref{tab:attention_latency}, CPE-16 delivers the fastest kernel at both small and long contexts for each batch size (BS). At $2\text{k}$ tokens and BS$=16$, it attains a $9.9\times$ speedup over Flash. At $4\text{k}$ tokens, relative to the SOTA PoHS baseline, HShare-1, CPE-16 reduces latency by $11.5\%$ (BS$=8$) and $17.1\%$ (BS$=16$), and it outperforms DS by up to $39\%$ at BS$=16$. H2O is competitive at $2\text{k}$, where its ultra-light TDO pattern minimizes overhead. CIS alone also confers a clear advantage: at BS$=16$, CIS-16 is $11$–$15\%$ faster than HShare-1. The pattern is consistent with head-level sharing plus PSAW amortizing their $\mathcal{O}(H s k)$ bookkeeping once the GPU is well utilized, leaving reduced active KV compute as the runtime driver.

\paragraph{End-to-end throughput} Operator gains translate to real decoding speed, as shown in Table~\ref{tab:throughput}. Relative to GPT-Fast, CPE-16 reaches $2.82\times$ at BS$=$16, $4\text{k}$. At BS$=16$ and $1$–$2\text{k}$, CPE-16 achieves the highest throughput, edging out H2O; at BS$=8$, it leads at $1\text{k}$ and effectively ties H2O at longer contexts. As batch size and sequence length grow, the gap between operator-level and end-to-end gains narrows due to non-attention bottlenecks (MLPs, sampling), yet CPE consistently ranks first or near-first across realistic loads, with $s{=}16$ providing the strongest throughput.

\paragraph{Effect of block size} Across settings, $s{=}16$ is consistently faster than $s{=}8$ for both CIS and CPE. Larger boundaries reduce launch and index-management overhead and allow sharing decisions to persist longer, so the small increase in per-block work is negligible compared to better kernel efficiency. This argues for $s{=}16$ as the default in high-throughput deployments.

\definecolor{orange}{HTML}{FF5C33}
\definecolor{darkgreen}{HTML}{2E7D32}

\subsection{Further Analysis}

\begin{figure}[!t]
\includegraphics[width=0.4\textwidth]{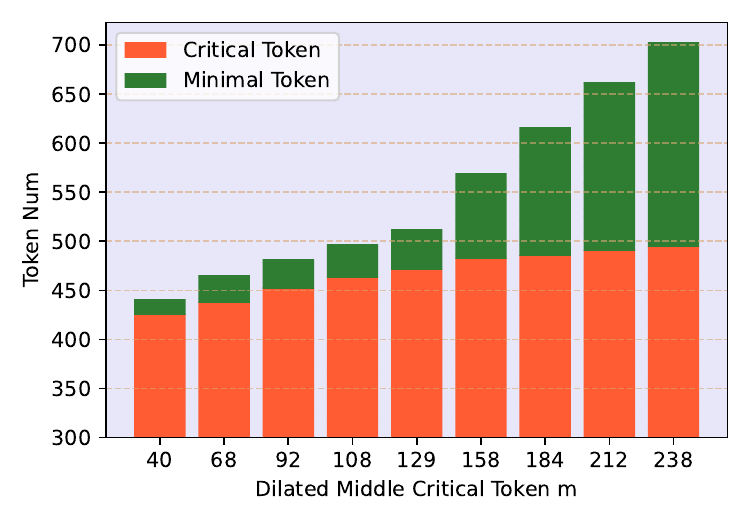}
\centering
\vspace{-1em}
    \caption{\textbf{Effect of CIS dilation on LongBench NarrativeQA.} The stacked bar shows the average processed KV set: the \textcolor{orange}{orange segment} counts tokens that also appear in the top-$k$ oracle (budget 512, following Sec.~\ref{sec:longbench}), while the \textcolor{darkgreen}{green segment} counts minimal tokens not in the top-$k$ set.}
    \label{fig:cis-dilation}
    \vspace{-0.5em}
\end{figure}

\subsubsection{CIS Dilation}
\label{sec:cis-dilation}

We assess CIS dilation using the CIS$^\ast$ configurations from Sec.~\ref{sec:longbench}, varying the number of top indices dilated $m$. As shown in Fig.~\ref{fig:cis-dilation}, the number of \emph{additional} tokens beyond the top-$k$ set remains small for moderate $m$; dilation overhead grows gradually as $m$ increases, and the added tokens are predominantly low-importance. Inspecting shared queries explains this behavior as follows. Their most influential critical tokens are tightly clustered and spatially stable, so dilation contributes few extra indices. In contrast, when the budget begins to include low-score tokens, these indices are irregular and scattered, and dilation substantially inflates the set. This motivates restricting dilation to a top fraction of retrieved tokens. In practice, the expanded set also overlaps heavily with the remaining retrieved indices, further limiting redundancy.

\begin{table*}[t]
\centering
\caption{\textbf{Hyperparameter tuning on LLaMA2-Chat-7B for CPE.} The COQA setting for CIS follows the CIS$^\ast$ configuration, and HyperKV follows the HyperKV configuration, as described in Sec.~\ref{sec:gsm8k&coqa}. WikiText perplexity (PPL) is measured only during the prefilling stage. The Avg.Token measures the average processed tokens per head only in the decoding stage.}
\label{tab:hyper-tune}
\begin{tabular}{l|ccccccc|cccccc}
\toprule
Methods & $s$ & $\tau$ & $r$ & $\phi$ & $\psi$ & $\alpha$ & $\gamma$ & $\hat{\rho}$ & Avg. Token & Wiki PPL & GSM8K(flexible) & COQA(EM/F1)\\
\midrule
\multirow{4}{*}{CIS}
  & 4 & 0.8 & 1 & - & - & - & - & 0.301 & 124 & - & 0.1834 & 0.6023 / 0.7632\\
  & 8 & 0.7 & 1 & - & - & - & - & 0.152 & 131 & - & 0.1801 & 0.6090 / 0.7648\\
  & 8 & 0.8 & 2 & - & - & - & - & 0.184 & 167 & - & 0.1653 & 0.6083 / 0.7650\\
  & 32 & 0.8 & 1 & - & - & - & - & 0.0046 & 136 & - & 0.1693 & 0.6080 / 0.7641\\

\midrule

\multirow{2}{*}{PSAW}
  & - & - & - & 0.5 & - & 1 & - & - & - & 6.120 & 0.1896 & 0.5925 / 0.7568\\
  & - & - & - & 0.7 & - & 1.5 & - & - & - & 6.113 & 0.1803 & 0.5938 / 0.7573\\

\midrule

\multirow{2}{*}{ETF}
  & - & - & - & - & 0.5 & - & 1.5 & - & - & 6.162 & 0.1864 & 0.5962 / 0.7563\\
  & - & - & - & - & 0.4 & - & 1 & - & - & 6.156 & 0.1883 & 0.5962 / 0.7537\\

\midrule

\multirow{2}{*}{CPE}
  & 8 & 0.8 & 2 & 0.7 & 0.5 & 1.2 & 1.2 & 0.176 & 166 & 6.148 & 0.1348 & 0.5998 / 0.7591 \\
  & 32 & 0.8 & 1 & 0.7 & 0.5 & 1 & 1 & 0.0044 & 133 & 6.136 & 0.1634 & 0.5965 / 0.7572\\

\bottomrule
\end{tabular}
\end{table*}

\subsubsection{Hyperparameter Tuning}
\label{sec:hyper-tune}
\paragraph{Setting} We tune the key knobs of our compression pipeline in Table~\ref{tab:hyper-tune}. For \emph{CIS}, we adopt the CIS$^\ast$ configuration from Sec.~\ref{sec:gsm8k&coqa} and vary the block size $s$, similarity threshold $\tau$, and neighbor radius $r$. For \emph{PSAW} and \emph{ETF} in isolation, we apply them on dense attention, fix $\ell_s=\lfloor 3N/4\rfloor$, and control pruning/freezing aggressiveness via $(\phi,\alpha)$ for PSAW and $(\psi,\gamma)$ for ETF; PSAW is active in both prefill and decoding. For \emph{CPE}, we combine all three components, adding PSAW and ETF to CIS under TSA. We report: the averaged computation ratio $\hat{\rho}$, the average processed tokens per head in decoding (Avg.Token), \emph{prefill}-only WikiText perplexity (PPL), and task accuracy on GSM8K (flex) and COQA (EM/F1).

\paragraph{CIS} Varying $(s,\tau,r)$ reveals the accuracy–efficiency trade-off. Smaller $s$ increases retrieval frequency (higher $\hat{\rho}$), whereas enlarging to $s{=}32$ drives the retrieval ratio down to $\hat{\rho}{=}0.0046$ with only a modest GSM8K accuracy drop; COQA remains notably stable across all $s$. At fixed $s{=}8$, increasing the dilation radius to $r{=}2$ substantially enlarges the active set but yields negligible accuracy change on both GSM8K and COQA; inspection shows that the added neighbors often include low-importance tokens, perturbing the attention-mass distribution without benefits. Slightly loosening $\tau$ has little effect on accuracy. Overall, $s$ is the dominant lever for efficiency, while $\tau$ and $r$ offer fine-grained control of recall versus budget.

\paragraph{PSAW} Prefill pruning is gentle on language modeling quality and downstream accuracy. Increasing aggressiveness from $(\phi{=}0.5,\alpha{=}1)$ to $(\phi{=}0.7,\alpha{=}1.5)$ barely changes WikiText PPL, and COQA is essentially unchanged, while GSM8K (flexible match) decreases only slightly.

\paragraph{ETF} Freezing during prefill shows similarly small effects. Adjusting from $(\psi{=}0.5,\gamma{=}1.5)$ to $(\psi{=}0.4,\gamma{=}1)$ yields virtually identical PPL and a tiny trade between GSM8K and COQA. ETF is therefore a stable knob for further prefill savings and slightly milder freezing is a safe default setting.

\paragraph{CPE} Combining CIS with PSAW and ETF preserves efficiency while keeping quality competitive. With aggressive dilation ($r{=}2$), GSM8K dips to $0.1348$. Increasing block size and reducing dilation to $r{=}1$ lowers selection frequency and trims Avg. Token by ${\sim}20\%$, recovering GSM8K to $0.1634$ with COQA nearly unchanged. Overall, CPE performs best with moderate–large blocks, restricted dilation ($r{=}1$), and mild PSAW/ETF schedules.

\begin{table}[tb]
\centering
\footnotesize
\caption{Variants of CIS on the similarity designs. The model is LlaMA2-7b-chat.}
\label{tab:cis_variant}
{
\begin{tabular}{l|c|ccc}
\toprule
Method & $s$ & GSM8K (flexible/strict) & COQA\\ 
\midrule



\multirow{2}{*}{Key} & 8  & 0.1547 / 0.1482 & 0.5613 \\ 
 & 16  &  0.1405 / 0.1334 & 0.5608  \\ 

\midrule

\multirow{2}{*}{Hidden} &  8 & 0.1303 / 0.1289 & 0.5416  \\ 
 &  16 &  0.1121 / 0.1108 & 0.5214 \\ 

\bottomrule
\end{tabular}
}
\end{table}

\subsubsection{Similarity Ablations} 

We explore alternative similarity metrics for CIS derived from different representations. Concretely, we replace the default cosine similarity between \emph{query} embeddings (Eq.~\ref{eq:cos_sim}) with cosine similarity computed on (i) hidden states $\bx$ and (ii) key embeddings $\bk$, and evaluate under the CIS$^\ast$ configuration in Sec.~\ref{sec:gsm8k&coqa}. As shown in Table~\ref{tab:cis_variant}, both variants reduce accuracy, with the hidden-state metric degrading most. This indicates that the query, key, and hidden-state spaces emphasize distinct semantics; temporal semantic proximity for CIS is best captured in the \emph{query} space, whereas measuring similarity in $\bx$ or $\bk$ misaligns clusters and harms selection quality.

\section{Conclusion}

In this paper, we propose Pre-hoc Sparsity (PrHS) framework, and derive an upper bound on the mutual-information (MI) loss based on the dropped attention mass. By tightening the MI bound towards the top-$k$ oracle, selectors under PrHS can achieve near-oracle KV compression. We propose \textbf{CPE} that instantiate three orthogonal modules under PrHS: \emph{Clustered Indices Sharing} (CIS), \emph{Progressive Sliding Attention Window} (PSAW), and \emph{Early Token Freezing} (ETF). CIS expands the highest-score indices in each shared critical set to its neighbors, achieving high recall and preserving accuracy even at aggressive (${\ge}90\%$) sharing ratios. PSAW and ETF further amplify sparsity by \textbf{masking tokens} that lie outside a dynamically sliding context window and \textbf{freezing tokens} whose updates have converged, thereby eliminating redundant computation with negligible performance loss. 
Extensive evaluations on GSM8K, COQA, and LongBench demonstrate that CPE (i) matches dense baselines in accuracy under high attention sparsity, (ii) reduces retrieval complexity by up to $3\times$ than recent SOTA HShare~\cite{wuhshare} with even better performance, (iii) delivers up to a $9.9\times$ latency speed-up over FlashAttention and a $2.8\times$ throughput gain over GPT-Fast. These results confirm CPE as an effective and broadly applicable solution for efficient long-context inference in LLMs.




\section{Unified Information Bounds}
\label{sec:app_uib}
\subsection{Setup}

We consider the minimal unit of the information bound and fix both the layer and the head. Following the softmax attention distribution defined in Sec.~\ref{sec:inference_background} and Eq.~\ref{eq:attn_eq}, for a query $\bq$, with $\bK=[\bk_1,\dots,\bk_L]$ and $\bV=[\bv_1,\dots,\bv_L]$ denote keys/values for a context of length $L$, we introduce the \emph{latent index} random variable
\begin{equation}
    T\in[L],\qquad \mathbb P(T=i\,|\,\bq,\bK)=A_i(\bq),
\end{equation}
and the routed value $\bV_T$ for token-sparse attention. Let $I_{\mathrm{full}}(\bq)$ denote the (per-query) information carried by full attention, for a size-$n$ selector $\mathcal S\subseteq [L]$, the retained attention mass $\tau_{\mathcal S}(\bq)$ and dropped attention mass $\delta_{\mathcal S}$ are defined as:
\begin{equation}
    \tau_{\mathcal S}(\bq)=\sum_{i\in\mathcal S}A_i(\bq),\qquad
\delta_{\mathcal S}(\bq)=1-\tau_{\mathcal S}(\bq).
\end{equation}
Let $\widetilde A_{\mathcal S}(\bq)$ be the truncated/renormalized distribution (follow the normalization of the softmax)
\begin{equation}
    \widetilde A_i(\bq)=
    \begin{cases}
    A_i(q)/\tau_{\mathcal S}(\bq), & i\in\mathcal S,\\[2pt]
    0,& \text{otherwise},
    \end{cases}
    \quad
    T_{\mathcal S}\sim \widetilde A_{\mathcal S}(\bq).
\end{equation}
Denote by $I_{\mathcal S}(\bq)$ the information of the token-sparse channel that keeps indices in $\mathcal S$ and routes $\bV_{T_{\mathcal S}}$. And let $\mathcal S^{*}(\bq)=\operatorname*{Top}_n\!\big(A(\bq)\big)$ be the top-$n$ oracle set with
\begin{equation}
\label{eq:oracle_attn_mass}
    \tau^{*}(\bq)=\tau_{\mathcal S^{*}}(\bq),\qquad \delta^{*}(\bq)=1-\tau^{*}(\bq).
\end{equation}

\begin{remark}[We bound information via the index channel]
By data processing along $(\bK,\bV)\to T\to \bV_T$, it suffices (and is conservative) to control the loss in mutual information induced by truncating $T$; this yields bounds for
$I_{\mathrm{full}}(\bq)-I_{\mathcal S}(\bq)$.
\end{remark}

\subsection{Two universal technical lemmas}
\label{sec:app_tech_lemmas}

\begin{lemma}[Total-variation of truncation]
\label{lem:tv}
For any size-$n$ selector $\mathcal S$,
\begin{equation}
    \big\|A(\cdot\,|\,\bq)-\widetilde A_{\mathcal S}(\cdot\,|\,\bq)\big\|_{\mathrm{TV}}
=\delta_{\mathcal S}(\bq).
\end{equation}

\begin{proof}
Write $P=A(\cdot|\bq)$ and $Q=\widetilde A_{\mathcal S}(\cdot|\bq)$. Then
\begin{align*}
\|P-Q\|_{\mathrm{TV}}
&=\tfrac12\sum_{i=1}^L |P_i-Q_i| \\
&=\tfrac12\Big(\sum_{i\notin\mathcal S} P_i + \sum_{i\in\mathcal S}\big|P_i-P_i/\tau_{\mathcal S}\big|\Big)\\
&=\tfrac12\big(\delta_{\mathcal S}+\delta_{\mathcal S}\big)=\delta_{\mathcal S}.
\end{align*}\end{proof}\end{lemma}

\begin{lemma}[Continuity of mutual information (MI) under TV perturbations]
\label{lem:continuity}
Let $\bX=(\bK,\bV)$ and consider two channels from $\bX$ to an output random variable $\bY \equiv \bV_T$ and $\bY' \equiv \bV_{T_{\mathcal S}}$, where for each $\bx=(\bk,\bv)$, the conditional law $T|\bX=\bx$ is $A(\cdot\,|\,\bq,\bk)$ and
$T_{\mathcal S}|\bX=\bx$ is the truncated/renormalized $\widetilde A_{\mathcal S}(\cdot\,|\,\bq,\bk)$.
Assume the (uniform) total-variation gap
\begin{align}
    \sup_{\bx}\, &\big\| \mathsf P(T\in\cdot\,|\,\bX=x)-\mathsf P(T_{\mathcal S}\in\cdot\,|\,\bX=x)\big\|_{\mathrm{TV}} \nonumber \\
    &=\delta_{\mathcal S}(\bq)\eqqcolon \delta \in[0,1].
\end{align}
Then, for every input law $\mathsf P_\bX$,
\[
\big|I(\bX;\bY)-I(\bX;\bY')\big|
\;\le\;
2\Big[h_{\mathrm b}(\delta)+\delta\log L\Big],
\]
where $I$ is the MI measure, $h_{\mathrm b}$ is the binary entropy function. Consequently,
\[
0\le I_{\mathrm{full}}(\bq)-I_{\mathcal S}(\bq)
\;\le\;
2\Big[h_{\mathrm b}(\delta_{\mathcal S}(\bq))+\delta_{\mathcal S}(\bq)\log L\Big].
\]
\end{lemma}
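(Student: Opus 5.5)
The plan is to reduce the claim to a standard continuity bound for entropy and then apply it twice. I will use the chain-rule form $I(\bX;\bY)=H(\bY)-H(\bY\,|\,\bX)$, and similarly for $\bY'$. The triangle inequality then gives
\[
|I(\bX;\bY)-I(\bX;\bY')|\le |H(\bY)-H(\bY')|+|H(\bY|\bX)-H(\bY'|\bX)|.
\]
Each term will contribute at most $h_{\mathrm b}(\delta)+\delta\log L$, which produces the factor $2$ in the bound. Since $\bY=\bV_T$ and $\bY'=\bV_{T_{\mathcal S}}$ take at most $L$ distinct values for a fixed $\bx$, I will work with alphabets of size at most $L$. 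When the value vectors are distinct, I will work on the index variables $T,T_{\mathcal S}$ directly, in line with the Remark on bounding via the index channel. I will be careful here, because a deterministic relabeling preserves the relevant entropies only when it is injective.

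The key tool is the Fano-type continuity inequality. If $P,Q$ are laws on an alphabet of size $L$ with $\|P-Q\|_{\mathrm{TV}}\le\delta$, then
\[
|H(P)-H(Q)|\le h_{\mathrm b}(\delta)+\delta\log(L-1)\le h_{\mathrm b}(\delta)+\delta\log L.
\]
I will prove this by coupling. Take a maximal coupling $(U,U')$ with $\Pr(U\neq U')=\delta$, and let $E=\mathbf 1\{U\neq U'\}$. Then
\[
H(U)\le H(U,E)\le H(U')+H(E)+H(U\,|\,U',E)\le H(U')+h_{\mathrm b}(\delta)+\delta\log(L-1).
\]
The same argument holds with $U$ and $U'$ swapped. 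Lemma~\ref{lem:tv} supplies the TV value $\delta_{\mathcal S}(\bq)$ for each fixed $\bx$.

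The conditional term is then straightforward. For each $\bx$ the conditional laws are within TV distance $\delta$ by the uniform assumption, so the inequality above applies pointwise. Averaging over $\mathsf P_\bX$ preserves the bound. The marginal term follows because mixing does not increase TV distance. Applying the convex combination $\int \mathsf P(\cdot|\bx)\,d\mathsf P_\bX$ to both channels keeps the marginals within $\delta$, so the same inequality applies to $H(\bY)-H(\bY')$. One subtlety is that $h_{\mathrm b}$ is not monotone on $[0,1]$. I will either invoke the domain restriction of the footnote, or use the exact TV value $\delta$ in the coupling so that monotonicity is not needed. Summing the two terms gives the first display.

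The ``consequently'' part needs the lower bound $0\le I_{\mathrm{full}}-I_{\mathcal S}$. This is the main obstacle, because continuity alone yields only an absolute-value bound. I would first try to interpret $I_{\mathcal S}$ as the information of a channel obtained by processing the full index channel. One option is to treat the truncated index as a function of $T$ together with side randomness independent of $\bX$, and then apply data processing $\bX\to T\to T_{\mathcal S}$. Failing that, I would adopt the per-query convention implicit in the Remark: define $I_{\mathcal S}$ through the index channel restricted to $\mathcal S$, which makes it a coarsening of the full channel and hence no larger. Once nonnegativity is in place, the upper bound is exactly the first display evaluated at $\delta=\delta_{\mathcal S}(\bq)$.
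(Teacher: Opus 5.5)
Your route is the paper's route: the chain rule $I=H(\bY)-H(\bY|\bX)$, the triangle inequality, and the Fano-type continuity bound applied once pointwise in $\bx$ and once to the marginals. It has a genuine gap at the same place as the paper's Step~3. The bound $|H(P)-H(Q)|\le h_{\mathrm b}(\delta)+\delta\log L$ requires $P,Q$ to live on a common alphabet of size at most $L$. That holds for the conditional laws. The marginal of $\bY=\bV_T$, however, lives on $\bigcup_{\bx}\{\bv_1,\dots,\bv_L\}$, which grows with the support of $\bV$. ``Mixing does not increase TV'' gives you the TV hypothesis but not the cardinality hypothesis. The paper's footnote, which claims a per-component cardinality bound suffices, is exactly this false step. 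In fact the first display is false as stated. Take $L=2$, $\bK$ fixed with $A(\cdot\,|\,\bq,\bk)=(1-\delta,\delta)$, $\mathcal S=\{1\}$, $\bv_1=0$ fixed, and $\bv_2$ uniform on $M$ distinct nonzero vectors. Every per-$\bx$ TV equals $\delta$ and $\bY'\equiv 0$. Yet $I(\bX;\bY)=[h_{\mathrm b}(\delta)+\delta\log M]-h_{\mathrm b}(\delta)=\delta\log M$, which exceeds $2[h_{\mathrm b}(\delta)+\delta\log 2]$ for large $M$. Passing to the index variables does not rescue this. The relabeling $t\mapsto\bv_t$ depends on $\bx$, so even when it is injective for every $\bx$ it preserves $H(\bY|\bX)$ but not $H(\bY)$; in the example $I(\bX;T)=I(\bX;T_{\mathcal S})=0$. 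For the same reason $\bX\to T\to\bV_T$ is not a Markov chain, contrary to the paper's Remark. What your argument does prove is the index-channel statement $|I(\bX;T)-I(\bX;T_{\mathcal S})|\le 2[h_{\mathrm b}(\delta)+\delta\log L]$. It also proves the stated bound under an extra hypothesis that $\bY$ ranges over a fixed set of at most $L$ values, e.g.\ $\bV$ deterministic.

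For the lower bound, you are right that continuity says nothing about sign, but your data-processing route fails. Generating $T_{\mathcal S}\sim\widetilde A_{\mathcal S}(\cdot\,|\,\bq,\bk)$ from $T$ on the event $T\notin\mathcal S$ requires knowing $A(\cdot\,|\,\bq,\bk)$, i.e.\ knowing $\bX$. Renormalization can genuinely create information. Take $L=3$, $\mathcal S=\{1,2\}$, $\bV$ fixed with distinct entries, and $\bK$ uniform on two keys giving $A\approx(\eta,0,1-\eta)$ and $A\approx(0,\eta,1-\eta)$. Then $I(\bX;\bY)\approx\eta\log 2$ while $I(\bX;\bY')\approx\log 2$, so $I_{\mathrm{full}}-I_{\mathcal S}<0$. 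Nonnegativity is therefore false for the truncated/renormalized channel the lemma describes, and the paper's Step~5 merely asserts it ``by construction.'' Your fallback is the erasure channel: $T'_{\mathcal S}=T$ on $\{T\in\mathcal S\}$ and $\perp$ otherwise. This is a genuine garbling of $T$, and by the chain rule alone it gives $0\le I(\bX;T)-I(\bX;T'_{\mathcal S})=I(\bX;T\,|\,T'_{\mathcal S})\le\delta\log(L-|\mathcal S|)$. But it is a different channel from the renormalized one that token-sparse attention actually computes. Adopting it means proving a modified lemma, and you should state it as such.
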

\begin{proof}
\label{proof:MI_bound}
\textbf{Step 1: Per-$\bx$ support size and induced kernels.}
Fix $\bx=(\bk,\bv)$. The map $t\mapsto \bv_t$ is deterministic (given $\bx$), hence
$\bY|\bX=\bx$ and $\bY'|\bX=\bx$ are obtained by pushing forward the probability mass function (pmfs) $A(\cdot\,|\,\bq,\bk)$ and $\widetilde A_{\mathcal S}(\cdot\,|\,\bq,\bk)$ through a map with image of size at most $L$:
\[
\mathcal \bY_\bx \;=\;\{\bv_1,\dots,\bv_L\},\qquad |\mathcal \bY_\bx|\le L.
\]
Let $W_x$ and $\widetilde W_x$ denote the conditional pmfs of $\bY|\bX=\bx$ and $\bY'|\bX=\bx$,
respectively. Total variation cannot increase under measurable mappings, so
\begin{equation}
\label{eq:tv-per-x}
\|\bW_\bx-\widetilde \bW_\bx\|_{\mathrm{TV}}
\;\le\;
\|A(\cdot\,|\,\bq,\bk)-\widetilde A_{\mathcal S}(\cdot\,|\,\bq,\bk)\|_{\mathrm{TV}}
\;\le\; \delta.
\end{equation}

\textbf{Step 2: Continuity of (conditional) entropy at fixed $x$.}
For pmfs $P,Q$ on a finite alphabet of size $m$ with
$\|P-Q\|_{\mathrm{TV}}\le \varepsilon$, the classical continuity bound for the
Shannon entropy gives
\begin{align}
\label{eq:entro-cont}
|H(P)-H(Q)|
&\;\le\;
h_{\mathrm b}(\varepsilon)+\varepsilon\log(m-1)  \nonumber \\
&\;\le\; 
h_{\mathrm b}(\varepsilon)+\varepsilon\log m .
\end{align}
Applying \eqref{eq:entro-cont} to $(P,Q)=(\bW_\bx,\widetilde \bW_\bx)$ with
$m=|\mathcal \bY_\bx|\le L$ and $\varepsilon\le \delta$ from \eqref{eq:tv-per-x}, we obtain
\begin{equation}
\label{eq:cond-entropy-bound}
\big| H(\bY|\bX=\bx)-H(\bY'|\bX=\bx)\big|
\;\le\;
h_{\mathrm b}(\delta)+\delta\log L .
\end{equation}
Averaging over $\bX$ yields
\begin{equation}
\label{eq:cond-entropy-global}
\big| H(\bY|\bX)-H(\bY'|\bX)\big|
\;\le\;
h_{\mathrm b}(\delta)+\delta\log L .
\end{equation}

\textbf{Step 3: Continuity of the \emph{marginal} entropy.}
Let $P_\bY$ and $P_{\bY'}$ be the marginals induced by $\mathsf P_\bX$ and the
kernels $\{\bW_\bx\},\{\widetilde \bW_\bx\}$:
$P_\bY(\cdot)\!=\!\sum_\bx \mathsf P_\bX(\bx) \bW_\bx(\cdot)$ and
$P_{\bY'}(\cdot)\!=\!\sum_\bx \mathsf P_\bX(\bx) \widetilde \bW_\bx(\cdot)$.
Then
\begin{align*}
\|P_\bY-P_{\bY'}\|_{\mathrm{TV}}
&=\frac12\sum_{\by}\left| \sum_\bx \mathsf P_\bX(\bx)\big(\bW_\bx(\by)-\widetilde \bW_\bx(\by)\big)\right|
\\
&\le \frac12 \sum_\bx \mathsf P_\bX(\bx)\sum_\by |\bW_\bx(\by)-\widetilde \bW_\bx(\by)| \\
&= \sum_\bx \mathsf P_\bX(\bx)\,\|\bW_\bx-\widetilde \bW_\bx\|_{\mathrm{TV}}\le \delta,
\end{align*}
where the last step uses \eqref{eq:tv-per-x}. Since the (per-$\bx$) images have
size $\le L$, the union of supports $\mathrm{supp}(P_\bY)\cup\mathrm{supp}(P_{\bY'})$ also has effective cardinality at most $L$ for the purpose of the continuity bound.\footnote{It
suffices that each mixture component lives on an alphabet of size $\le L$; the
bound \eqref{eq:entro-cont} only needs a uniform cardinality upper bound.}
Therefore, applying \eqref{eq:entro-cont} to $(P_\bY,P_{\bY'})$ yields
\begin{equation}
\label{eq:marg-entropy-bound}
\big| H(\bY)-H(\bY')\big|
\;\le\;
h_{\mathrm b}(\delta)+\delta\log L .
\end{equation}

\textbf{Step 4: Mutual information difference.}
Using the chain rule $I(\bX;\bY)=H(\bY)-H(\bY|\bX)$ and combining
\eqref{eq:cond-entropy-global}–\eqref{eq:marg-entropy-bound},
\begin{align}
    &\big|I(\bX;\bY)-I(\bX;\bY')\big| \nonumber \\
    \le \, 
    &\big|H(\bY)-H(\bY')\big|+\big|H(\bY'|\bX)-H(\bY|\bX)\big| \nonumber \\
    \le \,
    &2\big[h_{\mathrm b}(\delta)+\delta\log L\big].
\end{align}
which is the desired inequality.

\textbf{Step 5: Consequence for $I_{\mathrm{full}}(\bq)-I_{\mathcal S}(\bq)$.}
By construction, the "full attention" and "token-sparse attention" channels differ only in the selector distribution (full $T$ vs.\ truncated $T_{\mathcal S}$), while the
downstream map $(\bX,T)\mapsto \bY$ is the same. Thus the loss in the information
measure realized through $\bY$ is bounded by the above calculation with
$\delta=\delta_{\mathcal S}(\bq)$, giving
\[
0\le I_{\mathrm{full}}(\bq)-I_{\mathcal S}(\bq)
\;\le\;
2\big[h_{\mathrm b}(\delta_{\mathcal S}(q))+\delta_{\mathcal S}(q)\log L\big].
\]
\end{proof}
At this point, we have completed the full derivation of the information bound for token-sparse attention and incorporated a unified measure $\delta_{\mathcal S}$, related to dropped attention mass, to quantify the magnitude of information loss.

\section{Information Bounds of Token-sparse Attention}
\label{sec:app_mib_tsa}
For the selector of the token-sparse attention, we have:

\begin{proposition}[Universal MI bounds \& KL variant]
\label{prop:universal}
For any selector $\mathcal S$,
\begin{equation}
0 \le I_{\mathrm{full}}(\bq)-I_{\mathcal S}(\bq)
\le 2\!\left[h_{\mathrm b}\big(\delta_{\mathcal S}(\bq)\big)+\delta_{\mathcal S}(\bq)\log L\right].
\tag{U1}
\end{equation}
Moreover,
\begin{equation}
I_{\mathcal S}(q)\;\ge\; I_{\mathrm{full}}(q) - \log\!\frac{1}{\tau_{\mathcal S}(q)}.
\tag{U2}
\end{equation}
\begin{proof}
The first inequality is Lemma~\ref{lem:continuity}. For (U2),
\[
D_{\mathrm{KL}}\!\big(\widetilde A_{\mathcal S}\,\|\,A\big)
=\sum_{i\in\mathcal S}\frac{A_i}{\tau_{\mathcal S}}
\log\frac{A_i/\tau_{\mathcal S}}{A_i}
= \log\frac{1}{\tau_{\mathcal S}}.
\]
A variational (Donsker–Varadhan) representation of MI plus data processing upper bounds the MI drop by this KL.
\end{proof}
\end{proposition}

\subsection{\texorpdfstring{Oracle top-$k$ selection}{Oracle top-k selection}}

\begin{theorem}[Oracle top-$k$ information bound]
\label{thm:oracle}
Let $\mathcal S^{*}(\bq)=\operatorname*{Top}_n\!\big(A(\bq)\big)$ and $\delta^{*}(\bq)=1-\tau^{*}(\bq)$. Then
\begin{equation}
0 \le I_{\mathrm{full}}(\bq)-I_{\mathcal S^{*}}(\bq)
\le 2\!\left[h_{\mathrm b}\big(\delta^{*}(\bq)\big)+\delta^{*}(\bq)\log L\right],
\tag{O1}
\end{equation}
and
\begin{equation}
I_{\mathcal S^{*}}(\bq)\ge I_{\mathrm{full}}(\bq)-\log\!\frac{1}{\tau^{*}(\bq)}.
\tag{O2}
\end{equation}
\begin{proof}
Among all $|S|{=}n$, $\mathcal S^{*}$ maximizes $\tau_{\mathcal S}$ and minimizes $\delta_{\mathcal S}$ by additivity of $\sum_{i\in S}A_i$. Apply Proposition~\ref{prop:universal} with $\mathcal S{=}\mathcal S^{*}$.
\end{proof}
\end{theorem}

\subsection{Post–hoc sparsity}

Post–hoc methods construct a \emph{surrogate} probability vector $\widehat A_D(\bq)$ from posterior side information $D$ and select
\[
\mathcal S_D(\bq)=\operatorname*{Top}_n\!\big(\widehat A_D(\bq)\big).
\]
Define the \emph{posterior-bias (mass gap)} $\beta_D$
\begin{equation}
\beta_D(\bq)\;\stackrel{\mathrm{def}}{=}\;\tau^{*}(\bq)-\tau_{\mathcal S_D}(\bq)
=\delta_{\mathcal S_D}(\bq)-\delta^{*}(\bq)\;\ge 0.
\tag{PB}
\end{equation}
Measure the surrogate misfit by
\begin{equation}
\varepsilon_D(\bq)\;\stackrel{\mathrm{def}}{=}\;\tfrac12\big\|A(\bq)-\widehat A_D(\bq)\big\|_1
=\big\|A(\bq)-\widehat A_D(\bq)\big\|_{\mathrm{TV}}.
\end{equation}

\begin{lemma}[Mis-scoring$\Rightarrow$mass loss]
\label{lem:beta}
Let $S^{*}=\operatorname*{Top}_n(A)$ and $S_D=\operatorname*{Top}_n(\widehat A_D)$. Then
\begin{equation}
\beta_D(\bq){\le} 2\,\varepsilon_D(\bq)
{\Longleftrightarrow}
\delta_{\mathcal S_D}(\bq){\le} \delta^{*}(\bq){+}2\,\varepsilon_D(\bq).
\tag{MassLoss}
\end{equation}
\begin{proof}
For any $S\subseteq[L]$, $|A(S)-\widehat A_D(S)|\le \varepsilon_D$. Since $\widehat A_D(S_D)\ge \widehat A_D(S^{*})$,
\[
A(S_D)\ge \widehat A_D(S_D)-\varepsilon_D \ge \widehat A_D(S^{*})-\varepsilon_D \ge A(S^{*})-2\varepsilon_D.
\]
Thus $\tau_{\mathcal S_D}\ge \tau^{*}-2\varepsilon_D$ and the claim follows.
\end{proof}
\end{lemma}

\begin{theorem}[Unified post–hoc information bound]
\label{thm:posthoc}
For any post–hoc selector $\mathcal S_D$, let $\delta_{\varepsilon_D}(\bq)=\delta^{*}(\bq)+2\varepsilon_D(\bq)$,
\begin{align}
I_{\mathrm{full}}(q)-I_{\text{post}}(q) \le 2\!\left[
h_{\mathrm b}\!\big(\delta_{\varepsilon_D}(\bq)\big)
+\delta_{\varepsilon_D}(\bq)\log L
\right],
\tag{P1}
\end{align}
and
\begin{equation}
I_{\text{post}}(\bq) \ge I_{\mathrm{full}}(\bq) - \log\!\frac{1}{1-\delta_{\varepsilon_D}(\bq)}.
\tag{P2}
\end{equation}
\begin{proof}
Combine Lemma~\ref{lem:beta} with Proposition~\ref{prop:universal} applied to $\mathcal S_D$ and the identity $\delta_{\mathcal S_D}=\delta^{*}+\beta_D$.
For (P2), plug $\tau_{\mathcal S_D}=1-\delta_{\mathcal S_D}\ge 1-\delta_{\varepsilon_D}=1-\delta^{*}-2\varepsilon_D$ into (U2).
\end{proof}
\end{theorem}

\paragraph{Token-Dropping Oracle (TDO)}
TDO generally uses posterior statistics (e.g., age, cumulative attention), so it naturally confines to (P1). Since the statistics may not be strictly relevant to the current $\bq$, the surrogate error $\varepsilon_{D}(q)$ can be large when attention patterns shift, inflating the bias $2\varepsilon_{D}$.

\paragraph{Query-Aware Approximation (QAA)}
QAA builds a query-conditioned compressed representation $B=B(\bK\,|\,\bq,C)$ (e.g., low-rank/sketch/ANN) that approximates oracle scores based on condition $C$. Let $\widehat A_{\text{QAA}}(\bq)$ be the surrogate. Assume logit approximation with the supremum norm $\| \|_\infty$ is
\begin{equation}
\|a(\bq)-\widehat a(\bq)\|_\infty \le \eta_C(\bq),\,
a_i(q)=\bq^\top \bk_i/\sqrt d.
\tag{LogitError}
\end{equation}
\begin{lemma}[Softmax Lipschitz from logits to probabilities]
\label{lem:softmaxLip}
For any $a,\widehat a\in\mathbb R^L$, $\|\mathrm{softmax}(a)-\mathrm{softmax}(\widehat a)\|_1 \le 2\,\|a-\widehat a\|_\infty$.
\begin{proof}
By the mean-value theorem,
$\mathrm{softmax}(\widehat a)-\mathrm{softmax}(a)
=\left(\int_0^1 J(a+t(\widehat a-a))\,dt\right)(\widehat a-a)$,
where $J$ is the softmax Jacobian with row $\nabla s_i = s_i(\mathbf e_i - s)$.
One checks $\|J\|_{1\to 1}\le 2$ (each column sums to at most $2\max_i s_i(1-s_i)\le 1/2$ in absolute value and the operator norm is attained by a signed vector; the resulting tight constant is $2$), hence the claim holds.
\end{proof}
\end{lemma}

Therefore,
\begin{align}
    \varepsilon_{\text{QAA}}(\bq)&=\tfrac12\|\widehat A_{\text{QAA}}(\bq)-A(\bq)\|_1 \nonumber \\
    & \le \|a(\bq)-\widehat a(\bq)\|_\infty
    \le \eta_C(\bq).
\tag{QAA-eps}
\end{align}
Let $\delta_\eta(\bq)=\delta^{*}(\bq)+2\eta_C(\bq)$, Theorem~\ref{thm:posthoc} yields
\begin{align}
\label{QAA}
    I_{\mathrm{full}}(q)-I_{\text{QAA}}(q) \le 2\!\left[ h_{\mathrm b}\!\big(\delta_\eta(\bq)\big)+\big(\delta_\eta(\bq)\big)\log L 
\right].
\tag{QAA}
\end{align}
Compare two post-hoc sparsity methods, since $\eta_C(q)$ can be made small with adequate compression rank/index quality, QAA typically incurs a \emph{smaller} posterior bias than TDO. 

\subsection{Pre–hoc sparsity}


\begin{definition}[Pre--hoc selector with controlled mass error]
\label{def:prehoc-error}
A pre--hoc selector $\mathcal S_{\mathrm{pre}}(\bK,\bq;\zeta)$ (possibly using an
theoretical error term $\zeta$) is said to have \emph{mass error} $\beta_{\mathrm{th}}(\bq)\ge 0$ if
\begin{align}
& \tau_{\mathrm{pre}}(\bq)
\;\stackrel{\mathrm{def}}{=}\;
\sum_{i\in \mathcal S_{\mathrm{pre}}(\bq)} A_i(\bq) \;\ge\; \tau^{*}(\bq) - \beta_{\mathrm{th}}(\bq), \nonumber
\\ & \text{equivalently}\quad
\delta_{\mathrm{pre}}(\bq)\;\le\;\delta^{*}(\bq)+\beta_{\mathrm{th}}(\bq),
\label{eq:prehoc-mass-error}
\end{align}
where $\tau^{*}(\bq)$ and $\delta^{*}(\bq)$ denote the oracle retained and dropped mass as stated in~\eqref{eq:oracle_attn_mass}. We use either a \emph{pointwise} control
$\beta_{\mathrm{th}}(\bq)\le \varepsilon_{\mathrm{th}}$ a.s., or an \emph{average} control
$\mathbb E_q[\beta_{\mathrm{th}}(\bq)]\le \overline\varepsilon_{\mathrm{th}}$,
with $\varepsilon_{\mathrm{th}},\overline\varepsilon_{\mathrm{th}}\ll 1$ to constrain $\zeta$.
\end{definition}

\begin{theorem}[Information bound for pre--hoc]
\label{thm:prehoc-error}
For every query $q$, let $\delta_{\beta_{\mathrm th}}(\bq)=\delta^{*}(\bq)+\beta_{\mathrm{th}}(\bq)$,
\begin{align}
    I_{\mathrm{full}}(\bq)-I_{\mathrm{pre}}(\bq) \le 2\Big[h_{\mathrm b}\!\big(\delta_{\beta_{\mathrm th}}(\bq)\big)+\delta_{\beta_{\mathrm th}}(\bq)\log L \Big].
\label{eq:prehoc-pointwise}
\end{align}
Consequently, under the pointwise control $\beta_{\mathrm{th}}(\bq)\le \varepsilon_{\mathrm{th}}$,
\begin{align}
&I_{\mathrm{full}}(\bq)-I_{\mathrm{pre}}(\bq) \nonumber \\
\le \,
&2\Big[
h_{\mathrm b}\!\big(\delta^{*}(\bq)+\varepsilon_{\mathrm{th}}\big)
+\big(\delta^{*}(\bq)+\varepsilon_{\mathrm{th}}\big)\log L
\Big].
\label{eq:prehoc-uniform}
\end{align}
Averaging over the query distribution and using $\mathbb E[\beta_{\mathrm{th}}]\le \overline\varepsilon_{\mathrm{th}}$,
\begin{align}
&\mathbb E_\bq\!\big[I_{\mathrm{full}}(\bq)-I_{\mathrm{pre}}(\bq)\big] \nonumber \\
\le \,
&2\Big[
h_{\mathrm b}\!\big(\mathbb E_\bq[\delta^{*}(\bq)]+\overline\varepsilon_{\mathrm{th}}\big)
+\big(\mathbb E_\bq[\delta^{*}(\bq)]+\overline\varepsilon_{\mathrm{th}}\big)\log L
\Big],
\label{eq:prehoc-expected}
\end{align}
where Jensen's inequality for the concave $h_{\mathrm b}$ is used. Moreover, the KL--based variant holds pointwise:
\begin{equation}
I_{\mathrm{pre}}(\bq)
\;\ge\;
I_{\mathrm{full}}(\bq)\;-\;\log\!\frac{1}{1-\delta^{*}(\bq)-\beta_{\mathrm{th}}(\bq)}.
\label{eq:prehoc-KL}
\end{equation}
\end{theorem}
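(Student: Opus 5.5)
The plan is to obtain all four claims of Theorem~\ref{thm:prehoc-error} by feeding the mass-error condition of Definition~\ref{def:prehoc-error} into the universal bounds of Proposition~\ref{prop:universal}. Monotonicity of the bound function $g(\delta)=2[h_{\mathrm b}(\delta)+\delta\log L]$ and concavity of $h_{\mathrm b}$ carry the rest.

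\textbf{Pointwise bound \eqref{eq:prehoc-pointwise}.} Apply (U1) to $\mathcal S=\mathcal S_{\mathrm{pre}}(\bq)$, which gives $I_{\mathrm{full}}(\bq)-I_{\mathrm{pre}}(\bq)\le g(\delta_{\mathrm{pre}}(\bq))$. By \eqref{eq:prehoc-mass-error}, $\delta_{\mathrm{pre}}(\bq)\le\delta_{\beta_{\mathrm{th}}}(\bq)$. Next I check that $g$ is nondecreasing on the restricted domain of the footnote: $g'(\delta)=2[\log\frac{1-\delta}{\delta}+\log L]\ge0$ exactly when $\delta\le L/(1+L)$. Hence $g(\delta_{\mathrm{pre}})\le g(\delta_{\beta_{\mathrm{th}}})$.

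\textbf{Uniform bound \eqref{eq:prehoc-uniform}.} Under the pointwise control $\beta_{\mathrm{th}}\le\varepsilon_{\mathrm{th}}$, the same monotonicity replaces $\delta_{\beta_{\mathrm{th}}}$ by $\delta^*+\varepsilon_{\mathrm{th}}$.

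\textbf{Averaged bound \eqref{eq:prehoc-expected}.} Take expectations of \eqref{eq:prehoc-pointwise}. The function $g$ is concave, since $h_{\mathrm b}$ is concave and the remaining term is linear, so Jensen gives $\mathbb E[g(\delta_{\beta_{\mathrm{th}}})]\le g(\mathbb E[\delta^*]+\mathbb E[\beta_{\mathrm{th}}])$. Then $\mathbb E[\beta_{\mathrm{th}}]\le\overline\varepsilon_{\mathrm{th}}$ and monotonicity of $g$ finish the step.

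\textbf{KL variant \eqref{eq:prehoc-KL}.} Use (U2) with $\tau_{\mathrm{pre}}\ge 1-\delta^*-\beta_{\mathrm{th}}$. Because $\log(1/\tau)$ is decreasing in $\tau$, we get $\log\frac1{\tau_{\mathrm{pre}}}\le\log\frac{1}{1-\delta^*-\beta_{\mathrm{th}}}$, and rearranging gives the claim. This mirrors the proof of (P2) in Theorem~\ref{thm:posthoc}.

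\textbf{Main obstacle.} The delicate point is the domain bookkeeping. Monotonicity requires the arguments $\delta^*+\beta_{\mathrm{th}}$, $\delta^*+\varepsilon_{\mathrm{th}}$ and $\mathbb E[\delta^*]+\overline\varepsilon_{\mathrm{th}}$ to stay in $(0,L/(1+L)]$. I would state this as part of the standing restriction: when the sum exceeds the threshold, cap it at $L/(1+L)$, where $g$ attains its maximum, so the inequalities still hold. Likewise, \eqref{eq:prehoc-KL} needs $\delta^*+\beta_{\mathrm{th}}<1$; otherwise the right side is $-\infty$ and the claim is trivial. Jensen in the expected bound also needs $g$ concave on the entire range of $\delta_{\beta_{\mathrm{th}}}$, which holds on all of $[0,1]$.
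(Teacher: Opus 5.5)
Your proposal is correct and matches the paper's proof step for step. The paper also feeds $\delta_{\mathrm{pre}}\le\delta^*+\beta_{\mathrm{th}}$ into (U1), substitutes $\varepsilon_{\mathrm{th}}$ for the uniform version, and applies Jensen to get the averaged bound, splitting $g$ into the concave $h_{\mathrm b}$ plus a linear term, which is equivalent to your concavity of $g$; it then specializes (U2) for the KL variant. Your derivative check of monotonicity on $(0,L/(1+L)]$ and your handling of the domain conditions are steps the paper leaves implicit through its footnote, so they add care without changing the argument.
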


\begin{proof}
By Definition~\ref{def:prehoc-error},
\[
\delta_{\mathrm{pre}}(\bq)=1-\tau_{\mathrm{pre}}(\bq)
\;\le\;
1-\big(\tau^{*}(\bq)-\beta_{\mathrm{th}}(\bq)\big)
=
\delta^{*}(q)+\beta_{\mathrm{th}}(q).
\]
Apply the universal continuity bound for any selector (\textbf{Proposition}~(U1)) with
$\delta_{\mathcal S}$ replaced by $\delta_{\mathrm{pre}}(\bq)$:
\begin{align}
    I_{\mathrm{full}}(\bq)-I_{\mathrm{pre}}(\bq)
\le
2\Big[h_{\mathrm b}\big(\delta_{\mathrm{pre}}(\bq)\big)
+\delta_{\mathrm{pre}}(\bq)\log L\Big] \nonumber \\
\le
2\Big[
h_{\mathrm b}\!\big(\delta^{*}(\bq)+\beta_{\mathrm{th}}(\bq)\big)
+\big(\delta^{*}(\bq)+\beta_{\mathrm{th}}(\bq)\big)\log L
\Big], \nonumber 
\end{align}
which is \eqref{eq:prehoc-pointwise}. The uniform version \eqref{eq:prehoc-uniform}
follows by substituting $\beta_{\mathrm{th}}(\bq)\le \varepsilon_{\mathrm{th}}$.
For the expectation bound \eqref{eq:prehoc-expected}, take expectations of the right‑Hand side of \eqref{eq:prehoc-pointwise}, we use linearity for the $\log L$ term and Jensen's inequality $\,\mathbb E[h_{\mathrm b}(\bX)]\le h_{\mathrm b}(\mathbb E[\bX])$ for concave $h_{\mathrm b}$. Finally, the KL variant \eqref{eq:prehoc-KL} is the direct specialization of (U2) with
$\tau_{\mathrm{pre}}(\bq)\ge 1-\delta^{*}(\bq)-\beta_{\mathrm{th}}(\bq)$.
\end{proof}

From the perspective of error expansion, let $g(\delta)=2\big(h_{\mathrm b}(\delta)+\delta\log L\big)$.
For fixed $\delta^{*}(\bq)$ and small $\beta_{\mathrm{th}}(\bq)$,
a first-order Taylor expansion gives
\begin{align}
    & g\big(\delta^{*}(\bq)+\beta_{\mathrm{th}}(\bq)\big) = g\big(\delta^{*}(\bq)\big)
    + \nonumber \\
& 2\Big(\log\frac{1-\delta^{*}(\bq)}{\delta^{*}(\bq)}+\log L\Big)\beta_{\mathrm{th}}(\bq)
+ o\!\big(\beta_{\mathrm{th}}(\bq)\big), \nonumber
\end{align}
so the additional loss induced by the theoretical error is approximately linear in $\beta_{\mathrm{th}}(\bq)$ with slope
$2\log\!\big(\tfrac{L(1-\delta^{*})}{\delta^{*}}\big)$.
As $\beta_{\mathrm{th}}\!\to 0$ we recover the original pre--hoc/oracle bound
(with $\beta_{\mathrm{th}}\equiv 0$).

Therefore, when the theoretical error is sufficiently small, our proposed pre-hoc sparsity approach demonstrates a definitive performance advantage over post-hoc sparsity methods.

\bibliography{references}
\bibliographystyle{IEEEtran}
\begin{IEEEbiography}[{\includegraphics[width=1in,height=1.25in,clip,keepaspectratio]{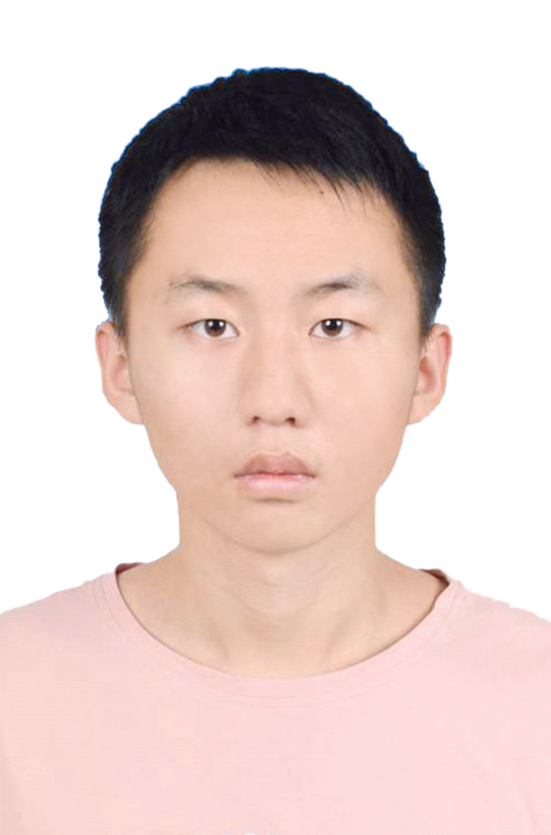}}]{Yifei Gao} is pursuing the master degree at University College London. He previously obtained his B.Eng. degree from the College of Software Engineering at the University of Electronic Science and Technology of China in 2024. His current research interests encompass large language models, 3D reconstruction, and diffusion models.
\end{IEEEbiography}

\vspace{8pt}

\vspace{-33pt}

\begin{IEEEbiography}[{\includegraphics[width=1in,height=1.25in,clip,keepaspectratio]{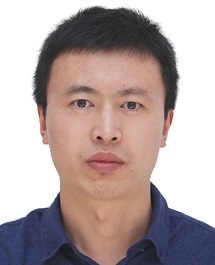}}]{Lei Wang} (Senior Member, IEEE) received the Ph.D. degree in Electrical Engineering from Xidian University, Xi'an, China, in 2010. From 2011 to 2012, he was with Huawei Technologies Company Ltd. From 2014 to 2015, he was with the Department of Embedded Systems Engineering, Incheon National University, as a Postdoctoral Fellow. He is currently an Associate Professor with the Shenzhen Institutes of Advanced Technology, Chinese Academy of Sciences, Shenzhen, China. He has authored or coauthored more than 70 papers in conferences and journals, including IEEE TRANSACTIONS ON CIRCUITS AND SYSTEMS FOR VIDEO TECHNOLOGY, IEEE TRANSACTIONS ON CYBERNETICS, IEEE TRANSACTIONS ON MULTIMEDIA, IEEE SIGNAL PROCESSING LETTERS, and NAACL. His research interests include image processing, transforms, machine learning, computer vision, 3-D reconstruction, and robotics.
\end{IEEEbiography}

\vspace{8pt}

\vspace{-33pt}

\begin{IEEEbiography}[{\includegraphics[width=1in,height=1.25in,clip,keepaspectratio]{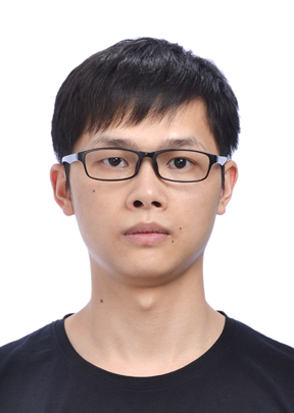}}]{Rong-Cheng Tu}
received the B.S. degree in 2018 and the Ph.D. degree in 2023 from the Beijing Institute of Technology, Beijing, China. 
He is currently a Research Fellow with Nanyang Technological University (NTU), Singapore. 
His research interests include deep learning, information retrieval, and Large Language Model efficiency.
\end{IEEEbiography}

\vspace{8pt}

\vspace{-33pt}

\begin{IEEEbiography}[{\includegraphics[width=1in,height=1.25in,clip,keepaspectratio]{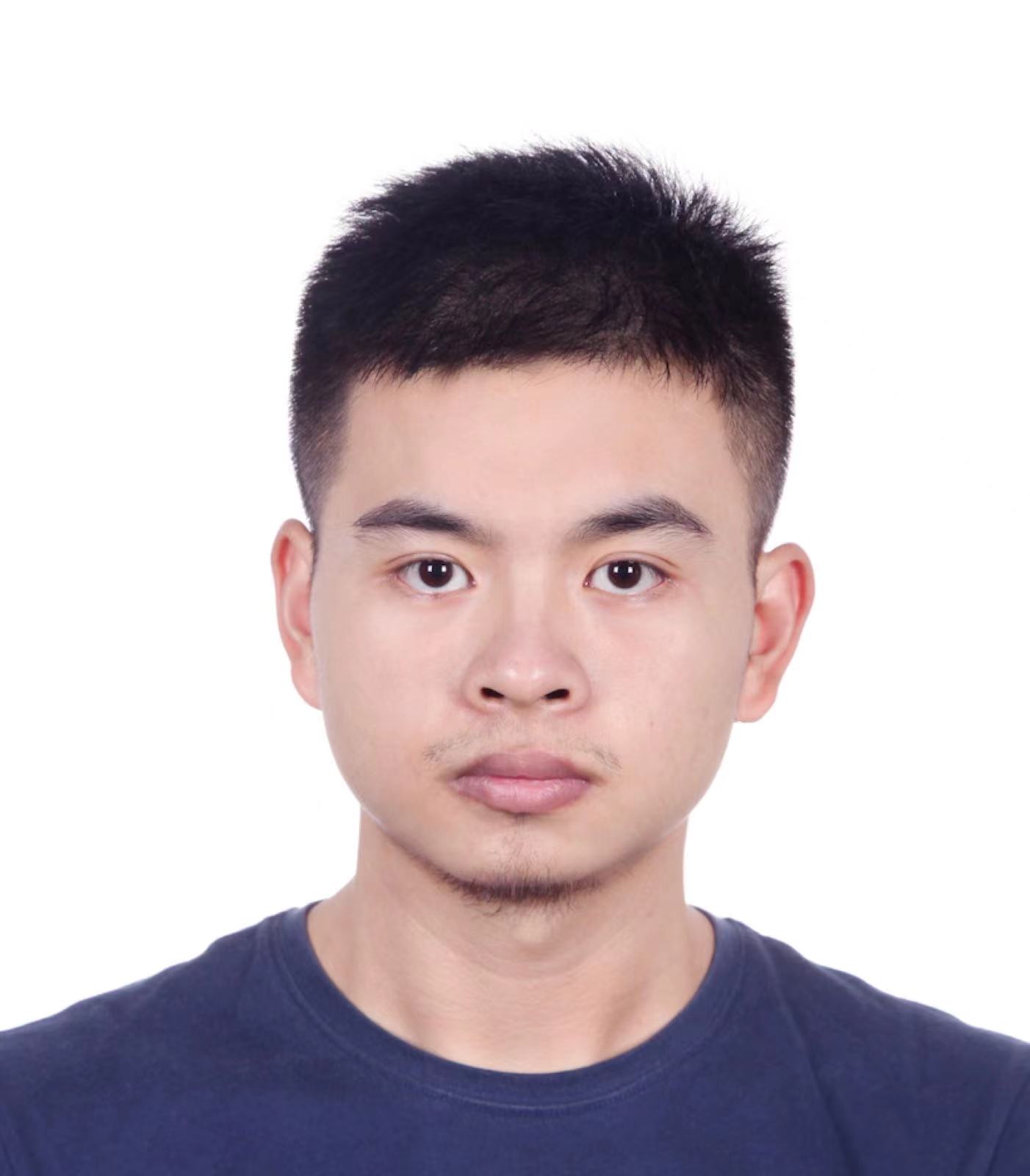}}]{Qixin Zhang}
received his B.S. degree from the University of Science and Technology of China in 2018. He subsequently earned his Ph.D. degree in the School of Data Science at City University of Hong Kong in 2024. Currently, he is a Research Fellow at Nanyang Technological University, Singapore. His research interests include submodular optimization, subset selection and network prune. He has published over 15 papers in top data science venues such as ICML, NeurIPS, ICLR, AISTATS, AAAI and TKDE, etc.
\end{IEEEbiography}

\vspace{8pt}

\vspace{-33pt}

\begin{IEEEbiography}[{\includegraphics[width=1in,height=1.25in,clip,keepaspectratio]{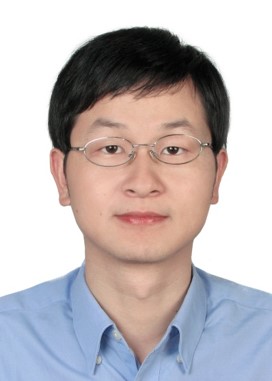}}]{Jun Cheng} (Senior Member, IEEE) received the B.Eng. and M.Eng. degrees from the University of Science and Technology of China, Hefei, China, in 1999 and 2002, respectively, and the Ph.D. degree from Chinese University of Hong Kong, Hong Kong, in 2006. He is currently a Professor with Shenzhen Institutes of Advanced Technology, Chinese Academy of Sciences, Shenzhen, China, where he is also the Director of the Laboratory for Human Machine Control. His current research interests include computer vision, robotics, machine intelligence, and control.
\end{IEEEbiography}

\vspace{8pt}

\vspace{-33pt}

\begin{IEEEbiography}[{\includegraphics[width=1in,height=1.25in,clip,keepaspectratio]{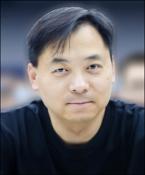}}]{Dacheng Tao} (F’15) is currently a Distinguished University Professor in the College of Computing \& Data Science at Nanyang Technological University. He mainly applies statistics and mathematics to artificial intelligence and data science, and his research is detailed in one monograph and over 200 publications in prestigious journals and proceedings at leading conferences, with best paper awards, best student paper awards, and test-of-time awards. His publications have been cited over 126K times and he has an hindex 160+ in Google Scholar. He received the 2015 and 2020 Australian Eureka Prize, the 2018 IEEE ICDM Research Contributions Award, and the 2021 IEEE Computer Society McCluskey Technical Achievement Award. He is a Fellow of the Australian Academy of Science, AAAS, ACM and IEEE.
\end{IEEEbiography}

\clearpage
\appendix

\subsection{Mean-shift Property of Token-sparse Attention}
\label{sec:mean-shift}

In this section, building upon prior research Deja Vu~\cite{liu2023deja}, we first describe the analytical foundation of our work: self-attention can be interpreted as a form of mean-shift clustering. Based on this perspective, we then explore the among-query properties of self-attention and leveraging the formal definition of TSA (as introduced in \textbf{Definition 3.1}) to construct and complete our theoretical proof.

\subsubsection{Self-Attention as Mean-Shift Clustering}

\paragraph{Implicit Clustering in Attention} The softmax weighting means that the query $\bq$ mainly focuses on a subset of keys with the largest dot-products $\bq \cdot \bk_j$. This behavior causes tokens to form clusters: tokens (keys) that are similar to a given query will receive high attention weights (“cluster members”), whereas unrelated tokens get negligible weight. In other words, each query \textbf{picks out clusters} of relevant key tokens and largely ignores the rest, which is exactly the token sparsity phenomenon~\cite{zhang2023h2o,tang2406quest} (only a few tokens significantly contribute to the attention result).



\paragraph{Mean-Shift Interpretation} 

Importantly, the Deja Vu study observed that \textbf{self-attention implements a one-step mean-shift clustering update}. Formally, fixing within one head, for the attention output $\by {=} \sum_{i} A_i (\bx_i \bW_V)$, if we ignore the $\bW_V$ projection for a moment, this computes the \emph{weighted centroid} of the token embeddings $\bx$ around the query $\bq$ (in the projected space). That is, define an \emph{attention mean} $m(\bq)$ as the weighted mean of the embeddings:
\begin{equation}
    m(q) \;=\; \frac{\sum_i \bK(\bx_i, \bq)\, \bx_i}{\sum_i \bK(\bx_i, \bq)}\,,
\end{equation}
so $\by {=} m(\bq) \bW_V$.  This is exactly the update step of the \textbf{mean-shift clustering algorithm}: $m(\bq)$ is pulling the query’s embedding toward a nearby cluster of points ${\bx_i}$ that are similar to $\bq$. In mean-shift, one would iteratively set $\bq \leftarrow m(\bq)$ to converge to the cluster center (mode of the density). In a Transformer, we usually apply just one such step per layer, but the analogy holds: \emph{each attention head nudges the query vector toward the centroid of a cluster of key vectors}.

\subsubsection{Among-Query Property}

\paragraph{Problem Setting} Consider two queries $\bq_t$ and $\bq_{t+1}$ from adjacent time steps in a sequence. Thanks to residual connections and the smooth nature of transformer dynamics, adjacent queries in a sequence tend to be close in embedding space~\cite{sun2024shadowkv,wuhshare}. We then assume as a general principle that \emph{$\bq_{t+1}$ is a small perturbation of $\bq_t$} (i.e. $\bq_{t+1} \approx \bq_t$ in $\ell_2$ norm or cosine similarity), as the among-query property only concerns the behavior of attention when queries are similar, we revise the \textbf{Among-query Property}~\ref{prop:among-query} that:

\begin{property}[Among-query Property]
\label{prop:among-query_new}
If two query vectors $\bq_t$ and $\bq_{t+1}$ are aligned enough in vector space, then the distributions of their attention weights are also very similar.
\end{property}

In other words, adjacent queries attend to largely the same cluster of key tokens. As $\bq_t$ shifts to $\bq_{t+1}$, the attention weight mass simply shifts smoothly among those keys–obeying a continuous mean-shift-like change–rather than jumping to a completely different set of keys.

\paragraph{Intuition via Mean-Shift} Think of the keys ${\bk_i}$ as data points in the embedding space. The query $\bq_t$ will put most weight on keys in some neighborhood (a “cluster”) around $\bq_t$. Now if $\bq_{t+1}$ moves slightly (say $\bq_{t+1} = \bq_t + \Delta$ for a small $\Delta$), it will still lie in roughly the same neighborhood of points. The attention kernel $\bK(x_i,\bq) = \exp(\bq\cdot \bk_i)$ is a smooth, continuous function of $q$. A slight change in $\bq$ causes only a slight change in all the similarity scores $\bq\cdot \bk_i$. Thus the weights $A_i$ shift slightly: keys that are highly ranked for $\bq_t$ will remain highly ranked for $\bq_{t+1}$, as long as $\bq_{t+1}$ hasn’t moved so far that it finds a totally different cluster.

\paragraph{Formal Derivation of the Among-Query Property}
\label{sec:among-query-def}
Let $\bK{=}\{\bk_1, \bk_2, \dots, \bk_t\}$ be the key vectors available. Consider two queries $\bq$ and $\bq'$ (think of $\bq = \bq_t$ and $\bq' = \bq_{t+1}$) such that $\bq'$ is a small perturbation of $\bq$. We will show that the attention distributions of $A(\bq)$ and $A(\bq')$ are \emph{close}, and in particular that the high-weight components of $A(\bq)$ correspond to high-weight components of $A(\bq')$. We also look at the weighted average outputs $\by(\bq)$ and $y(\bq')$. We first consider the perturbation changes in queries (e.g., $\bq$ to $\bq'$) to that of logits. For each key $\bk_i$, the score (logit) for $q$ as $a_i = \frac{1}{\sqrt{d}} \, \bq\cdot \bk_i$ and for $\bq'$ as $a'_i = \frac{1}{\sqrt{d}} \, \bq'\cdot \bk_i$. Because $\bq' = \bq + \Delta$ for a small $\Delta$, we have
$a'_j {-} a_j {=} \frac{1}{\sqrt{d}}(\bq' {-} \bq){\cdot} \bk_i {=} \frac{1}{\sqrt{d}} \Delta {\cdot} \bk_i$.

Let $\Delta_i := a'_i - a_i$. Since $| \Delta |$ is small and (typically) the keys ${\bk_i}$ have bounded norm, each $\Delta_i$ is a small real number. In other words, the entire vector of logits $a' = (a'_1,\dots,a'_t)$ is a small perturbation of $a = (a_1,\dots,a_t)$, and there exists some $\epsilon \ll 1$ such that $|\Delta_i| \le \epsilon$ for all $i$. ($\Delta_i$ might vary across $i$, but we can bound the maximum change by $\epsilon {=} \frac{|\Delta| \max_i|k_i|}{\sqrt{d}}$.) Then, combining the Softmax part in attention weights, by definition, $A_j(\bq) = \frac{e^{a_i}}{\sum_j e^{a_j}}$ and $A_j(\bq') = \frac{e^{a'*i}}{\sum_{j} e^{a'_j}}$. It will be convenient to express $A_i(\bq')$ in terms of $A(\bq)$ and the perturbations $\Delta_i$. We can write:
\begin{equation}
   A_i(\bq') = \frac{e^{a_i + \Delta_i}}{\sum_j e^{z_j + \Delta_j}} = \frac{e^{z_i}\,e^{\Delta_i}}{\sum_j e^{z_j}\,e^{\Delta_j}} = \frac{A_j(\bq)\,e^{\Delta_i}}{\sum_j A_j(\bq)\,e^{\Delta_j}}\,. 
\end{equation}
Here we use $e^{a_i}/\sum_j e^{a_j} = A_i(\bq)$ and factored $e^{\Delta_j}$ out of the denominator. Let $A^{n} = \sum_j A_j(q)\, e^{\Delta_i}$ denote the normalization factor that adjusts the weights due to the query change. Since each $\Delta_j$ is small, we can expand $A^{n}$ in a Taylor series:
\begin{equation}
\label{eq:d_taylor_exp}
\begin{aligned}
      A^{n} {=} \sum_j A_j(\bq)(1 {+} \Delta_j {+} O(\Delta_j^2)){=} 1 {+} \sum_j A_j(\bq)\Delta_j {+} O(\epsilon^2).  
\end{aligned}
\end{equation}
Here $\sum_j A_j(q)\Delta_j$ is the expectation of $\Delta$ under the original distribution $A(\bq)$. Let $\mu_\Delta {=} \sum_j A_j(q)\Delta_j$ for brevity. Combining Eq.~\eqref{eq:d_taylor_exp}, we can write $A^{n} {=} 1 {+} \mu_\Delta {+} O(\epsilon^2)$. Using this, we can approximate the new weights $A_i(\bq')$ based on the First-Order Taylor Approximation in small $\Delta$,
\begin{equation}
\begin{aligned}
\label{eq:d_first_order_taylor}
    A_i(\bq') \approx \frac{A_i(\bq)(1 {+} \Delta_i)}{1 + \mu_\Delta} \approx A_i(\bq)(1 {+} \Delta_i {-} \mu_\Delta),
\end{aligned}
\end{equation}
where second approximation used $\frac{1}{1+\mu_\Delta} \approx 1 - \mu_\Delta$ for small $\mu_\Delta$. Thus, for small query shifts, the new attention weight is approximately $A_i(\bq') {\approx} A_i(\bq) {+} A_i(\bq)(\Delta_i {-} \mu_\Delta)$. This formula reveals how the attention distribution shifts in response to $\Delta$:
\begin{itemize}[leftmargin=3mm]
    \item If $\Delta_i$ is larger than the average $\mu_\Delta$, then $A_i$ will increase ($\Delta_i - \mu_\Delta > 0$).
    \item If $\Delta_i$ is smaller than the average, $A_i$ will decrease slightly.
    \item The total change is zero-sum to first order as shown in Eq.~\eqref{eq:zero_sum}, as expected (the weights still sum to 1).
\end{itemize}
\begin{align}
\label{eq:zero_sum}
    &\sum_i [A_i(\bq') - A_i(\bq)] \approx \sum_i A_i(\bq)(\Delta_i - \mu_\Delta) \nonumber \\ 
    = &(\sum_i A_i(\bq)\Delta_i) - \mu_\Delta\sum_i A_i(\bq) =
    \mu_\Delta - \mu_\Delta(1) = 0
\end{align}

\paragraph{Transmission to Token-sparse Attention}

After proving the minimal attention score shifts in self-attention, we integrate them to the token-sparse attention defined in Sec.~\ref{sec:prob_def}. Suppose the original query $\bq$ had a set of critical keys (a cluster) $C = \{i : A_i(\bq)\}$ is among the top weights and above some importance threshold. We want to show that for $\bq'$, the high-weight keys $C' = \{i : A_i(\bq')\}$ largely overlap with $C$. From the approximation above, if $i {\in} C$ had a large original weight $A_i(\bq)$, then $A_i(\bq')$ will remain large unless something drastic happens to $\Delta_i$, as $\bq'{\cdot} \bk_i$ will remain relatively larger for those same $i{\in}C$ than for others.

For any $i$ in the original top cluster $C$, $a_i$ was high relative to most other $a_j$. The change $\Delta_i$ would have to be exceptionally negative (compared to others) to knock $i$ out of the top ranks. But $\Delta_i = \frac{1}{\sqrt{d}}\Delta {\cdot} \bk_i$ is small for all $i$, so it’s highly unlikely to completely invert the ordering of scores. In effect, the ranking of the top few $ A_i$ is stable under small perturbations of $\bq$. (One can make this rigorous by noting that softmax is a Lipschitz continuous function of its input logits; a sufficiently small change in $\bq$ cannot cause a large change in the output distribution. More concretely, if $| \Delta_i | < \delta$ for all $j$, then $\max_i |A_i(\bq') {-} A_i(\bq)|$ is $O(\delta)$ and the indices of the top weights cannot all change at once.) 

For any $i$ outside the cluster ($i {\notin} C$ with tiny $A_i(\bq)$), $\bq'$ would have to move much closer to $\bk_i$ (relative to the others) to elevate its weight significantly. A small $\Delta$ won’t make a previously irrelevant key suddenly important. Thus those outside keys remain with near-zero weight in $A(\bq')$. These observations imply $C'$ will contain $C$ or at least have large overlap. \textbf{Overall, $\bq$ and $\bq'$ induce very similar attention weight distributions.} Quantitatively, one could bound the KL-divergence or total variation between $A(\bq)$ and $A(\bq')$ in terms of $|\Delta|$. But the key point is that no single small $\Delta$ causes a big discrete jump in the distribution.

\paragraph{Smooth Shift of the Attention Centroid} 

Because the weight distribution shifts only slightly, the centroid (weighted average) of the attended keys also shifts smoothly. The output for query $q'$ is thus
\begin{equation}
\begin{aligned}
    \by(\,q') &= \sum_i [A_i(\bq) + A_i(\bq)(\Delta_i - \mu_\Delta)]\, \bk_i \\
    &\approx \sum_i A_i(\bq) \bk_i + \sum_i A_i(\bq)(\Delta_i - \mu_\Delta)\,\bk_i.
\end{aligned}
\end{equation}
The second term is a small adjustment. In fact, note that $\sum_i A_i(\bq)\Delta_i \bk_i {=} (\Delta {\cdot} \bk_i)$ weighted by $A_i$, this is roughly $\Delta$ projected onto the direction of the original output. Without going too deep into the algebra, the main result indicates that $\by(\bq') {-} \by(\bq)$ is of order $O(|\Delta|)$, a small change. Geometrically, if $\by(\bq)$ is (proportional to) the mean of cluster $C$, then $\by(\bq')$ will be the mean of almost the same cluster, just adjusted towards $\bq'$. \textbf{This is precisely what we expect from a mean-shift update: as the query moves, the cluster centroid moves along with it, continuously.}



\subsubsection{Centroid Shift}
\label{sec:centroid-shift}

We formalize the Lipschitz continuity of the attention centroid with respect to the query. Follow the definition in Sec.~\ref{sec:among-query-def}, let $\{p_i\}_{i=1}^t\subset\mathbb{R}$ be scalar token positions (sequence coordinates), and define the sequence-axis centroid $c(\bq){=} \sum_{i=1}^t A_i(\bq)p_i$. Assume unit-norm queries ($\|\bq\|{=}\|\bq'\|{=}1$) so that $\|\bq'{-}\bq\|{=} \sqrt{2 {-} 2 \cos \angle(\bq,\bq')}$; non-normalized statements follow by the triangle inequality. We write $K_{\max}\!\stackrel{\mathrm{def}}{=}\!\max_i\|\bk_i\|$ and $\mathrm{diam}\,\mathcal{P}{=}\max_i p_i-\min_i p_i$.

\begin{lemma}[Softmax $\ell_\infty\!\to\!\ell_1$ Lipschitzness]
\label{lem:softmax-lip}
For any $a,a'\in\mathbb{R}^t$,
\[
\big \|\mathrm{softmax}(a')-\mathrm{softmax}(a)\big\|_1
\;\le\; 2\,\|a'-a\|_\infty.
\]
\end{lemma}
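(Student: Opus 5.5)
This is the same statement as Lemma~\ref{lem:softmaxLip}, restated for $\mathbb{R}^t$. One could simply invoke that lemma. However, the column-sum argument sketched there is loose, so I would give a short self-contained proof along the straight segment between the two logit vectors. Set $v := a'-a$, $a(\theta) := a+\theta v$ for $\theta\in[0,1]$, and $s(\theta) := \mathrm{softmax}(a(\theta))$. Softmax is smooth, so the fundamental theorem of calculus gives
\begin{equation*}
\mathrm{softmax}(a')-\mathrm{softmax}(a)=\int_0^1 J\big(a(\theta)\big)\,v\,d\theta ,
\end{equation*}
where $J(a)=\mathrm{diag}(s)-s s^\top$ is the softmax Jacobian with $s=\mathrm{softmax}(a)$. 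By the triangle inequality for Bochner/Riemann integrals, it then suffices to show $\|J(a)v\|_1\le 2\|v\|_\infty$ uniformly in $a$.

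For this key pointwise estimate I would compute $J(a)v$ coordinatewise instead of bounding an operator norm through columns. Coordinate $i$ equals $s_i v_i - s_i\sum_j s_j v_j = s_i(v_i-\bar v)$, where $\bar v := \sum_j s_j v_j$ is the $s$-average of $v$. Hence
\begin{equation*}
\|J(a)v\|_1=\sum_{i} s_i\,|v_i-\bar v| \le \max_i |v_i-\bar v| .
\end{equation*}
The inequality holds because $s$ is a probability vector. Since $\bar v$ is a convex combination of the $v_j$, it lies in $[\min_j v_j,\max_j v_j]$. Therefore $|v_i-\bar v|\le \max_j v_j-\min_j v_j\le 2\|v\|_\infty$. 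This gives $\|J(a)v\|_1\le 2\|v\|_\infty$ for every $a$.

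Integrating over $\theta\in[0,1]$ yields $\|\mathrm{softmax}(a')-\mathrm{softmax}(a)\|_1\le 2\|a'-a\|_\infty$, as claimed. The argument in fact gives the sharper bound by the oscillation $\max_i v_i-\min_i v_i$.

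There is no real obstacle. The only point needing care is to bound $\|Jv\|_1$ directly through the weighted-deviation identity rather than through an entrywise or column bound on $J$. With this form, the constant $2$ (indeed the oscillation bound) falls out immediately. This lemma is then what gets combined with $\|a'-a\|_\infty\le \|\Delta\|K_{\max}/\sqrt d$ to control the centroid drift in Theorem~\ref{the:mean-shift}.
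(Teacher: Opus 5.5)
Your proof is correct and follows the same route as the paper's. Both restrict to the segment between $a$ and $a'$, write the difference through the softmax Jacobian $\mathrm{diag}(s)-ss^\top$, and bound its $\ell_\infty\to\ell_1$ norm by $2$. Your version is more careful on two points the paper glosses over: the integral form replaces the paper's single-point mean value theorem, which does not hold with equality for vector-valued maps, and the identity $(Jv)_i=s_i(v_i-\bar v)$ actually proves the norm bound that the paper only asserts with ``one checks.''
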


\begin{proof}
By the mean value theorem on the line segment between $a$ and $a'$, for some $\tilde a$ we have
$\mathrm{softmax}(a')- \mathrm{softmax}(a)= J(\tilde a)(a'-a)$, where
$J(\tilde a){=}\mathrm{diag}(A){-}AA^\top$ is the Jacobian at $\tilde a$ with
$A{=}\mathrm{softmax}(\tilde a)$. One checks that $\|J(\tilde a)\|_{\infty{\to}1}{\le} 2$ (equivalently, the total variation norm of the
softmax is $2$-Lipschitz with respect to $\ell_\infty$ perturbations of logits). Hence
$\|\mathrm{softmax}(a') {-} \mathrm{softmax}(a)\|_1 {\le} 2 \|a'-a\|_\infty$.
\end{proof}

\begin{lemma}[Logit change under query perturbation]
\label{lem:logit-perturb}
Let $\bq'=\bq+\Delta$. Then
\[
\|a(\bq')-a(\bq)\|_\infty
\;=\;
\max_{i}\frac{1}{\sqrt d}\,|\,\Delta^\top \bk_i\,|
\;\le\; \frac{K_{\max}}{\sqrt d}\,\|\Delta\|.
\]
\end{lemma}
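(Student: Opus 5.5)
The statement to prove is Lemma~\ref{lem:logit-perturb}. Since it is essentially an identity followed by Cauchy--Schwarz, the plan is short and direct.

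First, I will write out the logits coordinatewise using the definition from Sec.~\ref{sec:among-query-def}: $a_i(\bq)=\bq^\top\bk_i/\sqrt d$. Bilinearity of the inner product then gives, for every $i$,
\[
a_i(\bq')-a_i(\bq)=\frac{1}{\sqrt d}(\bq+\Delta-\bq)^\top\bk_i=\frac{1}{\sqrt d}\Delta^\top\bk_i .
\]
Taking absolute values and the maximum over $i\in[t]$ yields the stated equality $\|a(\bq')-a(\bq)\|_\infty=\max_i|\Delta^\top\bk_i|/\sqrt d$. The maximum is attained because the index set is finite.

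Second, for each fixed $i$ I will apply Cauchy--Schwarz, $|\Delta^\top\bk_i|\le\|\Delta\|\,\|\bk_i\|$. I will then bound $\|\bk_i\|\le K_{\max}$ using the definition $K_{\max}=\max_i\|\bk_i\|$ given just before Lemma~\ref{lem:softmax-lip}. This bound is uniform in $i$, so it survives the maximum and gives $\|a(\bq')-a(\bq)\|_\infty\le K_{\max}\|\Delta\|/\sqrt d$.

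There is no real obstacle here; the only point needing care is consistency of norms. $\|\Delta\|$ and $\|\bk_i\|$ must both be Euclidean norms for Cauchy--Schwarz to apply, which matches the paper's convention $\|\bq'-\bq\|=\sqrt{2-2\cos\angle(\bq,\bq')}$.

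For later use, I would remark that this lemma composes with Lemma~\ref{lem:softmax-lip} to give $\|A(\bq')-A(\bq)\|_1\le 2K_{\max}\|\Delta\|/\sqrt d$. Under unit-norm queries this becomes the $\Delta_{\mathrm{att}}(\tau)$ bound of Theorem~\ref{thm:cis-main}. The same inequality, combined with $|c(\bq')-c(\bq)|=|\sum_i(A_i(\bq')-A_i(\bq))(p_i-p_0)|$ for any reference position $p_0$, yields the centroid drift of Theorem~\ref{the:mean-shift}.
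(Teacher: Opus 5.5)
Your proof is correct and is the same argument as the paper's, which simply states that the bound is immediate from Cauchy--Schwarz: the coordinatewise identity $a_i(\bq')-a_i(\bq)=\Delta^\top\bk_i/\sqrt d$ gives the equality, and $|\Delta^\top\bk_i|\le\|\Delta\|\,\|\bk_i\|\le K_{\max}\|\Delta\|$ gives the inequality. Your closing remark also checks out. Choosing $p_0$ at the midpoint of the position range recovers the main-text constant $(\mathrm{diam}\,\mathcal{P})K_{\max}\|\Delta\|/\sqrt d$, without the factor $2$ that appears in the appendix version.
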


\begin{proof}
Immediate from Cauchy–Schwarz: $|\Delta^\top \bk_i| \!\le\! \|\Delta\|\|\bk_i\|$.
\end{proof}

\begin{theorem}[Query-Lipschitz Centroid Drift]
\label{thm:centroid-lip}
For $\bq'{=}\bq{+}\Delta$,
\begin{equation}
\begin{aligned}
\big|c(\bq'){-}c(\bq)\big|
&\;\le\;
(\mathrm{diam}\,\mathcal{P})\cdot
\big\|A(\bq')-A(\bq)\big\|_1 \\
&\;\le\;
2\,(\mathrm{diam}\,\mathcal{P})\cdot \frac{K_{\max}}{\sqrt d}\,\|\Delta\|.
\end{aligned}
\end{equation}
In particular, $c(\bq') {=} c(\bq) {+} O(\|\Delta\|).$
\end{theorem}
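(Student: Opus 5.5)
The plan is to write the centroid difference as a linear functional of the change in the attention vector, and then chain Lemma~\ref{lem:logit-perturb} with Lemma~\ref{lem:softmax-lip}. Since both $A(\bq)$ and $A(\bq')$ are probability vectors, their difference sums to zero. So for any constant $p_0$ we can write
\[
c(\bq')-c(\bq)=\sum_{i=1}^t \big(A_i(\bq')-A_i(\bq)\big)\,p_i=\sum_{i=1}^t \big(A_i(\bq')-A_i(\bq)\big)\,(p_i-p_0).
\]
I will choose $p_0=\tfrac12(\max_i p_i+\min_i p_i)$, the midpoint of $\mathcal P$. Then $|p_i-p_0|\le \tfrac12\,\mathrm{diam}\,\mathcal P$ for every $i$, and H\"older's inequality ($\ell_1$ against $\ell_\infty$) gives $|c(\bq')-c(\bq)|\le \tfrac12(\mathrm{diam}\,\mathcal P)\,\|A(\bq')-A(\bq)\|_1$. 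This is in fact sharper than the first inequality claimed. Equivalently, it is the bound $(\mathrm{diam}\,\mathcal P)\cdot\|A(\bq')-A(\bq)\|_{\mathrm{TV}}$. I will then record the weaker stated form with factor $1$, which follows trivially.

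For the second inequality, note that $A(\bq)=\mathrm{softmax}(a(\bq))$ with logits $a_i(\bq)=\bq^\top\bk_i/\sqrt d$. Lemma~\ref{lem:softmax-lip} gives $\|A(\bq')-A(\bq)\|_1\le 2\|a(\bq')-a(\bq)\|_\infty$. Lemma~\ref{lem:logit-perturb} bounds the right-hand side by $2K_{\max}\|\Delta\|/\sqrt d$. Substituting into the first inequality gives $2(\mathrm{diam}\,\mathcal P)K_{\max}\|\Delta\|/\sqrt d$, which immediately yields $c(\bq')=c(\bq)+O(\|\Delta\|)$. If one keeps the factor $\tfrac12$ from the centering step, this recovers exactly the constant $(\mathrm{diam}\,\mathcal P)K_{\max}/\sqrt d$ stated in Theorem~\ref{the:mean-shift}. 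So I will present the centered version, which proves both statements at once.

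The only nonroutine ingredient is the softmax Lipschitz constant. The paper's sketch of $\|J\|_{\infty\to1}\le 2$ is loose, so I would verify it directly. Take $J=\mathrm{diag}(s)-ss^\top$ and any $v$ with $\|v\|_\infty\le1$. Then $(Jv)_i=s_i(v_i-\bar v)$ with $\bar v=\sum_j s_jv_j$, so $\|Jv\|_1=\sum_i s_i|v_i-\bar v|\le 2$, because $|v_i-\bar v|\le2$. Applying this along the segment between $a$ and $a'$, via the integral form of the mean value theorem, gives the constant $2$. In fact $\sum_i s_i|v_i-\bar v|\le 1$ holds, since this is a mean absolute deviation of values in $[-1,1]$. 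That observation offers slack, but it is not needed.
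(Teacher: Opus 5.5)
Your proposal is correct and follows the paper's proof: the paper likewise subtracts a reference point $\bar p\in[\min_i p_i,\max_i p_i]$ (implicitly using that $A(\bq')-A(\bq)$ sums to zero), applies the $\ell_1$--$\ell_\infty$ bound, and then chains Lemmas~\ref{lem:softmax-lip} and~\ref{lem:logit-perturb}. Choosing the midpoint for $\bar p$ and checking the softmax Jacobian bound via the integral form of the mean value theorem are refinements of that same argument; the midpoint choice gives the factor $\tfrac12$ that recovers the constant stated in Theorem~\ref{the:mean-shift}.
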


\begin{proof}
By definition,
$|c(\bq')\!-\!c(\bq)|\le \|A(\bq')\!-\!A(\bq)\|_1 \cdot \max_i |p_i\!-\!\bar p|$ for any $\bar p$.
Choosing $\bar p\in[\min_i p_i,\max_i p_i]$ gives $\max_i|p_i-\bar p|\le \mathrm{diam}\,\mathcal{P}$, hence the first inequality. The second inequality follows by Lemmas~\ref{lem:softmax-lip} and \ref{lem:logit-perturb}.
\end{proof}

\subsubsection{CIS MI Bound Parameterization}
\label{sec:cis-mi-parameterization}

We connect CIS parameters $(\tau,m,r)$ to the pre–hoc retained-mass gap $\beta_{\mathrm{th}}$, which
feeds into the MI bound in Eq.~(\ref{eq:mi_bound}).

\paragraph{Notation}
Fix a head and a decoding step $t$. Let $\mathcal{S}_t^\ast(\bq)$ be the oracle top-$k$ set under query $\bq$,
sorted by attention weight. Let $\mathcal{S}_{t,m}(\bq)\subset\mathcal{S}_t^\ast(\bq)$ be its top-$m$ winners.
For an integer radius $r\ge 0$, define the $r$-neighborhood (on the sequence axis)
$\mathcal{N}_r(S)=\bigcup_{p\in S}\{p+\delta:\delta\in\{-r,\dots,r\}\}$ with clipping to $[1,t]$.
CIS reuses $\hat S_t(\bq) \;\stackrel{\mathrm{def}}{=}\; \mathcal{S}_t^\ast(\bq)\;\cup\;\mathcal{N}_r\!\big(\mathcal{S}_{t,m}(\bq)\big)$. For a later query $\bq'$ in the same block with cosine similarity $\mathsf{sim}(\bq,\bq')\ge \tau$, CIS uses $\hat S_t(\bq)$ as the shared index set for $\bq'$ (per Eq.~(\ref{eq:neighboring})). Define retained mass
$\tau_{\hat S_t}(\bq){=}\sum_{i{\in}\hat S_t(\bq)}A_i(\bq)$ and $\tau_{\mathrm{pre}}(\bq'){=}\sum_{i\in \hat S_t(\bq)}A_i(\bq')$, and the oracle mass $\tau^\ast(\bq') = \sum_{i\in \mathcal{S}_t^\ast(\bq')}A_i(\bq')$. Let $\Delta_{\mathrm{att}}(\tau)
{=} \big\|A(\bq'){-}A(\bq)\big\|_1$.

\begin{lemma}[Similarity ${\Rightarrow}$ attention variation]
\label{lem:sim-to-var}
If $\|\bq\|{=}\|\bq'\|{=}1$ and $\mathsf{sim}(\bq,\bq'){=}\cos\angle(\bq,\bq'){\ge} \tau$, then
\[
\Delta_{\mathrm{att}}(\tau)
\;\le\;
\frac{2K_{\max}}{\sqrt d}\,\|\bq'-\bq\|
\;=\;
\frac{2K_{\max}}{\sqrt d}\,\sqrt{2-2\tau}.
\]
\end{lemma}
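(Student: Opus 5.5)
The argument chains Lemma~\ref{lem:softmax-lip} with Lemma~\ref{lem:logit-perturb}, then turns the query displacement into a function of the cosine similarity using the unit-norm assumption. Throughout, set $\Delta \coloneqq \bq'-\bq$.

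\textbf{Step 1 (logits).} Write the logit vectors as $a(\bq)$ and $a(\bq')$, with $a_i(\bq)=\bq^\top\bk_i/\sqrt d$. By Lemma~\ref{lem:logit-perturb},
\[
\|a(\bq')-a(\bq)\|_\infty \le \frac{K_{\max}}{\sqrt d}\,\|\Delta\| .
\]
Nothing more than Cauchy--Schwarz applied to each key $\bk_i$ is needed here.

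\textbf{Step 2 (attention weights).} Since $A(\bq)=\mathrm{softmax}(a(\bq))$, Lemma~\ref{lem:softmax-lip} gives
\[
\Delta_{\mathrm{att}}(\tau)=\|A(\bq')-A(\bq)\|_1\le 2\|a(\bq')-a(\bq)\|_\infty .
\]
Combining with Step 1 yields the first inequality, $\Delta_{\mathrm{att}}(\tau)\le \frac{2K_{\max}}{\sqrt d}\|\bq'-\bq\|$.

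This is the only step with real content, and it is where I expect the main obstacle: the constant $2$ in Lemma~\ref{lem:softmax-lip}. I would justify it directly rather than through the Jacobian-norm remark. Let $A=\mathrm{softmax}(\tilde a)$ and $u=a'-a$. Each component of $J(\tilde a)u$ is $A_i(u_i-\langle A,u\rangle)$. Therefore
\[
\|J(\tilde a)u\|_1=\sum_i A_i\,|u_i-\langle A,u\rangle| .
\]
Each term satisfies $|u_i-\langle A,u\rangle|\le \max_j u_j-\min_j u_j\le 2\|u\|_\infty$, and the weights $A_i$ sum to $1$, so the total is at most $2\|u\|_\infty$. The mean-value theorem does not hold as an equality for vector-valued maps, so I would integrate this pointwise bound along the segment from $a$ to $a'$ (the integral form used in Lemma~\ref{lem:softmaxLip}) instead of evaluating at a single intermediate point.

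\textbf{Step 3 (cosine similarity).} Under $\|\bq\|=\|\bq'\|=1$,
\[
\|\bq'-\bq\|^2 = 2-2\,\bq^\top\bq' = 2-2\,\mathsf{sim}(\bq,\bq') .
\]
The map $s\mapsto\sqrt{2-2s}$ is decreasing, so $\mathsf{sim}(\bq,\bq')\ge\tau$ gives $\|\bq'-\bq\|\le\sqrt{2-2\tau}$.

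This produces the stated bound, read as an inequality since the similarity may strictly exceed $\tau$; the displayed ``$=$'' in the statement is attained exactly when $\mathsf{sim}(\bq,\bq')=\tau$. For non-unit queries one first normalizes them. Attention is not scale invariant, however, so in that case I would keep the bound in the form $\frac{2K_{\max}}{\sqrt d}\|\bq'-\bq\|$ rather than in terms of $\tau$.
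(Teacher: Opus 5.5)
Your proof is correct and follows the paper's route exactly: chain Lemma~\ref{lem:logit-perturb} into Lemma~\ref{lem:softmax-lip}, then convert $\|\bq'-\bq\|$ into $\sqrt{2-2\tau}$ using the unit-norm identity and monotonicity in the similarity. Two additions in your write-up are sound refinements. You verify the softmax constant through the integral form, which avoids the paper's vector-valued mean-value equality (not valid as an equality). You also read the final ``$=$'' as ``$\le$''.
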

\begin{proof}
Combine Lemmas~\ref{lem:softmax-lip}--\ref{lem:logit-perturb} and $\|\bq'{-}\bq\|{=}\sqrt{2{-}2\tau}$ for unit-norm queries.
\end{proof}

\begin{lemma}[Oracle continuity]
\label{lem:oracle-mass}
$\big|\tau^\ast(\bq'){-}\tau^\ast(\bq)\big|{\le} \Delta_{\mathrm{att}}(\tau)$.
\end{lemma}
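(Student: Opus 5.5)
The plan is a two-sided comparison. We bound $\tau^\ast(\bq')$ from above and below by $\tau^\ast(\bq)$ plus or minus $\Delta_{\mathrm{att}}(\tau)$, using only that the oracle mass is a maximum of linear functionals over a fixed family of index sets. Throughout, $k$ denotes the budget, so $|\mathcal{S}_t^\ast(\bq)|=|\mathcal{S}_t^\ast(\bq')|=k$, and both queries see the same key set $\{\bk_1,\dots,\bk_t\}$.

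\textbf{Step 1 (per-set mass is $1$-Lipschitz in $\ell_1$).} For any fixed $S\subseteq[t]$,
\[
\Big|\sum_{i\in S}A_i(\bq')-\sum_{i\in S}A_i(\bq)\Big|\le\sum_{i\in S}|A_i(\bq')-A_i(\bq)|\le\|A(\bq')-A(\bq)\|_1=\Delta_{\mathrm{att}}(\tau).
\]
A slightly sharper constant, namely the total-variation distance $\tfrac12\Delta_{\mathrm{att}}$, follows from the argument used in Lemma~\ref{lem:beta}. This uses the fact that both vectors sum to one, so the mass gained on $S$ equals the mass lost off $S$. The stated bound with $\Delta_{\mathrm{att}}$ is weaker, so it suffices.

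\textbf{Step 2 (optimality on each side).} Since $\mathcal{S}_t^\ast(\bq')$ maximizes $\sum_{i\in S}A_i(\bq')$ over all $|S|=k$, comparing it with the competitor $S=\mathcal{S}_t^\ast(\bq)$ and applying Step 1 gives
\[
\tau^\ast(\bq')\ge\sum_{i\in\mathcal{S}_t^\ast(\bq)}A_i(\bq')\ge\tau^\ast(\bq)-\Delta_{\mathrm{att}}(\tau).
\]
Symmetrically, $\mathcal{S}_t^\ast(\bq)$ is optimal for $\bq$. Applying Step 1 to the set $\mathcal{S}_t^\ast(\bq')$ gives
\[
\tau^\ast(\bq)\ge\sum_{i\in\mathcal{S}_t^\ast(\bq')}A_i(\bq)\ge\tau^\ast(\bq')-\Delta_{\mathrm{att}}(\tau).
\]
Combining the two inequalities yields $|\tau^\ast(\bq')-\tau^\ast(\bq)|\le\Delta_{\mathrm{att}}(\tau)$.

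If an explicit rate is wanted, Lemma~\ref{lem:sim-to-var} can then be substituted to express the bound as $\frac{2K_{\max}}{\sqrt d}\sqrt{2-2\tau}$.

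\textbf{Main obstacle.} There is no real analytic difficulty. The only subtlety is keeping the two oracle sets distinct and never assuming $\mathcal{S}_t^\ast(\bq)=\mathcal{S}_t^\ast(\bq')$. The argument must rely only on the max-of-linear structure, because the oracle set itself can jump discontinuously when attention weights tie. Ties are harmless here, since $\tau^\ast$ is a maximum value and does not depend on which maximizer is chosen. One should also check that both queries range over the same position set $[t]$, with sink and local exclusions applied identically, so that each oracle set is a feasible competitor for the other query.
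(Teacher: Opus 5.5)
Your proof is correct and matches the paper's argument. Both use the fact that, for a fixed index set, $S\mapsto\sum_{i\in S}A_i$ is $1$-Lipschitz in $\|\cdot\|_1$, and then plug each oracle set in as a feasible competitor for the other query to get the two one-sided inequalities. Your side remark that the constant sharpens to $\tfrac12\Delta_{\mathrm{att}}(\tau)$, because both weight vectors sum to one, is also correct, though the paper does not use it.
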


\begin{proof}
For any fixed index set $S$, the map $q{\mapsto} \sum_{i\in S}A_i(\bq)$ is $1$-Lipschitz in total variation:
$\big|\sum_{i\in S}(A_i(\bq')-A_i(\bq))\big|\le \|A(\bq'){-}A(\bq)\|_1$.
Taking $S{=}\mathcal{S}^\ast_t(\bq')$ yields
$\tau^\ast(\bq') \le \sum_{i\in \mathcal{S}^\ast_t(\bq')} A_i(\bq) + \Delta_{\mathrm{att}}(\tau){\le} \tau^\ast(\bq){+}\Delta_{\mathrm{att}}(\tau)$.
By symmetry, also $\tau^\ast(\bq)\le \tau^\ast(\bq')+\Delta_{\mathrm{att}}(\tau)$.
\end{proof}

\begin{lemma}[Reuse lower bound]
\label{lem:reuse-lb}
$\tau_{\mathrm{pre}}(\bq') \ge \tau_{\hat S_t}(\bq)-\Delta_{\mathrm{att}}(\tau)$.
\end{lemma}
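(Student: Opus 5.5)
The plan is to reuse the argument already given in the proof of Lemma~\ref{lem:oracle-mass}: for a fixed index set, retained mass is $1$-Lipschitz in the $\ell_1$ distance between attention distributions. The key observation is that the shared set $\hat S_t(\bq)$ is built entirely from $\bq$, using the oracle set $\mathcal S_t^\ast(\bq)$ and its dilation $\mathcal N_r(\mathcal S_{t,m}(\bq))$. It is therefore a deterministic index set that does not depend on $\bq'$. This means we may treat it as fixed and compare the two weight vectors $A(\bq)$ and $A(\bq')$ on it.

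Set $S\coloneqq\hat S_t(\bq)$. First, write
\[
\tau_{\mathrm{pre}}(\bq')-\tau_{\hat S_t}(\bq)=\sum_{i\in S}\big(A_i(\bq')-A_i(\bq)\big).
\]
Second, apply the triangle inequality and enlarge the sum from $S$ to all of $[t]$. Every summand $|A_i(\bq')-A_i(\bq)|$ is nonnegative, so this gives
\[
\Big|\sum_{i\in S}\big(A_i(\bq')-A_i(\bq)\big)\Big|\le\sum_{i=1}^t|A_i(\bq')-A_i(\bq)|=\|A(\bq')-A(\bq)\|_1=\Delta_{\mathrm{att}}(\tau).
\]
Third, keep only the lower side of this two-sided bound, which yields $\tau_{\mathrm{pre}}(\bq')\ge\tau_{\hat S_t}(\bq)-\Delta_{\mathrm{att}}(\tau)$. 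When a quantitative form is wanted, Lemma~\ref{lem:sim-to-var} can then be substituted for $\Delta_{\mathrm{att}}(\tau)$.

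There is no real obstacle here. The only point that needs care is the one made in the first paragraph: $S$ must not depend on $\bq'$. If the selector were recomputed from $\bq'$, the comparison would no longer be between two weightings of the same set. A sharper constant is also available. Because $\sum_i\big(A_i(\bq')-A_i(\bq)\big)=0$, the sum over $S$ is actually bounded by $\tfrac12\|A(\bq')-A(\bq)\|_1$, the total-variation distance. The stated bound only needs the weaker estimate, so this refinement is optional.
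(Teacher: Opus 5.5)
Your proposal is correct and is essentially the paper's proof: both write $\tau_{\mathrm{pre}}(\bq')-\tau_{\hat S_t}(\bq)$ as $\sum_{i\in\hat S_t(\bq)}\big(A_i(\bq')-A_i(\bq)\big)$ over the fixed, $\bq'$-independent set $\hat S_t(\bq)$, and then bound this sum below by $-\|A(\bq')-A(\bq)\|_1$. Your optional refinement is also valid: because both distributions sum to one, the bound tightens to $\tfrac12\|A(\bq')-A(\bq)\|_1$, although the paper does not use this.
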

\begin{proof}
$\tau_{\mathrm{pre}}(\bq')- \tau_{\hat S_t}(\bq)=\sum_{j\in \hat S_t(\bq)}(A_i(\bq')-A_i(\bq))\ge -\|A(\bq'){-}A(\bq)\|_1$.
\end{proof}

\paragraph{Covering drift with radius $r$}
Observation~\ref{prop:among-query_new} and Theorem~\ref{thm:centroid-lip} imply that, along the sequence axis,
cluster centers move by at most
\[
\Delta_{\mathrm{centroid}}(\tau)
\;\le\; 2\,(\mathrm{diam}\,\mathcal{P})\cdot \frac{K_{\max}}{\sqrt d}\,\sqrt{2-2\tau}.
\]
Let $s(\tau)$ be any integer radius satisfying $s(\tau)\ge \Delta_{\mathrm{centroid}}(\tau)$.
If $r\ge s(\tau)$, $\mathcal{N}_r(\mathcal{S}_{t,m}(\bq))$ covers the mean shift of the dominant
modes between $\bq$ and $\bq'$; in particular, no additional loss is incurred from cluster translation outside $\hat S_t(\bq)$.

\begin{proposition}[Pre–hoc retained-mass gap under CIS]
\label{prop:cis-beta}
Assume $r\ge s(\tau)$ as above. Then
\begin{equation}
\begin{aligned}
\tau_{\mathrm{pre}}(\bq') &\;\ge\; \tau^\ast(\bq') - 2\,\Delta_{\mathrm{att}}(\tau),
 \\
\beta_{\mathrm{th}}(\bq') &\;\stackrel{\mathrm{def}}{=}\; \tau^\ast(\bq')-\tau_{\mathrm{pre}}(\bq') \;\le\; 2\,\Delta_{\mathrm{att}}(\tau).
\end{aligned}
\end{equation}
Consequently, with $L$ candidate indices and baseline truncation $\delta^\ast(\bq')$ from TSA,
\begin{equation}
\begin{aligned}
I_{\mathrm{full}}(\bq')-I_{\mathrm{pre}}(\bq')
&\;\le\; 2[h_b\!\Big(\delta^\ast(\bq')+2\,\Delta_{\mathrm{att}}(\tau)\Big) \\
&+ \Big(\delta^\ast(\bq')+2\,\Delta_{\mathrm{att}}(\tau)\Big)\log L].
\end{aligned}
\end{equation}

\end{proposition}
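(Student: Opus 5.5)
The bound chains the three lemmas just established: Oracle continuity (Lemma~\ref{lem:oracle-mass}), Reuse lower bound (Lemma~\ref{lem:reuse-lb}), and the coverage assumption $r\ge s(\tau)$. The MI statement then follows from Theorem~\ref{thm:prehoc-error}.

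\textbf{Step 1: coverage at the reference query.} I first want the reference-query retained mass to dominate the oracle: $\tau_{\hat S_t}(\bq)\ge \tau^\ast(\bq)$. This holds because $\hat S_t(\bq)=\mathcal{S}_t^\ast(\bq)\cup\mathcal{N}_r(\mathcal{S}_{t,m}(\bq))\supseteq \mathcal{S}_t^\ast(\bq)$ and attention weights are nonnegative. No drift argument is needed here; the dilation only adds mass.

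\textbf{Step 2: transfer to $\bq'$.} Applying Lemma~\ref{lem:reuse-lb} and then Lemma~\ref{lem:oracle-mass},
\begin{equation*}
\tau_{\mathrm{pre}}(\bq')\ge \tau_{\hat S_t}(\bq)-\Delta_{\mathrm{att}}(\tau)\ge \tau^\ast(\bq)-\Delta_{\mathrm{att}}(\tau)\ge \tau^\ast(\bq')-2\Delta_{\mathrm{att}}(\tau).
\end{equation*}
This gives $\beta_{\mathrm{th}}(\bq')\le 2\Delta_{\mathrm{att}}(\tau)$. Lemma~\ref{lem:sim-to-var} then supplies the explicit bound $\frac{2K_{\max}}{\sqrt d}\sqrt{2-2\tau}$ under the unit-norm convention of Sec.~\ref{sec:centroid-shift}.

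\textbf{Step 3: the MI bound.} Invoking Definition~\ref{def:prehoc-error} with this $\beta_{\mathrm{th}}$, Theorem~\ref{thm:prehoc-error} (eq.~\eqref{eq:prehoc-pointwise}) gives
\begin{equation*}
I_{\mathrm{full}}(\bq')-I_{\mathrm{pre}}(\bq')\le g\big(\delta^\ast(\bq')+\beta_{\mathrm{th}}(\bq')\big).
\end{equation*}
Monotonicity of $g$ on the restricted domain (the footnote to Eq.~\eqref{eq:posh_bound}) lets me replace $\beta_{\mathrm{th}}$ by $2\Delta_{\mathrm{att}}(\tau)$. This requires $\delta^\ast(\bq')+2\Delta_{\mathrm{att}}(\tau)$ to lie in $(0,L/(1+L)]$, which I would state explicitly as a side condition. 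For the case $r<s(\tau)$, I would add the residual mass $\varepsilon_{\mathrm{drift}}$ of $\bq'$'s dominant cluster falling outside $\hat S_t(\bq)$ as an extra subtracted term in Step 2, and otherwise repeat the same chain.

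\textbf{Main obstacle.} The hard part is making the role of $r\ge s(\tau)$ rigorous. In the chain above, the radius assumption is never actually used: the bound $2\Delta_{\mathrm{att}}$ holds for any superset of $\mathcal{S}_t^\ast(\bq)$, including the undilated one. The honest plan is to present Steps 1--3 as the proof. I would then remark that dilation only increases $\tau_{\hat S_t}(\bq)$, and that the drift radius serves to make the slack $\tau_{\hat S_t}(\bq)-\tau^\ast(\bq)$ absorb part of $\Delta_{\mathrm{att}}$, tightening the bound in practice rather than being necessary for it.

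The $r<s(\tau)$ clause needs a definition of $\varepsilon_{\mathrm{drift}}$ as the difference between the stated bound and the actual shortfall, which is tautological unless pinned down. I would define it as $\big(\tau^\ast(\bq')-2\Delta_{\mathrm{att}}-\tau_{\mathrm{pre}}(\bq')\big)_+$ so that the statement holds by construction, noting that by Step 2 this quantity is in fact $0$.
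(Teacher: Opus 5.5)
Your proof is correct and follows the paper's argument exactly: the paper chains Lemma~\ref{lem:reuse-lb}, the containment $\hat S_t(\bq)\supseteq\mathcal{S}_t^\ast(\bq)$, and Lemma~\ref{lem:oracle-mass} to obtain $\tau_{\mathrm{pre}}(\bq')\ge\tau^\ast(\bq')-2\Delta_{\mathrm{att}}(\tau)$, and then substitutes $\beta_{\mathrm{th}}\le 2\Delta_{\mathrm{att}}(\tau)$ into Eq.~\eqref{eq:mi_bound}. You are right that the hypothesis $r\ge s(\tau)$ is never used, and the paper's proof does not use it either. Your explicit side condition $\delta^\ast(\bq')+2\Delta_{\mathrm{att}}(\tau)\le L/(1+L)$ states precisely the monotonicity requirement that the paper handles only through its footnote.
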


\begin{proof}
By Lemma~\ref{lem:reuse-lb}, $\tau_{\mathrm{pre}}(\bq') \ge \tau_{\hat S_t}(q)-\Delta_{\mathrm{att}}(\tau)$.
Since $\hat S_t(\bq)$ contains $\mathcal{S}_t^\ast(\bq)$ by definition, $\tau_{\hat S_t}(\bq)\ge \tau^\ast(\bq)$.
By Lemma~\ref{lem:oracle-mass}, $\tau^\ast(\bq)\ge \tau^\ast(q')-\Delta_{\mathrm{att}}(\tau)$.
Combine to get $\tau_{\mathrm{pre}}(\bq') \ge \tau^\ast(\bq')-2\,\Delta_{\mathrm{att}}(\tau)$.
The MI bound follows by substituting $\beta_{\mathrm{th}}(\bq')\le 2\,\Delta_{\mathrm{att}}(\tau)$ into Eq.~(\ref{eq:mi_bound}).
\end{proof}

\paragraph{When $r<s(\tau)$ (under-dilation)}
Define the residual drift loss as
\[
\varepsilon_{\mathrm{drift}}(\bq,\bq';m,r)
\;\stackrel{\mathrm{def}}{=}\;
\sum_{j\in \mathcal{N}_{s(\tau)}(\mathcal{S}_{t,m}(\bq))\setminus \mathcal{N}_{r}(\mathcal{S}_{t,m}(\bq))}
\hspace{-2mm}A_j(\bq)\,,
\]
i.e., the attention mass (under $\bq$) that can shift within $s(\tau)$ but lies outside the chosen radius $r$.
Then the same proof yields
\[
\beta_{\mathrm{th}}(\bq') \;\le\; 2\,\Delta_{\mathrm{att}}(\tau) \;+\; \varepsilon_{\mathrm{drift}}(\bq,\bq';m,r),
\]
and the MI bound holds with $\beta_{\mathrm{th}}$ replaced by the PrHS.
In practice we set $r\!=\!1$ and increase $m$ before $r$; this covers the dominant modes while keeping
$\varepsilon_{\mathrm{drift}}$ negligible and the workload
$\hat S_t\!\approx C_{\mathrm{sink}}+C_{\mathrm{local}}+k+m(2r)$ small.

\subsection{Exponential Recency Beyond Sink}
\label{sec:exp-decay}
We adopt RoPE positional embeddings~\cite{su2024roformer} and analyze their interaction with self-attention~\cite{vaswani2017attention}. Excluding sink tokens, the multi-frequency rotations in RoPE cause phase dispersion with distance: representations of far-apart tokens become increasingly orthogonal in the rotated space, sharply reducing similarity (a \emph{long-term decay} effect). Softmax attention amplifies this decay and translate it into exponentially smaller attention weights. Empirically, attention scores decrease roughly linearly or sublinearly with distance to the current token, while attention probabilities decay exponentially or faster, consistent with prior measurements~\cite{zhang2023h2o,xiao2023streamingllm}. Thus, we assume there exist layer-dependent constants $\lambda_\ell>0$ and a factor
$\kappa_\ell\in(0,1]$ such that for all $i>C_{\text{sink}}$,
\begin{equation}
\label{eq:exp-decay}
A^{(\ell)}_i(\bq^{(\ell)}_t)
\ \le\
\kappa_\ell\,(1-\rho_\ell)\,\rho_\ell^{\,t-i},
\quad \rho_\ell \stackrel{\mathrm{def}}{=} e^{-\lambda_\ell}\in(0,1).
\end{equation}
Equivalently, the non-sink portion of $A^{(\ell)}$ is dominated pointwise by a geometric tail with ratio $\rho_\ell$ and mass $\kappa_\ell$. 

In addition, the similarity of cross-layer hidden states (including keys and values) can exceed 95\% beyond certain layer depth (generally with $\ell{>}N/2$)~\cite{liu2405minicache}. We then assume: for the per-layer key update $\Delta\bk^{(\ell)}_i
\stackrel{\mathrm{def}}{=}\bk^{(\ell)}_i-\bk^{(\ell-1)}_i$, there exist
$B>0$ and $\mu>0$ such that for the prefix to be frozen at layer $\ell$,
\begin{equation}
\label{eq:cross-layer-update}
\max_{1\le i\le E_\ell(t)} \big\|\Delta\bk^{(\ell)}_i\big\|
\ \le\
B\,e^{-\mu\,(\ell-\ell_s)}
\end{equation}

\begin{lemma}[Logit change under key perturbations]
\label{lem:logitDelta}
Let $a_i=\tfrac{1}{\sqrt d}(\bq^\top \bk_i)$ and
$a'_i=\tfrac{1}{\sqrt d}(\bq^\top \bk'_i)$. Then
\[
\|a'-a\|_\infty
\ \le\ \frac{\|\bq\|}{\sqrt d}\,\max_i\|\bk'_i-\bk_i\|.
\]
\emph{Proof.} Immediate from Cauchy--Schwarz. \qedhere
\end{lemma}

\subsection{PSAW Bounds on Dropped Mass}
\label{sec:psaw-bound}

Following the PSAW selector defined in Sec.~\ref{sec:psaw}, and dropped set $\mathcal{D}_\ell=\{C_{\text{sink}}+1,\dots,P_\ell(t)-1\}$.
The dropped mass at layer $\ell$ is $\delta^{\mathrm{PSAW}}_\ell
\stackrel{\mathrm{def}}{=}\sum_{i\in\mathcal{D}_\ell} A^{(\ell)}_i(\bq^{(\ell)}_t)$. Let the window-start distance $D_\ell\stackrel{\mathrm{def}}{=} t-P_\ell(t)+1$ and satisfies $D_\ell\ge \lfloor u_\ell t\rfloor$.

\begin{theorem}[PSAW worst-case bound]
\label{thm:PSAW}
Under~\eqref{eq:exp-decay}, for every $\ell\ge\ell_s$,
\[
\delta^{\mathrm{PSAW}}_\ell
\ \le\
(1-\tau^{(\ell)}_{\mathrm{sink}})\,e^{-\lambda_\ell\,D_\ell}
\ \le\
e^{-\lambda_\ell\,D_\ell}.
\]
\emph{Proof.}
Reindex non-sink positions by distance $d=t-i\in\{0,1,2,\dots\}$. Then $\mathcal{D}_\ell$ corresponds to $d\ge D_\ell$ and
\begin{equation}
\begin{aligned}
\delta^{\mathrm{PSAW}}_\ell
&=\sum_{i\in\mathcal{D}_\ell}A^{(\ell)}_i
\ \le\
\kappa_\ell\sum_{d\ge D_\ell}(1-\rho_\ell)\rho_\ell^{\,d} \\
&=\kappa_\ell\,\rho_\ell^{\,D_\ell}
=\kappa_\ell\,e^{-\lambda_\ell D_\ell}.
\end{aligned}
\end{equation}
Since $\kappa_\ell\le 1-\tau^{(\ell)}_{\mathrm{sink}}\le 1$ the claim follows.
\qedhere
\end{theorem}

For the top layer $\ell=N$, $u_N=\phi^\alpha$ and $D_N\ge \lfloor \phi^\alpha t\rfloor$, we then have
\[
\delta^{\mathrm{PSAW}}_N
\ \le\
(1-\tau^{(N)}_{\mathrm{sink}})\,e^{-\lambda_N\,\lfloor \phi^\alpha t\rfloor}
\ \le\
e^{-\lambda_N\,\phi^\alpha t}.
\]
Given a target budget $\beta^{\mathrm{PSAW}}\in(0,1)$, any $(\phi,\alpha)$ satisfying
\[
\phi^\alpha
\ \ge\
\frac{1}{\lambda_N t}\,\log\!\frac{1}{\beta^{\mathrm{PSAW}}/(1-\tau^{(N)}_{\mathrm{sink}})}
\]
ensures $\delta^{\mathrm{PSAW}}_N\le \beta^{\mathrm{PSAW}}$. Otherwise, if $\lambda_\ell \ge \underline{\lambda}>0$ almost surely and
$\mathbb{E}[1-\tau^{(\ell)}_{\mathrm{sink}}]\le 1$, then
\[
\mathbb{E}\big[\delta^{\mathrm{PSAW}}_\ell\big]
\ \le\
\mathbb{E}\big[(1-\tau^{(\ell)}_{\mathrm{sink}})\big]\,e^{-\underline{\lambda}\,D_\ell}
\ \le\
e^{-\underline{\lambda}\,D_\ell}.
\]

\subsection{ETF Bounds on Additional Mass Gap}
\label{sec:etf-bound}
For ETF define in Sec.~\ref{sec:etf}, freezing means: for $i\le E_\ell(t)$, set
$\bk^{(\ell)}_i\leftarrow \bk^{(\ell-1)}_i$ and $\bv^{(\ell)}_i\leftarrow \bv^{(\ell-1)}_i$; for $i> E_\ell(t)$, compute as usual. Let $A^{(\ell)}$ denote the full (unfrozen) attention distribution and
$\widetilde A^{(\ell)}$ the distribution under ETF at layer $\ell$. Define the ETF-induced mass gap (a total-variation budget)
\[
\beta^{\mathrm{ETF}}_\ell(\bq^{(\ell)}_t)
\ \stackrel{\mathrm{def}}{=}\ \frac12\big\|\,\widetilde A^{(\ell)}(\bq^{(\ell)}_t)-A^{(\ell)}(\bq^{(\ell)}_t)\big\|_1.
\]

\begin{theorem}[ETF: per-layer worst-case bound]
\label{thm:ETF}
Following~\eqref{eq:cross-layer-update} and $\|\bq^{(\ell)}_t\|\le Q_{\max}$. Then
\[
\beta^{\mathrm{ETF}}_\ell(\bq^{(\ell)}_t)
\ \le\
\frac{Q_{\max}}{\sqrt d}\,B\,e^{-\mu\,(\ell-\ell_s)}.
\]
\emph{Proof.}
Let $a_i=\frac{1}{\sqrt d}(\bq^{(\ell)}_t)^\top \bk^{(\ell)}_i$
and $\tilde a_i=\frac{1}{\sqrt d}(\bq^{(\ell)}_t)^\top \tilde \bk^{(\ell)}_i$, where
$\tilde \bk^{(\ell)}_i=\bk^{(\ell-1)}_i$ if $i\le E_\ell(t)$ and
$\tilde \bk^{(\ell)}_i=\bk^{(\ell)}_i$ otherwise. Then $\Delta a_i=\tilde a_i-a_i=-\frac{1}{\sqrt d}(\bq^{(\ell)}_t)^\top \Delta\bk^{(\ell)}_i$
for $i\le E_\ell(t)$ and $0$ otherwise.
By Lemma~\ref{lem:logitDelta} and~\eqref{eq:cross-layer-update},
\begin{equation}
\begin{aligned}
\|\tilde a-a\|_\infty
{\le}
\frac{\|\bq^{(\ell)}_t\|}{\sqrt d}
\max_{1\le i \le E_\ell(t)} \|\Delta\bk^{(\ell)}_i\|
{\le}
\frac{Q_{\max}}{\sqrt d}\,B\,e^{-\mu(\ell-\ell_s)}
\end{aligned}
\end{equation}
By Lemma~\ref{lem:softmaxLip},
$\|\widetilde A^{(\ell)}-A^{(\ell)}\|_1 \le 2 \|\tilde a-a\|_\infty$,
hence
$\|\widetilde A^{(\ell)}-A^{(\ell)}\|_1
\le 2 \frac{Q_{\max}}{\sqrt d}\,B\,e^{-\mu(\ell-\ell_s)}$.
Divide by $2$ to obtain the claim.
\qedhere
\end{theorem}

On the other hand, if $\mathbb{E}\|\bq^{(\ell)}_t\|\le \bar Q$, then
\[
\mathbb{E}\big[\beta^{\mathrm{ETF}}_\ell\big]
\ \le\
\frac{\bar Q}{\sqrt d}\,B\,e^{-\mu(\ell-\ell_s)}.
\]

\subsection{Joint pre-hoc retained-mass certificate and tuning}

At layer $\ell$, the overall pre-hoc mass budget from PSAW and ETF satisfies
\begin{equation}
\begin{aligned}
\delta^{\mathrm{(PSAW+ETF)}}_\ell
& \le
\underbrace{\delta^{\mathrm{PSAW}}_\ell}_{\text{masking}}
+
\underbrace{\beta^{\mathrm{ETF}}_\ell}_{\text{freezing perturbation}} \\
& \le
(1-\tau^{(\ell)}_{\mathrm{sink}})e^{-\lambda_\ell D_\ell} +
\frac{Q}{\sqrt d}\,B\,e^{-\mu(\ell-\ell_s)},  
\end{aligned}
\end{equation}
where $Q\in\{Q_{\max},\bar Q\}$ depending on worst/average case. And at the top layer $\ell=N$,
$\delta^{\mathrm{PSAW+ETF}}_N
\ \le\
(1-\tau^{(N)}_{\mathrm{sink}})\,e^{-\lambda_N \lfloor \phi^\alpha t\rfloor}
\ +\
\frac{Q}{\sqrt d}\,B\,e^{-\mu(N-\ell_s)}.$

In practice, given targets $\beta^{\mathrm{PSAW}},\beta^{\mathrm{ETF}}\in(0,1)$, it suffices to choose
$(\phi,\alpha)$ and $(\psi,\gamma)$ under average case such that
\[
(1-\tau^{(N)}_{\mathrm{sink}})\,e^{-\underline{\lambda}_N \lfloor \phi^\alpha t\rfloor}
\le \beta^{\mathrm{PSAW}},
\,\,
\frac{\bar Q}{\sqrt d}\,B\,e^{-\mu(N-\ell_s)}
\le \beta^{\mathrm{ETF}}.
\]
Equivalently,
\[
\phi^\alpha
\ge
\frac{1}{\underline{\lambda}_N t}\,\log\!\frac{1-\tau^{(N)}_{\mathrm{sink}}}{\beta^{\mathrm{PSAW}}},
\quad
N-\ell_s
\ge
\frac{1}{\mu}\,\log\!\frac{\bar Q B}{\beta^{\mathrm{ETF}}\sqrt d}.
\]


\vfill

\end{document}